
\documentclass[11pt]{article}

\usepackage{algorithm}
\usepackage{algorithmicx}
\usepackage[noend]{algpseudocode}
\usepackage[colorlinks=true,allcolors=blue,urlcolor={black}]{hyperref}

\usepackage{jinshuomacros}
\usepackage{amsmath}
\usepackage{amssymb,amsmath,amsthm,amsfonts}
\usepackage{thmtools,bm}
\usepackage{mathtools}
\usepackage{mathrsfs}
\usepackage{tikz}
\usepackage{graphicx}
\graphicspath{{figs/}}
\usepackage{caption}
\usepackage{booktabs}
\usepackage[numbers]{natbib}
\usepackage{smartdiagram}
\usesmartdiagramlibrary{additions} 
\usepackage{tikz}
\usetikzlibrary{arrows,quotes}

\newcommand{\PP}{\mathbb{P}}


\newtheorem{theorem}{Theorem}
\newtheorem{othertheorem}{othertheorem}[section]
\newtheorem{lemma}[othertheorem]{Lemma}

\newtheorem{proposition}[othertheorem]{Proposition}

\newtheorem{definition}[othertheorem]{Definition}

\title{Deep Learning with Gaussian Differential Privacy}
\author{Zhiqi Bu\thanks{Graduate Group in Applied Mathematics and Computational Science. Email: {\tt zbu@sas.upenn.edu}.}
	\and Jinshuo Dong\thanks{Graduate Group in Applied Mathematics and Computational Science. Email: {\tt jinshuo@sas.upenn.edu}. }
	\and Qi Long\thanks{Department of Biostatistics, Epidemiology and Informatics. Email: {\tt qlong@pennmedicine.upenn.edu}.}
	\and Weijie J.~Su\thanks{Department of Statistics. Email: {\tt suw@wharton.upenn.edu}.}
	}
\date{}

\begin{document}
\maketitle

{\centering
\vspace*{-0.6cm}
\textit{University of Pennsylvania}\\
\par\bigskip
November 2019; Revised July 2020
\par
}

\begin{abstract}

Deep learning models are often trained on datasets that contain sensitive information such as individuals' shopping transactions, personal contacts, and medical records. An increasingly important line of work therefore has sought to train neural networks subject to privacy constraints that are specified by differential privacy or its divergence-based relaxations. These privacy definitions, however, have weaknesses in handling certain important primitives (composition and subsampling), thereby giving loose or complicated privacy analyses of training neural networks. In this paper, we consider a recently proposed privacy definition termed \textit{$f$-differential privacy} \cite{dong2019gaussian} for a refined privacy analysis of training neural networks. Leveraging the appealing properties of $f$-differential privacy in handling composition and subsampling, this paper derives analytically tractable expressions for the privacy guarantees of both stochastic gradient descent and Adam used in training deep neural networks, without the need of developing sophisticated techniques as~\cite{deep} did. Our results demonstrate that the $f$-differential privacy framework allows for a new privacy analysis that improves on the prior analysis~\cite{deep}, which in turn suggests tuning certain parameters of neural networks for a better prediction accuracy without violating the privacy budget. These theoretically
derived improvements are confirmed by our experiments in a range of tasks in image classification, text classification, and recommender systems. Python code to calculate the privacy cost for these experiments is publicly available in the \texttt{TensorFlow Privacy} library.




\end{abstract}

\section{Introduction}
\label{sec:introduction}


In many applications of machine learning, the datasets contain sensitive information about individuals such as location, personal contacts, media consumption, and medical records. Exploiting the output of the machine learning algorithm, an adversary may be able to identify some individuals in the dataset, thus presenting serious privacy concerns. This reality gave rise to a broad and pressing call for developing privacy-preserving data analysis methodologies. Accordingly, there have been numerous investigations in the scholarly literature of many fields---statistics, cryptography, machine learning, and law---for the protection of privacy in data analysis.

Along this line, research efforts have repeatedly suggested the necessity of a \textit{rigorous} and \textit{versatile} definition of privacy. Among other things, researchers have questioned whether the use of a privacy definition gives interpretable privacy guarantees, and if so, whether this privacy definition allows for high accuracy of the private model among alternative definitions. In particular, anonymization as a syntactic and \textit{ad-hoc} privacy concept has been shown to generally fail to guarantee privacy. Examples include the identification of a homophobic individual in the anonymized Netflix Challenge dataset \cite{netflix} and the identification of the health records of the then Massachusetts governor in public anonymized medical datasets
\cite{sweeney1997weaving}. 

In this context, $(\epsilon, \delta)$-\textit{differential privacy} (DP) arose as a mathematically rigorous definition of privacy~\cite{DMNS06}. Today, this definition has developed into a firm foundation of private data analysis, with its applications deployed by Google \cite{rappor}, Apple \cite{apple}, Microsoft \cite{microsoft}, and the US Census Bureau~\cite{census}. Despite its impressive popularity in both the scholarly literature and the industry, $(\epsilon, \delta)$-DP is not versatile enough to handle \textit{composition}, which is perhaps the most fundamental primitive in statistical privacy. For example, the training process of deep neural networks is in effect the composition of many primitive building blocks known as stochastic gradient descent (SGD). Under a modest privacy budget in the $(\epsilon, \delta)$-DP sense, however, it was not clear how to maintain a high prediction accuracy of deep learning. This requires a tight privacy analysis of composition in the $(\epsilon, \delta)$-DP framework. Indeed, the analysis of the privacy costs in deep learning was refined only recently using a sophisticated technique called the moments accountant \cite{deep}.


Ideally, we hope to have a privacy definition that allows for refined privacy analyses of various algorithms in a principled manner, \textit{without} resorting to sophisticated techniques. Having a refined privacy analysis not only enhances the trustworthiness of the models but can also be leveraged to improve the prediction accuracy by trading off privacy for utility. One possible candidate is \textit{$f$-differential privacy}, a relaxation of $(\epsilon, \delta)$-DP that was recently proposed by Dong, Roth, and Su \cite{dong2019gaussian}. This new privacy definition faithfully retains the hypothesis testing interpretation of differential privacy and can losslessly reason about common primitives associated with differential privacy, including composition, privacy amplification by subsampling, and group privacy. In addition, $f$-DP includes a canonical single-parameter family that is referred to as \textit{Gaussian differential privacy} (GDP). Notably, GDP is the focal privacy
definition due to a central limit theorem that states that the privacy guarantees of the composition of private algorithms are approximately equivalent to telling apart two shifted normal distributions.



The main results of this paper show that $f$-DP offers a rigorous and versatile framework for developing private deep learning methodologies\footnote{It is noteworthy that training deep learning models has served as an important benchmark in testing a privacy definition since the tightness of its privacy analysis crucially depends on whether the definition can tightly account for composition and subsampling.}. Our guarantee provides protection against an attacker with knowledge of the network architecture as well as the model parameters, which is in the same spirit as~\cite{shokri2015privacy,deep}. In short, this paper delivers the following messages concerning $f$-DP:
\begin{enumerate}
\item[]
\textbf{Closed-form privacy bounds.} In the $f$-DP framework, the overall privacy loss incurred in training neural networks admits an amenable closed-form expression. In contrast, the privacy analysis via the moments accountant must be done by numerical computation~\cite{deep}, and the implicit nature of this earlier approach can hinder our understanding of how the tuning parameters affect the privacy bound. This is discussed in Section~\ref{sec:noisy-sgd}.


\item[] \textbf{Stronger privacy guarantees.} The $f$-DP approach gives stronger privacy guarantees than the earlier approach~\cite{deep}, even in terms of $(\epsilon, \delta)$-DP. This improvement is due to the use of the central limit theorem for $f$-DP, which accurately captures the privacy loss incurred at each iteration in training the deep learning models. This is presented in \Cref{sec:conn-with-moments} and illustrated with numerical experiments in \Cref{sec:clt-boosts-perf}.


\item[] \textbf{Improved prediction accuracy.} Leveraging the stronger privacy guarantees provided by $f$-DP, we can trade a certain amount of privacy for an improvement in prediction performance. This can be realized, for example, by appropriately reducing the amount of noise added during the training process of neural networks so as to match the target privacy level in terms of $(\epsilon, \delta)$-DP. See \Cref{sec:conn-with-moments} and \Cref{sec:clt-impr-perf} for the development of this utility improvement.



\end{enumerate}


The remainder of the paper is structured as follows. In Section~\ref{sec:related-work} we provide a brief review of related literature. Section~\ref{sec:preliminaries} introduces $f$-DP and its basic properties at a minimal level. Next, in Section~\ref{sec:implementation} we analyze the privacy cost of training deep neural networks in terms of $f$-DP and compare it to the privacy analysis using the moments accountant. In Section~\ref{sec:results}, we present numerical experiments to showcase the superiority of the $f$-DP approach to private deep learning in terms of test accuracy and privacy guarantees. The paper concludes with a discussion in Section~\ref{sec:discussion}.


\subsection{Related Work}
\label{sec:related-work}

There are continued efforts to understand how privacy degrades under composition. Developments along this line include the basic composition theorem and the advanced composition theorem~\cite{approxdp,boosting}. In a pioneering work, \cite{KOV} obtained an optimal composition theorem for $(\epsilon, \delta)$-DP, which in fact served as one of the motivations for the $f$-DP work \cite{dong2019gaussian}. However, it is \#P hard to compute the privacy bounds from their composition theorem~\cite{complexity}. More recently, \cite{expcomposition} derived sharp composition bounds on the overall privacy loss for exponential mechanisms.

From a different angle, a substantial recent effort has been devoted to relaxing differential privacy using divergences of probability distributions to overcome the weakness of $(\epsilon, \delta)$-DP in handling composition \cite{concentrated,concentrated2,renyi,tcdp}. Unfortunately, these relaxations either lack a privacy amplification by subsampling argument or present a quite complex argument that is difficult to use \cite{balle2018privacy,wang2018subsampled}. As subsampling is inherently used in training neural networks, therefore, it is difficult to directly apply these relaxations to the privacy analysis of deep learning.

To circumvent these technical difficulties associated with $(\epsilon, \delta)$-DP and its divergence-based relaxations, Abadi et al.~\cite{deep} invented a technique termed the moments accountant to track detailed information of the privacy loss in the training process of deep neural networks. Using the moments accountant, their analysis significantly improves on earlier privacy analysis of SGD \cite{chaudhuri2011differentially,song2013stochastic,bassily2014private,shokri2015privacy,zhang2017efficient} and allows for meaningful privacy guarantees for deep learning trained on realistically sized datasets. This technique has been extended to a variety of situations by follow-up work \cite{mcmahan2018general,pichapati2019adaclip}. In contrast, our approach to private deep learning in the $f$-DP framework leverages some powerful tools of this new privacy definition, nevertheless providing a sharper privacy analysis, as seen both theoretically and empirically in Sections~\ref{sec:implementation} and \ref{sec:results}.

For completeness, we remark that different approaches have been put forward to incorporate privacy considerations into deep learning, without leveraging the iterative and subsampling natures of training deep learning models. This line of work includes training a private model by an ensemble of ``teacher'' models~\cite{papernot2018scalable,papernot2016semi}, the development of noised federated averaging algorithms~\cite{mcmahan2017learning}, and analyzing privacy costs through the lens of the optimization landscape of neural networks~\cite{xiang2019differentially}.




\newcommand{\Sample}{\mathtt{Sample}}

\section{Preliminaries}
\label{sec:preliminaries}

\subsection{$f$-Differential Privacy} 
\label{sub:differential_privacy}

In the differential privacy framework, we envision an adversary that is well-informed about the dataset except for a single individual, and the adversary seeks to determine whether this individual is in the dataset on the basis of the output of an algorithm. Roughly speaking, the algorithm is considered private if the adversary finds it hard to determine the presence or absence of any individual.

Informally, a dataset can be thought of as a matrix, whose rows each contain one individual's data. Two datasets are said to be \textit{neighbors} if one can be derived by discarding an individual from the other. As such, the sizes of neighboring datasets differ by one\footnote{Alternatively, the neighboring relationship can be defined for datasets of the same size and differing by one individual.}. Let $S$ and $S'$ be neighboring datasets, and $\epsilon \ge 0, 0 \le \delta \le 1$ be two numbers, and denote by $M$ a (randomized) algorithm that takes as input a dataset. 


\begin{definition}[\cite{DMNS06,approxdp}] \label{def:eps_delta_DP}
A (randomized) algorithm	$M$ gives $(\ep,\delta)$-differential privacy if for any pair of neighboring datasets $S,S'$ and any event $E$,
	\begin{align*}
		\PP(M(S)\in E)\leqslant \e^\ep \PP(M(S')\in E) +\delta.
	\end{align*}
\end{definition}


To achieve privacy, the algorithm $M$ is necessarily randomized, whereas the two datasets in \Cref{def:eps_delta_DP} are \textit{deterministic}. This privacy definition ensures that, based on the output of the algorithm, the adversary has a limited (depending on how small $\epsilon, \delta$ are) ability to identify the presence or absence of any individual, regardless of whether any individual opts in to or opts out of the dataset.


In essence, the adversary seeks to tell apart the two probability distributions $M(S)$ and $M(S')$ using a single draw. In light of this observation, it is natural to interpret what the adversary does as testing two simple hypotheses:
\[
H_0: \text{the true dataset is } S \quad\text{ versus }\quad H_1: \text{the true dataset is } S'.
\]
The connection between differential privacy and hypothesis testing was, to our knowledge, first noted in \cite{wasserman_zhou}, and was later developed in \cite{KOV,liu2019investigating,balle2019hypothesis}. Intuitively, privacy is well guaranteed if the hypothesis testing problem is \textit{hard}. Following this intuition, the definition of $(\epsilon, \delta)$-DP essentially uses the \textit{worst-case} likelihood ratio of the distributions $M(S)$ and $M(S')$ to measure the hardness of testing the two simple hypotheses.



Is there a more \textit{informative} measure of the hardness? In~\cite{dong2019gaussian}, the authors propose to use the \textit{trade-off} between type I error (the probability of erroneously rejecting $H_0$ when $H_0$ is true) and type II error (the probability of erroneously accepting $H_0$ when $H_1$ is true) in place of a few privacy parameters in $(\epsilon, \delta)$-DP  or divergence-based DP definitions. To formally define this new privacy definition, let $P$ and $Q$ denote the distributions of $M(S)$ and $M(S')$, respectively, and let $\phi$ be any (possibly randomized) rejection rule for testing $H_0: P$ against $H_1: Q$. With these in place, \cite{dong2019gaussian} defines the \textit{trade-off function} of $P$ and $Q$ as
\begin{align*}
 T(P,Q):[0,1] & \mapsto [0,1]\\
                \alpha & \mapsto \inf_{\phi}\{1 - \E_Q[\phi]: \E_P[\phi]\leqslant \alpha\}.
\end{align*}
Above, $\E_P[\phi]$ and $1-\E_Q[\phi]$ are type I and type II errors of the rejection rule $\phi$, respectively. Writing $f = T(P,Q)$, the definition says that $f(\alpha)$ is the minimum type II error among all tests at significance level $\alpha$. Note that the minimum can be achieved by taking the likelihood ratio test, according to the Neymann--Pearson lemma. As is self-evident, the larger the trade-off function is, the more difficult the hypothesis testing problem is (hence more privacy). This motivates the following privacy definition.
\begin{definition}[\cite{dong2019gaussian}] \label{def:fDP}
A (randomized) algorithm $M$ is $f$-differentially private if
\[
T\big(M(S), M(S')\big) \ge f
\]
for all neighboring datasets $S$ and $S'$.
\end{definition}
In this definition, both $T$ and $f$ are functions that take $\alpha\in[0,1]$ as input and the inequality holds pointwise for all $0 \le \alpha \le 1$, and we abuse notation by identifying $M(S)$ and $M(S')$ with their associated distributions. This privacy definition is easily interpretable due to its inherent connection with the hypothesis testing problem. By adapting a result due to Wasserman and Zhou~\cite{wasserman_zhou}, $(\epsilon, \delta)$-DP is a special instance of $f$-DP in the sense that an algorithm is $(\epsilon, \delta)$-DP if and only if it is $f_{\epsilon, \delta}$-DP with
\begin{equation}\label{eq:e_d_f}
f_{\ep, \delta}(\alpha) = \max\left\{ 0,1 - \delta - \e^\ep \alpha, \e^{-\ep}(1-\delta-\alpha) \right\}.
\end{equation}
The more intimate relationship between the two privacy definitions is that they are \textit{dual} to each other: briefly speaking, $f$-DP ensures $(\epsilon, \delta(\epsilon))$-DP with $\delta(\epsilon) = 1 + f^*(-\e^{\epsilon})$ for every $\epsilon \ge 0$\footnote{Here, $f^*$ is the convex conjugate, which is defined as $f^*(x) = \sup_{\alpha} \alpha x - f(\alpha)$.}.


Next, we define a single-parameter family of privacy definitions within the $f$-DP class for a reason that will be apparent later. Let $G_\mu:= T\big(\N(0,1), \N(\mu,1)\big)$ for $\mu \ge 0$. Note that this trade-off function admits a closed-form expression $G_\mu(\alpha)=\Phi(\Phi^{-1}(1-\alpha)-\mu)$, where $\Phi$ is the cumulative distribution function of the standard normal distribution.
\begin{definition}[\cite{dong2019gaussian}] \label{def:GDP}
A (randomized) algorithm $M$ is $\mu$-Gaussian differentially private (GDP) if
\[
T\big(M(S), M(S')\big) \ge G_\mu
\]
for all neighboring datasets $S$ and $S'$.
\end{definition}


In words, $\mu$-GDP says that determining whether any individual is in the dataset is at least as difficult as telling apart the two normal distributions $\N(0, 1)$ and $\N(\mu, 1)$ based on one draw. The Gaussian mechanism serves as a template to achieve GDP. Consider the problem of privately releasing a univariate statistic $\theta(S)$. The Gaussian mechanism adds $\N(0, \sigma^2)$ noise to the statistic $\theta$, which gives $\mu$-GDP if $\sigma = \mathrm{sens}(\theta)/\mu$. Here the \textit{sensitivity} of $\theta$ is defined as $\mathrm{sens}(\theta) = \sup_{S, S'} | \theta(S) - \theta(S') |$, where the supremum is over all neighboring datasets.




\subsection{Properties of $f$-Differential Privacy} 
\label{sub:composition}


\paragraph{Composition.} Deep learning models are trained using the \textit{composition} of many SGD updates. Broadly speaking, composition is concerned with a sequence of analyses on the \textit{same} dataset where each analysis is informed by the explorations of prior analyses. A central question that every privacy definition is faced with is to pinpoint how the overall privacy guarantee degrades under composition. Formally, letting $M_1$ be the first algorithm and $M_2$ be the second, we define their composition algorithm $M$ as $M(S) = (M_1(S), M_2(S, M_1(S)))$. Roughly speaking, the composition is to ``release all information that is learned by the algorithms.'' Notably, the second algorithm $M_2$ can take as input the output of $M_1$ in addition to the dataset $S$. In general, the composition of more than two algorithms follows recursively.

To introduce the composition theorem for $f$-DP, \cite{dong2019gaussian} defines a binary operation $\otimes$ on trade-off functions. Given trade-off functions $f = T(P, Q)$ and $g = T(P', Q')$, let $f\otimes g = T(P\times P', Q\times Q')$. This definition depends on the distributions $P,Q,P',Q'$ only through $f$ and $g$. Moreover, $\otimes$ is commutative and associative. Now the composition theorem can be readily stated as follows. Let $M_t$ be $f_t$-DP \textit{conditionally} on any output of the prior algorithms for $t = 1, \ldots, T$. Then their $T$-fold composition algorithm is $f_1\otimes\cdots\otimes f_T$-DP. This result shows that the composition of algorithms in the $f$-DP framework is reduced to performing the $\otimes$ operation on the associated trade-off functions. As an important fact, the privacy bound $f_1\otimes\cdots\otimes f_T$ in general cannot be improved. Put more precisely, one can find an $f_t$-DP
mechanism $M_t$ for $t = 1, \ldots, T$ such that their composition is precisely $f_1\otimes\cdots\otimes f_T$-DP (see the discussion following Theorem 4 in \cite{dong2019gaussian}).



More profoundly, a \textit{central limit theorem} phenomenon arises in the composition of many ``very private'' $f$-DP algorithms in the following sense: the trade-off functions of small privacy leakage accumulate to $G_{\mu}$ for some $\mu$ under composition. Informally, assuming each $f_t$ is very close to $\Id(\alpha) = 1 - \alpha$, which corresponds to perfect privacy, then we have \begin{tcolorbox}[boxrule=0.5pt]
\begin{equation}\label{eq:clt}
f_1\otimes f_2 \otimes \cdots \otimes f_T \text{ is approximately } G_\mu
\end{equation}
\end{tcolorbox}
\noindent if the number of iterations $T$ is sufficiently large\footnote{If $f_i = G_{\mu_i}$ for $i = 1, \ldots, T$, then the $T$-fold composition is exactly $G_{\mu}$ with $\mu = \sqrt{\sum_{i=1}^T \mu_i^2}$.}. This central limit theorem approximation is especially suitable for the privacy analysis of deep learning, where the training process typically takes at least tens of thousands of iterations. The privacy parameter $\mu$ depends on some functionals such as the Kullback--Leibler divergence of the trade-off functions.  The central limit theorem yields a very accurate approximation in the settings considered in \Cref{sec:results} (see numerical confirmation in \Cref{sec:numerical}). For a rigorous account of this central limit theorem for differential privacy, see Theorem 6 in \cite{dong2019gaussian}. We remark that a conceptually related article \cite{sommer2018privacy} developed a central limit theorem for privacy loss random variables.

At a high level, this convergence-to-GDP result brings GDP to the focal point of the family of $f$-DP guarantees, implying that GDP is to $f$-DP as normal random variables to general random variables. Furthermore, this result serves as an effective approximation tool for approximating the privacy guarantees of composition algorithms. In contrast, privacy loss cannot be losslessly tracked under composition in the $(\epsilon, \delta)$-DP framework.

\paragraph{Subsampling.} In training neural networks, the gradient at each iteration is computed from a mini-batch that is \textit{subsampled} from the training examples. Intuitively, an algorithm applied to a subsample gives stronger privacy guarantees than applied to the full sample. Looking closely, this privacy amplification is due to the fact that an individual enjoys perfect privacy if not selected in the subsample. A concrete and pressing question is, therefore, to precisely characterize how much privacy is amplified by subsampling in the $f$-DP framework. 


Consider the following sampling scheme: for each individual in the dataset $S$, include his or her datum in the subsample independently with probability $p$, which is sometimes referred to as the Poisson subsampling \cite{wang2018subsampled}. The resulting subsample is denoted by $\Sample_p(S)$. For the purpose of clearing up any confusion, we remark that the subsample $\Sample_p(S)$ has a random size and as an intermediate step is not released. Given any algorithm $M$, denote by $M \circ \Sample_p$ the subsampled algorithm. 

The subsampling theorem for $f$-DP states as follows. Let $M$ be $f$-DP, write $f_p$ for $pf+(1-p)\Id$, and denote by $f_p^{-1}$ the inverse\footnote{For any trade-off function $f = T(P, Q)$, its inverse $f^{-1} = T(Q, P)$.} of $f_p$. It is proved in \Cref{sec:numerical} that the subsampled algorithm $M \circ \Sample_p$ satisfies
\begin{equation}\label{eq:subsample}
	T\big(M\circ\Sample_p(S),M\circ\Sample_p(S')\big)\geqslant f_p
\end{equation}
if $S$ can be obtained by removing one individual from $S'$. Likewise, 
\[
T\big(M\circ\Sample_p(S'),M\circ\Sample_p(S)\big)\geqslant f_p^{-1}.
\]
As such, the two displays above say that the trade-off function of $M\circ\Sample_p$ on any neighboring datasets is lower bounded by $\min\{f_p, f_p^{-1}\}$, which however is in general non-convex and thus is not a trade-off function. This suggests that we can boost the privacy bound by replacing $\min\{f_p, f_p^{-1}\}$ with its double conjugate $\min\{f_p, f_p^{-1}\}^{**}$, which is the greatest convex lower bound of $\min\{f_p, f_p^{-1}\}$ and is indeed a trade-off function. Taken together, all the pieces show that the subsampled algorithm $M\circ\Sample_p$ is $\min\{f_p, f_p^{-1}\}^{**}$-DP. 

Notably, the privacy bound $\min\{f_p, f_p^{-1}\}^{**}$ is larger than $f$ and cannot be improved in general. In light of the above, the $f$-DP framework is flexible enough to nicely handle the analysis of privacy amplification by subsampling. In the case where the original algorithm $M$ is $(\epsilon, \delta)$-DP, this privacy bound strictly improves on the subsampling theorem for $(\epsilon, \delta)$-DP~\cite{li2012sampling}.







\newcommand{\tm}{\widetilde{M}}

\section{Algorithms and Their Privacy Analyses}
\label{sec:implementation}

\subsection{NoisySGD and NoisyAdam}
\label{sec:noisy-sgd}



SGD and Adam~\cite{kingma2014adam} are among the most popular optimizers in deep learning. Here we introduce a new privacy analysis of a private variant of SGD in the $f$-DP framework and then extend the study to a private version of Adam.


Letting $S = \{x_1, \ldots, x_n\}$ denote the dataset, we consider minimizing the empirical risk
\[
L(\theta) = \frac1n \sum_{i=1}^n \ell(\theta, x_i),
\]
where $\theta$ denotes the weights of the neural networks and $\ell(\theta, x_i)$ is a loss function. At iteration $t$, a mini-batch $I_t$ is selected from $\{1, 2, \ldots, n\}$ with subsampling probability $p$, thereby having an approximate size of $pn$. Taking learning rate $\eta_t$ and initial weights $\theta_0$, the vanilla SGD updates the weights according to
\[
\theta_{t+1} = \theta_t - \eta_t \cdot \frac1{|I_t|} \sum_{i \in I_t} \nabla_{\theta} \ell(\theta_t, x_i).
\]
To preserve privacy, \cite{chaudhuri2011differentially,song2013stochastic,bassily2014private,deep} introduce two modifications to the vanilla SGD. First, a clip step is applied to the gradient so that the gradient is in effect bounded. This step is necessary to have a finite sensitivity. The second modification is to add Gaussian noise to the clipped gradient, which is equivalent to applying the Gaussian mechanism to the updated iterates. Formally, the private SGD algorithm is described in \Cref{alg:dpsgd1}. Herein $I$ is the identity matrix and $\|\cdot\|_2$ denotes the $\ell_2$ norm. Formally, we present this in \Cref{alg:dpsgd1}, which we henceforth refer to as \texttt{NoisySGD}. As a contribution of the present paper, \texttt{NoisySGD} uses the Poisson subsampling, as opposed to the uniform sampling used in \cite{dong2019gaussian}. For completeness, we remark that there is another possible subsampling method: shuffling (randomly permuting and dividing data into folds at each epoch). We emphasize that different subsampling mechanisms produce different privacy guarantees.


\begin{algorithm}[!htb]
	\caption{\texttt{NoisySGD}}\label{alg:dpsgd1}
	\begin{algorithmic}[0]
		\State{\bf Input:} Dataset $S = \{x_1,\ldots,x_n\}$, loss function $\ell(\theta, x)$.
		\Statex \hspace{1.35cm}Parameters: initial weights $\theta_0$, learning rate $\eta_t$, subsampling probability $p$, number of
		\Statex \hspace{3.55cm}iterations $T$, noise scale $\sigma$, gradient norm bound $R$.
		\For{$t = 0, \ldots, T-1$}
		\Statex {\hspace{0.55cm}Take a Poisson subsample $I_t\subseteq \{1, \ldots, n\}$ with subsampling probability $p$}
		\For{$i\in I_t$}
		\Statex \hspace{1.1cm} {$v_t^{(i)} \gets \nabla_{\theta} \ell(\theta_t, x_i)$}		
		\Statex \hspace{1.1cm} {$\bar{v}_t^{(i)} \gets v_t^{(i)} / \max\big\{1, \|v_t^{(i)}\|_2/R\big\}$}
                \Comment{Clip gradient}
		\EndFor
		\Statex \hspace{0.55cm} {$\theta_{t+1} \gets \theta_{t} - \eta_t \cdot \frac{1}{|I_t|} \Big(\sum_{i\in I_t}\bar{v}_t^{(i)}+\sigma R \cdot \mathcal{N}(0, I)\Big)$}
       \Comment{Apply Gaussian mechanism}
		
		\EndFor
		\State {\bf Output} $\theta_T$
	\end{algorithmic}
\end{algorithm}

The analysis of the overall privacy guarantee of \texttt{NoisySGD} makes heavy use of the compositional and subsampling properties of $f$-DP. We first focus on the privacy analysis of the step that computes $\theta_{t+1}$ from $\theta_t$. Let $M$ denote the gradient update and write $\Sample_p(S)$ for the mini-batch $I_t$ (we drop the subscript $t$ for simplicity). This allows us to use $M \circ \Sample_p(S)$ to represent what \texttt{NoisySGD} does at each iteration. Next, note that adding or removing one individual would change the value of $\sum_{i\in I_t}\bar{v}_t^{(i)}$ by at most $R$ in the $\ell_2$ norm due to the clipping operation, that is, $\sum_{i\in I_t}\bar{v}_t^{(i)}$ has sensitivity $R$. Consequently, the Gaussian mechanism with noise standard deviation $\sigma R$ ensures that $M$ is $\frac1\sigma$-GDP. With a few additional arguments, in \Cref{sec:omitt-deta-crefs} we show that \texttt{NoisySGD} is $\min\{f,f^{-1}\}^{**}$-DP with $f = \big(pG_{1/\sigma}+(1-p)\Id\big)^{\otimes T}$. As a remark, it has been empirically observed in \cite{deep,bagdasaryan2019differential} that the performance of the private neural networks is not sensitive to the
choice of the clipping norm bound $R$ (see more discussion in \cite{pichapati2019adaclip,xu2019adaptive}).

To facilitate the use of this privacy bound, we now derive an analytically tractable approximation of $\min\{f,f^{-1}\}^{**}$ using the privacy central limit theorem in a certain asymptotic regime, which further demonstrates the mathematical coherence and versatility of the $f$-DP framework. The central limit theorem shows that, in the asymptotic regime where $p \sqrt{T}\to \nu$ for a constant $\nu > 0$ as $T \to \infty$,
\[
f = \big(pG_{1/\sigma}+(1-p)\Id\big)^{\otimes T}\to G_{\mu},
\]
where $\mu = \nu\sqrt{\e^{1/\sigma^2}-1}$. Thus, the overall privacy loss in the form of the double conjugate satisfies 
\begin{equation}\label{eq:g_double}
\min\{f,f^{-1}\}^{**} \approx \min\{G_{\mu}, G_{\mu}^{-1}\}^{**} = G_{\mu}^{**} = G_{\mu}.
\end{equation}
As such, the central limit theorem demonstrates that \texttt{NoisySGD} is approximately $p\sqrt{T(\e^{1/\sigma^2}-1)}$-GDP. Denoting by $B = pn$ the mini-batch size, the privacy parameter $p\sqrt{T(\e^{1/\sigma^2}-1)}$ equals $\frac{B}{n}\sqrt{T(\e^{1/\sigma^2}-1)}$. Intuitively, this reveals that \texttt{NoisySGD} gives good privacy guarantees if $B\sqrt{T}/n$ is small and $\sigma$ is not too small.


As an aside, we remark that this new privacy analysis is different from the one performed in Section 5 of \cite{dong2019gaussian}. Therein, the authors consider Algorithm~\ref{alg:dpsgd1} with uniform subsampling and obtain a privacy bound that is different from the one in the present paper.

Next, we present a private version of Adam \cite{kingma2014adam} in \Cref{alg:dpsgd2}, which we refer to as \texttt{NoisyAdam} and can be found in \cite{privacygithub}. This algorithm has the same privacy bound as \texttt{NoisySGD} in the $f$-DP framework. In short, this is because the momentum $m_t$ and $u_t$ are deterministic functions of the noisy gradients and no additional privacy cost is incurred due to the post-processing property of differential privacy. In passing, we remark that the same argument applies to AdaGrad \cite{duchi2011adaptive} and therefore it is also asymptotically GDP in the same asymptotic regime.


\begin{algorithm}[htb]
	\caption{\texttt{NoisyAdam}}\label{alg:dpsgd2}
	\begin{algorithmic}[0]
		\State{\bf Input:} Dataset $S = \{x_1,\ldots,x_n\}$, loss function $\ell(\theta, x)$.
		\Statex \hspace{1.35cm}Parameters: initial weights $\theta_0$, learning rate $\eta_t$, subsampling probability $p$, number of
		\Statex \hspace{3.55cm}iterations $T$, noise scale $\sigma$, gradient norm bound $R$, 
		\Statex \hspace{3.55cm}momentum parameters $(\beta_1,\beta_2)$, initial momentum $m_0$, 
		\Statex \hspace{3.55cm}initial past squared gradient $u_0$, and a small constant $\xi > 0$.
		\For{$t = 0, \ldots, T-1$}
		\Statex {\hspace{0.55cm}Take a Poisson subsample $I_t\subseteq \{1, \ldots, n\}$ with subsample probability $p$}
		\For{$i\in I_t$}
		\Statex {\hspace{1.1cm} $v_t^{(i)} \gets \nabla_{\theta} \ell(\theta_t, x_i)$}
		\Statex {\hspace{1.1cm} $\bar{v}_t^{(i)} \gets v_t^{(i)}/\max\big\{1,\|v_t^{(i)}\|_2/R\big\}$}
                \Comment{Clip gradient}
		\EndFor
		\Statex {\hspace{0.55cm} $\tilde{v}_t\gets\frac{1}{|I_t|}\left(\sum_{i\in I_t}\bar{v}_t^{(i)}+\sigma R\cdot \mathcal{N}(0,I)\right)$}
                \Comment{Apply Gaussian mechanism}
		\Statex {\hspace{0.55cm} $m_t\gets\beta_1 m_{t-1}+(1-\beta_1) \tilde{v}_t$}
		\Statex {\hspace{0.55cm} $u_t\gets\beta_2 u_{t-1}+(1-\beta_2) \big(\tilde{v}_t\odot\tilde{v}_t\big)$}
 \Comment{$\odot$ is the Hadamard product}
		\Statex {\hspace{0.55cm} $w_t \gets m_t/\left(\sqrt{u_t}+\xi\right)$}
                \Comment{Component-wise division}
		\Statex {\hspace{0.55cm}$\theta_{t+1} \gets \theta_{t} - \eta_t w_t$}
		\EndFor
		\State {\bf Output} $\theta_T$
	\end{algorithmic}
\end{algorithm}



\subsection{Comparisons with the Moments Accountant}
\label{sec:conn-with-moments}

It is instructive to compare the moments accountant with our privacy analysis performed in Section~\ref{sec:noisy-sgd} using the $f$-DP framework. Developed in~\cite{deep}, the moments accountant gives a tight one-to-one mapping between $\epsilon$ and $\delta$ for specifying the overall privacy loss in terms of $(\epsilon, \delta)$-DP under composition, which is beyond the reach of the advanced composition theorem~\cite{boosting}. In slightly more detail, the moments accountant uses the moment generating function of the privacy loss random variable to track the privacy loss under composition. As abuse of notation, this paper uses functions $\delta_{\mathtt{MA}} = \delta_{\mathtt{MA}}(\epsilon)$ and $\epsilon_{\mathtt{MA}} =\epsilon_{\mathtt{MA}}(\delta)$ to denote the mapping induced by the moments accountant in both directions\footnote{We omit the dependence of the functions on the specification of the composition algorithm such as $p, \sigma, T$ as in \texttt{NoisySGD} and \texttt{NoisyAdam}.}. For self-containedness, the appendix includes a formal description of the two functions.



Although \texttt{NoisySGD} and \texttt{NoisyAdam} are our primary focus, our following discussion applies to general iterative algorithms where composition must be addressed in the privacy analysis. Let algorithm $M_t$ be $f_t$-DP for $t = 1, \ldots, T$ and write $M$ for their composition. On the one hand, the moments accountant technique ensures that $M$ is $(\epsilon, \delta_{\mathtt{MA}}(\epsilon))$-DP for any $\epsilon$ or, put equivalently, is $(\epsilon_{\mathtt{MA}}, \delta)$-DP\footnote{The moments accountant can be applied in the $f$-DP framework. In fact, the moments accountant is defined via a certain moment generating function, which is equivalent to the R\'enyi divergence. The R\'enyi divergence can be uniquely deduced from a trade-off function. See Section 2.3 in
  \cite{dong2019gaussian}.}. On the other hand, the composition algorithm is $f_1 \otimes \cdots \otimes f_T$-DP from the $f$-DP viewpoint and, following from the central limit theorem \eqref{eq:clt}, this composition can be shown to
be approximately GDP in a certain asymptotic regime. For example, both \texttt{NoisySGD} and \texttt{NoisyAdam} presented in \Cref{alg:dpsgd1} and \Cref{alg:dpsgd2}, respectively,
asymptotically satisfy $\mu_{\mathtt{CLT}}$-GDP with privacy parameter 
\begin{equation}\label{eq:clt_analytical}
\mu_{\mathtt{CLT}} = p\sqrt{T(\e^{1/\sigma^2} - 1)}.
\end{equation}


In light of the above, it is tempting to ask which of the two approaches yields a sharper privacy analysis. In terms of $f$-DP guarantees, it must be the latter, which we refer to as the \texttt{CLT} approach, because the composition theorem of $f$-DP is tight and, more importantly, the privacy central limit theorem is asymptotic exact. To formally state the result, note that the moments accountant asserts that the private optimizer is $(\epsilon, \delta_{\mathtt{MA}}(\epsilon))$-DP for all $\epsilon \ge 0$, which is equivalent to $\sup_{\epsilon \ge 0} f_{\epsilon, \delta_{\mathtt{MA}}(\epsilon)}$-DP by recognizing \eqref{eq:e_d_f} (see also Proposition 2.11 in \cite{dong2019gaussian}). Roughly speaking, the
following theorem says that $\sup_{\epsilon \ge 0} f_{\epsilon, \delta_{\mathtt{MA}}(\epsilon)}$-DP (asymptotically) promises no more privacy guarantees than the bound of $\mu_{\mathtt{CLT}}$-GDP given by the \texttt{CLT} approach. This simple result is summarized by the following theorem and see \Cref{sec:omitt-deta-crefs} for a formal proof of this result.
\begin{theorem}[Comparison in $f$-DP]\label{thm:f_compare}
Assume that $p\sqrt{T}$ converges to a positive constant as $T \to \infty$. Then, both $\mathtt{NoisySGD}$ and $\mathtt{NoisyAdam}$ satisfy
\[
\limsup_{T \to \infty} \left( \sup_{\epsilon \ge 0} f_{\epsilon, \delta_{\mathtt{MA}}(\epsilon)}(\alpha) - G_{\mu_{\mathtt{CLT}}}(\alpha) \right) \le 0
\]
for every $0 \le \alpha \le 1$.
\end{theorem}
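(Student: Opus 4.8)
The plan is to squeeze the moments-accountant trade-off function $h_T:=\sup_{\epsilon\ge 0}f_{\epsilon,\delta_{\mathtt{MA}}(\epsilon)}$ between the tight $f$-DP profile of the algorithm and the central-limit limit $G_{\mu_\infty}$, where $\mu_\infty:=\lim_{T\to\infty}\mu_{\mathtt{CLT}}$, and then let $T\to\infty$. Recall from \Cref{sec:noisy-sgd} that \texttt{NoisySGD} is $g_T$-DP with $g_T=\min\{f,f^{-1}\}^{**}$ and $f=\bigl(pG_{1/\sigma}+(1-p)\Id\bigr)^{\otimes T}$, and that \texttt{NoisyAdam} inherits this guarantee by post-processing; since both $\min$ and the double conjugate can only decrease a function, $g_T\le f$ pointwise on $[0,1]$. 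On the other hand, the moments accountant certifies that the \emph{same} algorithm is $(\epsilon,\delta_{\mathtt{MA}}(\epsilon))$-DP for all $\epsilon\ge 0$, equivalently $h_T$-DP. The crucial first step will be to establish the pointwise inequality $h_T\le g_T$: both $h_T$ and $g_T$ are valid $f$-DP guarantees of the algorithm, but the $f$-DP analysis loses nothing in composition (the $\otimes$-bound cannot be improved in general), whereas the moments accountant replaces each per-iteration trade-off function by its R\'enyi divergence — which, per Section 2.3 of \cite{dong2019gaussian}, is determined by that trade-off function — composes these divergences additively (mirroring $\otimes$ at the level of moments), and then passes back to $(\epsilon,\delta)$ through a Chernoff bound that is strictly lossy relative to the exact identity $\delta(\epsilon)=1+g_T^*(-\e^\epsilon)$. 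Hence $\delta_{\mathtt{MA}}(\epsilon)\ge 1+g_T^*(-\e^\epsilon)$ for every $\epsilon$, which by the $(\epsilon,\delta)$-to-$f$-DP dictionary \eqref{eq:e_d_f} gives $h_T\le g_T\le f$.

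Granting $h_T\le f$, the conclusion follows immediately. By the central limit theorem for $f$-DP (Theorem 6 of \cite{dong2019gaussian}), in the regime $p\sqrt T\to\nu>0$ one has $f=\bigl(pG_{1/\sigma}+(1-p)\Id\bigr)^{\otimes T}\to G_{\mu_\infty}$ pointwise on $[0,1]$ with $\mu_\infty=\nu\sqrt{\e^{1/\sigma^2}-1}$ — this is exactly the asymptotics underlying \eqref{eq:clt_analytical}. Moreover $\mu_{\mathtt{CLT}}=p\sqrt{T(\e^{1/\sigma^2}-1)}=\bigl(p\sqrt T\bigr)\sqrt{\e^{1/\sigma^2}-1}\to\mu_\infty$, and since $\mu\mapsto G_\mu(\alpha)=\Phi\bigl(\Phi^{-1}(1-\alpha)-\mu\bigr)$ is continuous for each fixed $\alpha$, also $G_{\mu_{\mathtt{CLT}}}(\alpha)\to G_{\mu_\infty}(\alpha)$. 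Therefore, for every fixed $\alpha\in[0,1]$,
\begin{align*}
\limsup_{T\to\infty}\Bigl(\sup_{\epsilon\ge 0}f_{\epsilon,\delta_{\mathtt{MA}}(\epsilon)}(\alpha)-G_{\mu_{\mathtt{CLT}}}(\alpha)\Bigr)
&=\limsup_{T\to\infty}\bigl(h_T(\alpha)-G_{\mu_{\mathtt{CLT}}}(\alpha)\bigr)\\
&\le\limsup_{T\to\infty}\bigl(f(\alpha)-G_{\mu_{\mathtt{CLT}}}(\alpha)\bigr)=G_{\mu_\infty}(\alpha)-G_{\mu_\infty}(\alpha)=0,
\end{align*}
which is the claimed statement; the same argument applies verbatim to \texttt{NoisyAdam}.

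The step I expect to be the main obstacle is making $h_T\le g_T$ fully rigorous — i.e., certifying that the moments accountant cannot beat the $f$-DP analysis of this specific algorithm. A subtlety is that the moments accountant actually uses the \emph{exact} per-iteration moments of the subsampled Gaussian, which are sharper than what the lower bound $pG_{1/\sigma}+(1-p)\Id$ provides, so I would instead run the whole argument with the exact trade-off function $\tilde f$ of one subsampled Gaussian step in place of $pG_{1/\sigma}+(1-p)\Id$: then $\min\{\tilde f^{\otimes T},(\tilde f^{\otimes T})^{-1}\}^{**}$ is \emph{literally} the tight $f$-DP profile of the composition, so $h_T$ lies below it, and it only remains to verify that the privacy central limit theorem applied to $\tilde f$ still produces the same parameter $\mu_\infty=\nu\sqrt{\e^{1/\sigma^2}-1}$. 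This holds because the relevant central-limit functional of $\tilde f$ agrees with that of $pG_{1/\sigma}+(1-p)\Id$ to leading order in $p$ — both reduce to $p^2(\e^{1/\sigma^2}-1)$, which up to the sampling rate is the $\chi^2$-divergence of $\N(1,\sigma^2)$ from $\N(0,\sigma^2)$ — so the higher-order slack between the lower bound and the truth washes out after $T\sim\nu^2/p^2$ compositions; a two-sided sandwich $pG_{1/\sigma}+(1-p)\Id\le\tilde f\le\hat f$ against a convenient upper bound $\hat f$ with $\hat f^{\otimes T}\to G_{\mu_\infty}$ formalizes this. The remaining ingredients — $g^{**}\le g$, continuity of $\mu\mapsto G_\mu(\alpha)$, and passing $\limsup_T$ through a pointwise bound — are routine.
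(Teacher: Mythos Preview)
Your overall architecture---bound the moments-accountant envelope $h_T$ by the composed trade-off function $f_T=(pG_{1/\sigma}+(1-p)\Id)^{\otimes T}$ and then invoke the CLT---is sound, and the paper's argument ultimately rests on the same inequality. But the paper organizes things differently: it \emph{first} proves \Cref{thm:better} (the quantitative $(\epsilon,\delta)$ comparison), and then obtains \Cref{thm:f_compare} as an immediate corollary. Concretely, the paper shows for every finite $T$ that
\[
\delta_{\mathtt{MA}}(\epsilon)\;>\;1+f_T^*(-\e^{\epsilon})+\e^{\epsilon}x_T,\qquad x_T:=(f_T')^{-1}(-\e^{\epsilon}),
\]
via the chain $\delta_{\mathtt{MA}}(\epsilon)=\inf_{\lambda}\e^{-\lambda\epsilon}\e^{T\alpha_{\mathrm{GM}}(\lambda)}\ge\inf_{\lambda}\int_0^1|f_T'|^{\lambda+1}\e^{-\lambda\epsilon}\ge\int_0^{x_T}|f_T'|=1-f_T(x_T)$. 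Dropping the extra $\e^{\epsilon}x_T$ term gives exactly the inequality $\delta_{\mathtt{MA}}(\epsilon)\ge 1+f_T^*(-\e^{\epsilon})$ that your plan needs; the paper's route simply keeps that term to get the strict gap in \Cref{thm:better} and then reads off \Cref{thm:f_compare}. So the two approaches share the same technical core; yours is a repackaging that targets \Cref{thm:f_compare} directly.

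Where your write-up goes astray is in the justification you give for $h_T\le g_T$. The claim ``the $\otimes$-bound cannot be improved in general'' only says that \emph{some} mechanisms saturate the composition bound, not that \texttt{NoisySGD} does; so $g_T$ is not the tight profile of this particular algorithm, and ``both $h_T$ and $g_T$ are valid guarantees'' does not order them. The correct observation is structural rather than variational: both the moments accountant and the $f$-DP analysis are computed from the \emph{same} per-step pair of distributions $P=\mathcal N(0,1)$ and $\mathrm{GM}_{p,\sigma}=p\mathcal N(1/\sigma,1)+(1-p)P$, and in fact $pG_{1/\sigma}+(1-p)\Id=T(P,\mathrm{GM}_{p,\sigma})$ exactly (so your $\tilde f$ detour is unnecessary---the ``exact per-iteration moments'' used by MA are precisely the R\'enyi divergences of this pair). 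The inequality $\delta_{\mathtt{MA}}\ge 1+f_T^*(-\e^{\epsilon})$ is then a statement about the Chernoff/Markov conversion being dominated by the exact Fenchel dual for this fixed pair $(P^T,\mathrm{GM}_{p,\sigma}^T)$, and it needs a short computation of the type above rather than a tightness meta-argument. One more wrinkle you should anticipate: $f_T$ is not symmetric, so to control the second branch $\e^{-\epsilon}(1-\delta-\alpha)$ of $f_{\epsilon,\delta}$ you also need the reverse inequality coming from $D_{\lambda+1}(P\|\mathrm{GM}_{p,\sigma})$, which yields $\delta_{\mathtt{MA}}\ge 1+(f_T^{-1})^*(-\e^{\epsilon})$; since both $f_T$ and $f_T^{-1}$ converge to the symmetric $G_{\mu_\infty}$, this closes the argument.
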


\begin{remark}
For ease of reading, we point out that, in the $(\epsilon,\delta)$-DP framework, the smaller $\epsilon, \delta$ are, the more privacy is guaranteed. In contrast, in the $f$-DP framework, the smaller $f$ is, the less privacy is guaranteed.
\end{remark}

From the $(\epsilon, \delta)$-DP viewpoint, however, the question is presently unclear. Explicitly, the duality between $f$-DP and $(\epsilon, \delta)$-DP shows that $\mu$-GDP implies $(\epsilon, \delta(\epsilon; \mu))$-DP for all $\epsilon \ge 0$, where\footnote{See Section 2.4 of \cite{dong2019gaussian} for this result. See also \cite{dwork2014analyze,balle2018improving}.}
\begin{equation}\label{eq:dual}
\delta(\epsilon; \mu) = 1 + G^*_{\mu}(-\e^{\epsilon}) = \Phi( -\frac{\varepsilon}{\mu} +\frac{\mu}{2})- \e^{\varepsilon}\Phi(- \frac{\varepsilon}{\mu} - \frac{\mu}{2}).
\end{equation}
The question is, therefore, reduced to the comparison between $\delta_{\mathtt{MA}}(\epsilon)$ and
$\delta_{\mathtt{CLT}}(\epsilon) := \delta(\epsilon; \mu_{\mathtt{CLT}})$ or, equivalently, between $\epsilon_{\mathtt{MA}}(\delta)$ and $\epsilon_{\mathtt{CLT}}(\delta) := \epsilon(\delta; \mu_{\mathtt{CLT}})$\footnote{Here, $\epsilon(\delta; \mu)$ is the inverse function of $\delta(\epsilon; \mu)$.}.


\begin{theorem}[Comparison in $(\epsilon, \delta)$-DP]\label{thm:better}
Under the assumptions of \Cref{thm:f_compare}, the $f$-DP framework gives an asymptotically sharper privacy analysis of both $\mathtt{NoisySGD}$ and $\mathtt{NoisyAdam}$ than the moments accountant in terms of $(\epsilon, \delta)$-DP. That is,
\[
\limsup_{T \to \infty} \, \left( \delta_{\mathtt{CLT}}(\epsilon) - \delta_{\mathtt{MA}}(\epsilon) \right) < 0
\]
for all $\epsilon \ge 0$. 
\end{theorem}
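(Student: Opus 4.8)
The plan is to transport the $f$-DP comparison of \Cref{thm:f_compare} to the $(\epsilon,\delta)$ side through the duality \eqref{eq:dual}, and then to locate the strictly positive slack that the moments accountant pays against the exact Gaussian trade-off. Write $\nu := \lim_{T\to\infty}p\sqrt{T}>0$ and $\mu_\infty := \nu\sqrt{\e^{1/\sigma^2}-1}$, so that $\mu_{\mathtt{CLT}}\to\mu_\infty$ and, by continuity of $\delta(\epsilon;\mu)$ in \eqref{eq:dual} in the parameter $\mu$, $\delta_{\mathtt{CLT}}(\epsilon)\to\delta(\epsilon;\mu_\infty)$. It therefore suffices to prove $\liminf_{T\to\infty}\delta_{\mathtt{MA}}(\epsilon)>\delta(\epsilon;\mu_\infty)$ for every $\epsilon\ge 0$.

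The non-strict bound $\liminf_{T\to\infty}\delta_{\mathtt{MA}}(\epsilon)\ge\delta(\epsilon;\mu_\infty)$ I would obtain directly from the privacy central limit theorem: $\mathtt{NoisySGD}$ is $\min\{f,f^{-1}\}^{**}$-DP with $f=(pG_{1/\sigma}+(1-p)\Id)^{\otimes T}$, and, as recalled in \Cref{sec:noisy-sgd}, this bound converges pointwise to $G_{\mu_\infty}$ in the present regime; hence the exact optimal $\delta$ of $\mathtt{NoisySGD}$ at level $\epsilon$, namely $1+(\min\{f,f^{-1}\}^{**})^*(-\e^\epsilon)$, converges to $1+G^*_{\mu_\infty}(-\e^\epsilon)=\delta(\epsilon;\mu_\infty)$, and $\delta_{\mathtt{MA}}(\epsilon)$, being a valid privacy bound, dominates it. (Equivalently, one may route this through $\sup_{\epsilon'\ge0}f_{\epsilon',\delta_{\mathtt{MA}}(\epsilon')}$ and \Cref{thm:f_compare}, using that convex conjugation reverses the pointwise order.)

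The substance is the strict improvement, and for that I would exploit that $\delta_{\mathtt{MA}}$ is a Chernoff-type quantity, $\delta_{\mathtt{MA}}(\epsilon)=\inf_{\lambda}\exp(\alpha_{\mathrm{comp}}(\lambda)-\lambda\epsilon)$, where $\alpha_{\mathrm{comp}}(\lambda)=T\alpha(\lambda)$ and $\alpha(\lambda)$ is the moments accountant's bound on the per-iteration log-moment generating function of the privacy loss, so that $\alpha_{\mathrm{comp}}(\lambda)\ge T\log\E_{\mu_0}[(\mu_1/\mu_0)^{\lambda+1}]$ for the one-step subsampled-Gaussian pair $(\mu_0,\mu_1)$. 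A second-order Taylor expansion in $p$, $\E_{\mu_0}[(\mu_1/\mu_0)^{\lambda+1}]=1+\tfrac{\lambda(\lambda+1)}{2}p^2(\e^{1/\sigma^2}-1)+o(p^2)$, shows $T\log\E_{\mu_0}[(\mu_1/\mu_0)^{\lambda+1}]\to\tfrac{\lambda(\lambda+1)}{2}\mu_\infty^2$, which is exactly the log-MGF of the privacy loss of $\mu_\infty$-GDP; since $\lambda\mapsto\alpha_{\mathrm{comp}}(\lambda)-\lambda\epsilon$ is convex with a bounded minimizer, the infimum passes to the limit and
\[
\liminf_{T\to\infty}\,\delta_{\mathtt{MA}}(\epsilon)\;\ge\;\inf_{\lambda>0}\exp\!\Big(\tfrac{\lambda(\lambda+1)}{2}\mu_\infty^2-\lambda\epsilon\Big).
\]
The right-hand side is precisely the Chernoff bound on the hockey-stick divergence $\delta(\epsilon;\mu_\infty)$ of $\mu_\infty$-GDP, and this bound is strictly loose: for $\epsilon\ge\mu_\infty^2/2$ the infimum equals $\exp(-\tfrac{(2\epsilon-\mu_\infty^2)^2}{8\mu_\infty^2})$, and since $1-\Phi(t)<\e^{-t^2/2}$ for $t>0$ together with $\e^\epsilon\Phi(-\tfrac{\mu_\infty}{2}-\tfrac{\epsilon}{\mu_\infty})>0$ gives
\[
\delta(\epsilon;\mu_\infty)=\Phi\!\Big(\tfrac{\mu_\infty}{2}-\tfrac{\epsilon}{\mu_\infty}\Big)-\e^\epsilon\Phi\!\Big(-\tfrac{\mu_\infty}{2}-\tfrac{\epsilon}{\mu_\infty}\Big)<1-\Phi\!\Big(\tfrac{\epsilon}{\mu_\infty}-\tfrac{\mu_\infty}{2}\Big)<\exp\!\Big(-\tfrac{(2\epsilon-\mu_\infty^2)^2}{8\mu_\infty^2}\Big),
\]
while for $0\le\epsilon<\mu_\infty^2/2$ the same convexity forces $\alpha_{\mathrm{comp}}(\lambda)-\lambda\epsilon>0$ for all admissible $\lambda$, so $\liminf_{T\to\infty}\delta_{\mathtt{MA}}(\epsilon)\ge 1>\delta(\epsilon;\mu_\infty)$. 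Combining with $\delta_{\mathtt{CLT}}(\epsilon)\to\delta(\epsilon;\mu_\infty)$ yields $\limsup_{T\to\infty}(\delta_{\mathtt{CLT}}(\epsilon)-\delta_{\mathtt{MA}}(\epsilon))=\delta(\epsilon;\mu_\infty)-\liminf_{T\to\infty}\delta_{\mathtt{MA}}(\epsilon)<0$, as claimed; the argument for $\mathtt{NoisyAdam}$ is identical since it carries the same $f$-DP guarantee by post-processing.

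I expect the main obstacle to be the asymptotic analysis of the per-iteration moment—specifically, controlling the $o(p^2)$ remainder in the Taylor expansion uniformly for $\lambda$ ranging over a bounded interval, so that the outer infimum over $\lambda$ may legitimately be interchanged with $T\to\infty$. This calls for a dominated-convergence argument that exploits the finiteness of all moments of the log-normal weight $\e^{x/\sigma-1/(2\sigma^2)}$ appearing in $\mu_1/\mu_0$ under $x\sim\N(0,1)$, plus the convexity of $\lambda\mapsto\alpha_{\mathrm{comp}}(\lambda)$ to confine the minimizing order. The remaining ingredients—the two duality identities and the elementary strict Gaussian tail estimate—are routine.
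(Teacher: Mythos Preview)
Your route is genuinely different from the paper's. You pass to the limit in the moments accountant first---computing $T\log\E_{\mu_0}[(\mu_1/\mu_0)^{\lambda+1}]\to\tfrac{\lambda(\lambda+1)}{2}\mu_\infty^2$ via a second-order Taylor expansion in $p$---and then compare the resulting Chernoff bound with the exact Gaussian hockey-stick $\delta(\epsilon;\mu_\infty)$ through the tail inequality $1-\Phi(t)<\e^{-t^2/2}$. The paper instead works entirely at finite $T$: it uses the identity $\lambda D_{\lambda+1}(Q\|P)=\log\int_0^1|f'(x)|^{\lambda+1}\,\diff x$ to write $T\alpha_{\GM}(\lambda)\ge\log\int_0^1|f_T'(x)|^{\lambda+1}\,\diff x$, restricts the integral to $[0,x_T]$ where $|f_T'|\ge\e^\epsilon$ (with $x_T$ defined by $f_T'(x_T)=-\e^\epsilon$), and obtains for every $T$ the clean inequality
\[
\delta_{\mathtt{MA}}(\epsilon)\;>\;1+f_T^*(-\e^\epsilon)+\e^\epsilon x_T.
\]
Only then does it take $T\to\infty$: $1+f_T^*(-\e^\epsilon)\to\delta(\epsilon;\mu_\infty)$ by the CLT, and $x_T\to\Phi(-\tfrac{\epsilon}{\mu_\infty}-\tfrac{\mu_\infty}{2})$ by convex-analytic convergence of subgradients. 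This yields the explicit strict gap $\e^\epsilon\Phi(-\tfrac{\epsilon}{\mu_\infty}-\tfrac{\mu_\infty}{2})$, uniform in $\epsilon$ and with no case split.

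What the paper's approach buys is precisely the avoidance of the obstacle you correctly flagged: the interchange of $\liminf_T$ with $\inf_{\lambda>0}$. In your argument this is not a formality. The Taylor remainder $o(p^2)$ is for fixed $\lambda$, and the minimizing order $\lambda_T^*$ may drift---toward $0$ when $\epsilon<\mu_\infty^2/2$ (where your ``$\alpha_{\mathrm{comp}}(\lambda)-\lambda\epsilon>0$'' claim implicitly needs $T\cdot\mathrm{KL}\to\mu_\infty^2/2$, which you should state), and possibly toward $\infty$ otherwise. Ruling out $\lambda_T^*\to\infty$ requires a uniform coercivity bound on $T\log\E_P[(1+p_TW)^{\lambda+1}]$ that your dominated-convergence sketch does not supply; since the log-normal weight has all moments, this can be done, but it is real work. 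The paper's finite-$T$ inequality followed by a single limit sidesteps all of this and gives a sharper constant besides. Your strategy is sound and would succeed with the extra uniformity argument, but the trade-off-function route is the more efficient path here.
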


In words, the \texttt{CLT} approach in the $f$-DP framework allows for an asymptotically smaller $\delta$ than the moments accountant at the same $\epsilon$. It is worthwhile mentioning that the inequality in this theorem holds for any finite $T$ if $\delta$ is derived by directly applying the duality to the (exact) privacy bound $f_1 \otimes \cdots \otimes f_T$. Equivalently, the theorem says that $\limsup_{T \to \infty} \, \left( \epsilon_{\mathtt{CLT}}(\delta) - \epsilon_{\mathtt{MA}}(\delta) \right) < 0$ for any $\delta$\footnote{Write $\delta^\star_{\mathtt{CLT}} = \delta_{\mathtt{CLT}}(0)$ and set $\epsilon_{\mathtt{CLT}}(\delta) = 0$ for $\delta \ge \delta^\star_{\mathtt{CLT}}$. Apply the same adjustment for $\epsilon_{\mathtt{MA}}$.}. As such, by setting the same $\delta$ in both approaches, say $\delta =
10^{-5}$, the $f$-DP based \texttt{CLT} approach shall give a smaller value of $\epsilon$.


From a practical viewpoint, this refined privacy analysis allows us to trade privacy guarantees for improvement in utility. More precisely, recognizing the aforementioned conclusion that $\delta(\epsilon; \mu_{\mathtt{CLT}}) \equiv \delta_{\mathtt{CLT}}(\epsilon) < \delta_{\mathtt{MA}}(\epsilon)$ (for sufficiently large $T$) and that $\delta(\epsilon; \mu)$ increases as $\mu$ increases, one can find $\tilde\mu_{\mathtt{CLT}} > \mu_{\mathtt{CLT}}$ such that
\begin{equation}\label{eq:tilde_sigma}
\delta(\epsilon; \tilde\mu_{\mathtt{CLT}}) = \delta_{\mathtt{MA}}(\epsilon).
\end{equation}
Put differently, we can carefully adjust some parameters in \Cref{alg:dpsgd1} and \Cref{alg:dpsgd2} in order to let the algorithms be $\tilde\mu_{\mathtt{CLT}}$-GDP. For example, we can \textit{reduce} the scale of the added noise from $\sigma$ to a certain $\tilde\sigma
< \sigma$, which can be solved from \eqref{eq:tilde_sigma} and
\begin{equation}\label{eq:clt_tilde}
\tilde\mu_{\mathtt{CLT}} = p\sqrt{T(\e^{1/\tilde\sigma^2} - 1)}.
\end{equation}
Note that this is adapted from \eqref{eq:clt_analytical}. 




\definecolor{mycolor}{RGB}{176,224,224}
\begin{figure}[!htb]
\centering
\tikzset{
     block/.style={rectangle, draw, text width=11em,text centered, rounded corners, minimum height=3em},
     arrow/.style={-{Stealth[]}}
     }
\usetikzlibrary{positioning,arrows.meta,shapes.symbols}
\begin{tikzpicture}
    \node [fill=mycolor,block] (SGD) {\texttt{NoisyOptimizer}$(\sigma,\cdots)$};
    \node [block,right=of SGD] (eps)  {$\big(\ep,\delta_{\texttt{MA}}(\ep)\big)$-DP};
    \node [block,above=of SGD] (mu)  {$\mu_{\texttt{CLT}}$-GDP};
    \node [block,right=of mu] (eps2)  {$\big(\ep,\delta_{\texttt{CLT}}(\ep)\big)$-DP};
    \node [cloud, cloud puffs=15.7, cloud ignores aspect, minimum width=5cm, minimum height=2cm,,draw,right=of eps2] (cloud1)  {$\delta_{\texttt{MA}}(\epsilon)>\delta_{\texttt{CLT}}(\epsilon)$};
    \node [block,right=of eps] (mu2)  {$\tilde\mu_{\texttt{CLT}}$-GDP};
    \node [fill=mycolor,block,below=of eps] (SGD2)  {\texttt{NoisyOptimizer}$(\tilde\sigma,\cdots)$};
    \node [cloud, cloud puffs=13, minimum width=5cm, minimum height=2cm,,draw,left=of SGD2] (cloud2)  {$\sigma>\tilde\sigma$};
    \draw[arrow] (SGD)  --  node [right] {\texttt{CLT}}(mu);  
    \draw[arrow] (mu) --  node [above] {\texttt{Dual}}(eps2);  
	\draw [arrow] (SGD) -- node [above] {\texttt{MA}}(eps);
	\draw[arrow] (mu2)  -- node [above] {\texttt{Dual}} (eps);  
	\draw[arrow] (SGD2) -|  node [above,xshift=-20pt] {\texttt{CLT}}(mu2);
	\draw[dashed,arrow,transform canvas={yshift=-3pt}](eps)  -- node [below]{\eqref{eq:tilde_sigma}}(mu2);
	\draw[dashed,arrow,transform canvas={xshift=3pt,yshift=-3pt}](mu2)  |- node [below,xshift=-20pt]{\eqref{eq:clt_tilde}}(SGD2);
\end{tikzpicture}
\caption{An illustration of the \texttt{CLT} approach in the $f$-DP framework and the moments accountant in the $(\epsilon, \delta)$-DP framework. $\mathtt{NoisyOptimizer}(\sigma, \dots)$ using the moments accountant gives the same privacy guarantees in terms of $(\epsilon, \delta)$-DP as $\mathtt{NoisyOptimizer}(\tilde\sigma, \dots)$ using the \texttt{CLT} approach (the ellipses denote omitted parameters). Note that the duality formula \eqref{eq:dual} is used in solving $\tilde\mu_{\mathtt{CLT}}$ from \eqref{eq:tilde_sigma}.}
\label{fig:workflow}
\end{figure}
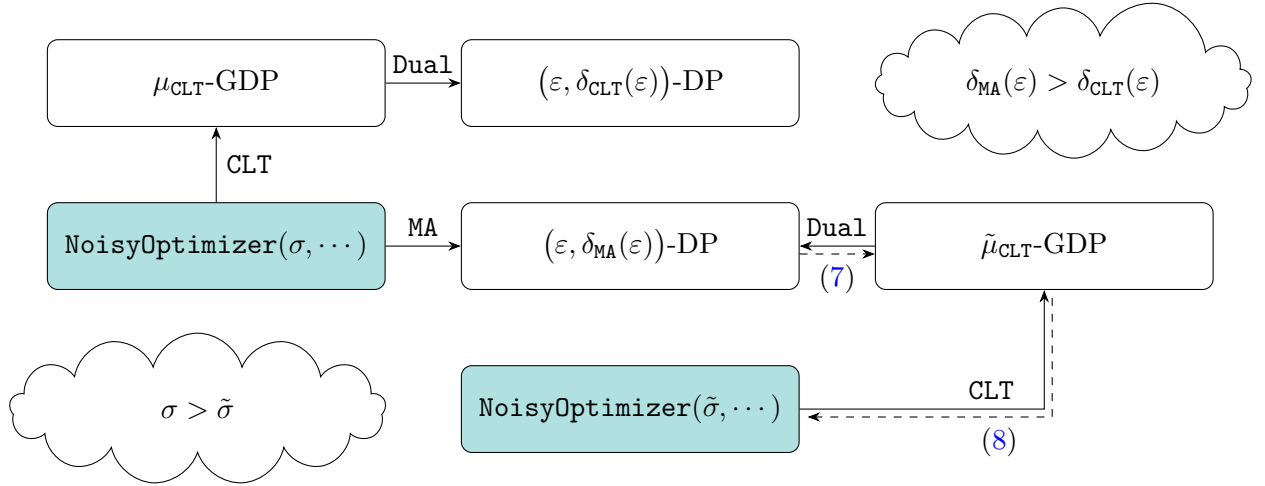





Figure~\ref{fig:workflow} shows the flowchart of the privacy analyses using the two approaches and their relationship. In addition, numerical comparisons are presented in Figure~\ref{fig:necessary2}, consistently demonstrating the superiority of the \texttt{CLT} approach. 


\begin{figure}[!htb]
	\centering
	\includegraphics[width=8cm]{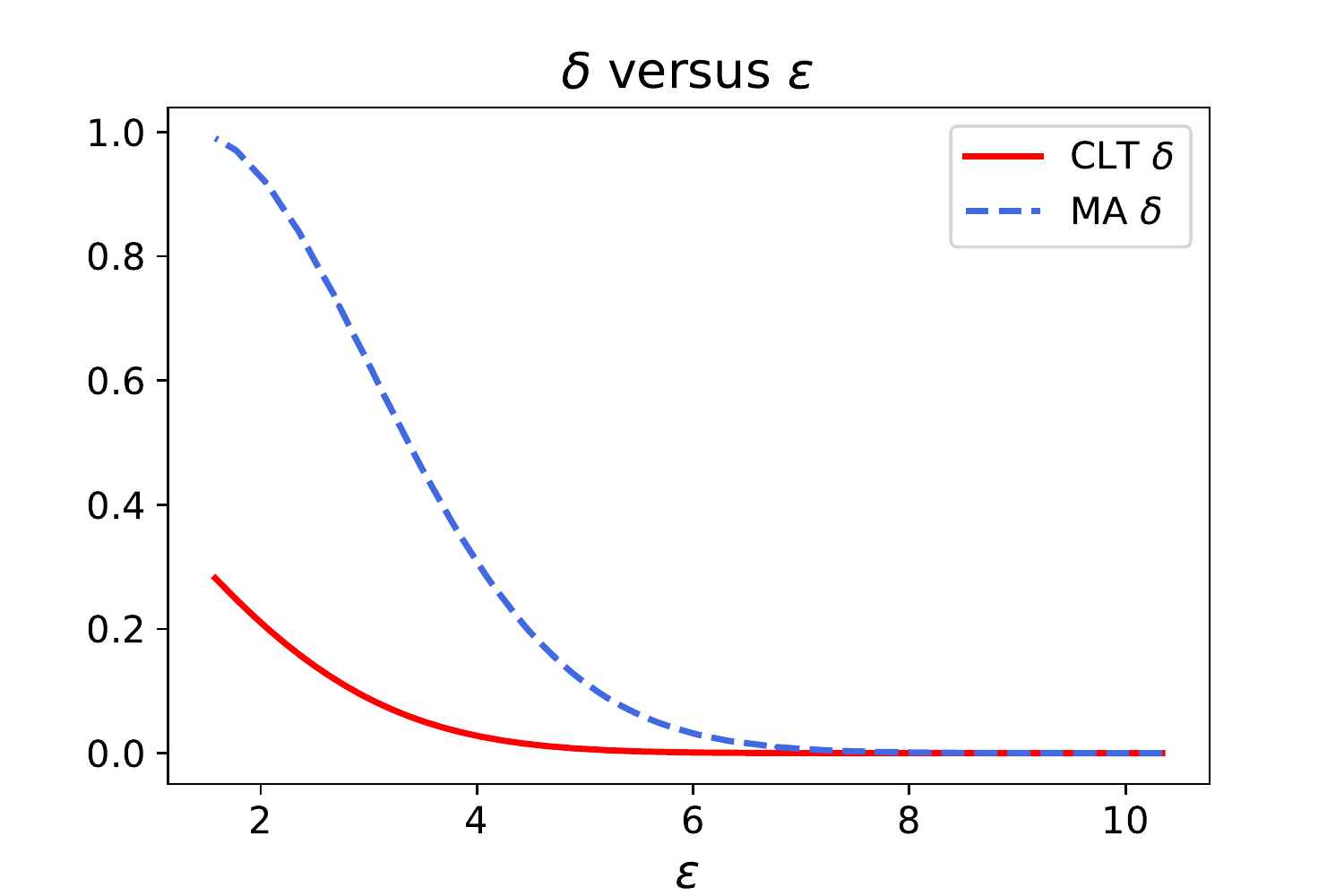}
	\hspace{-0.7cm}
	\includegraphics[width=8cm]{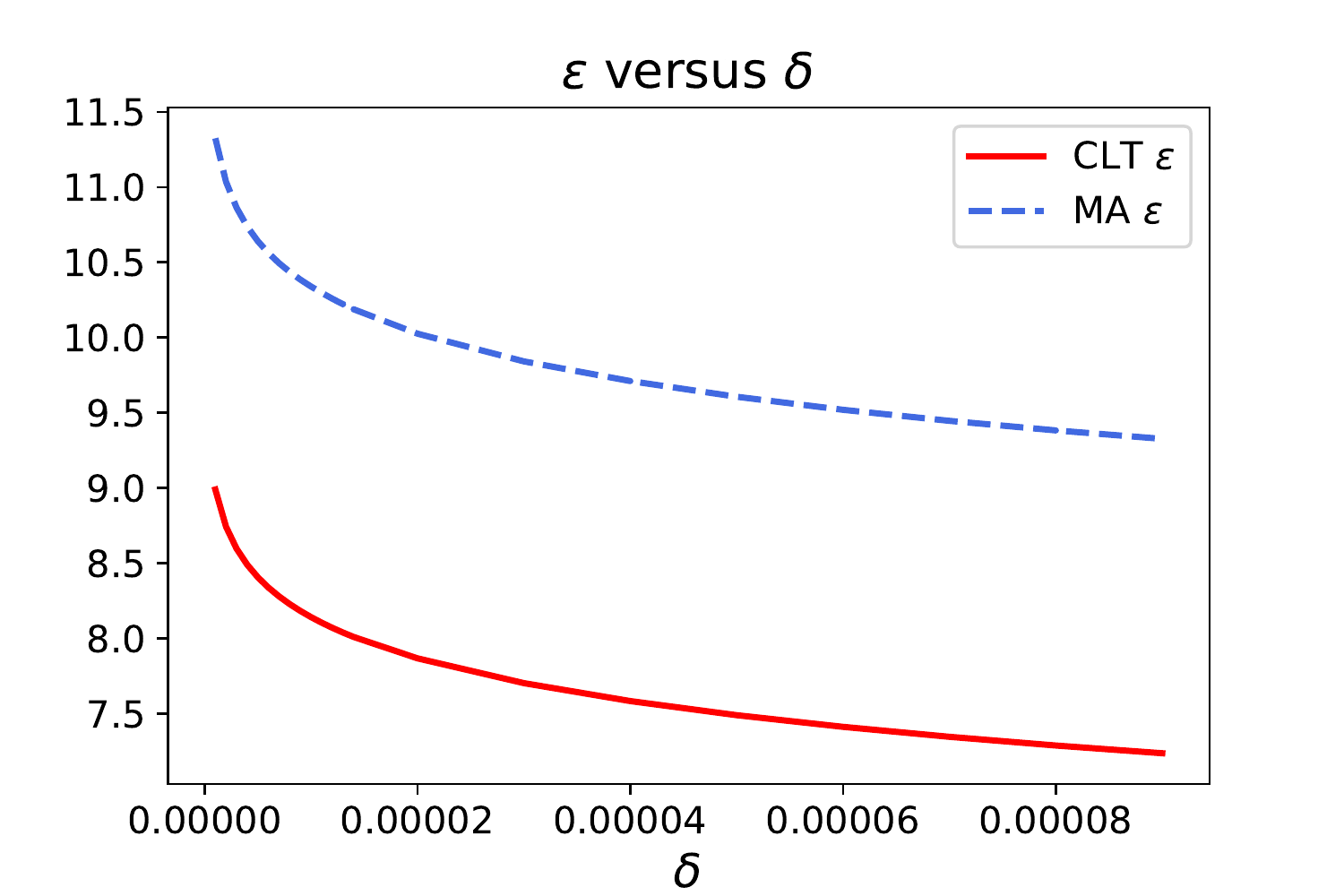}
	\hspace{-0.7cm}
	\includegraphics[width=8cm]{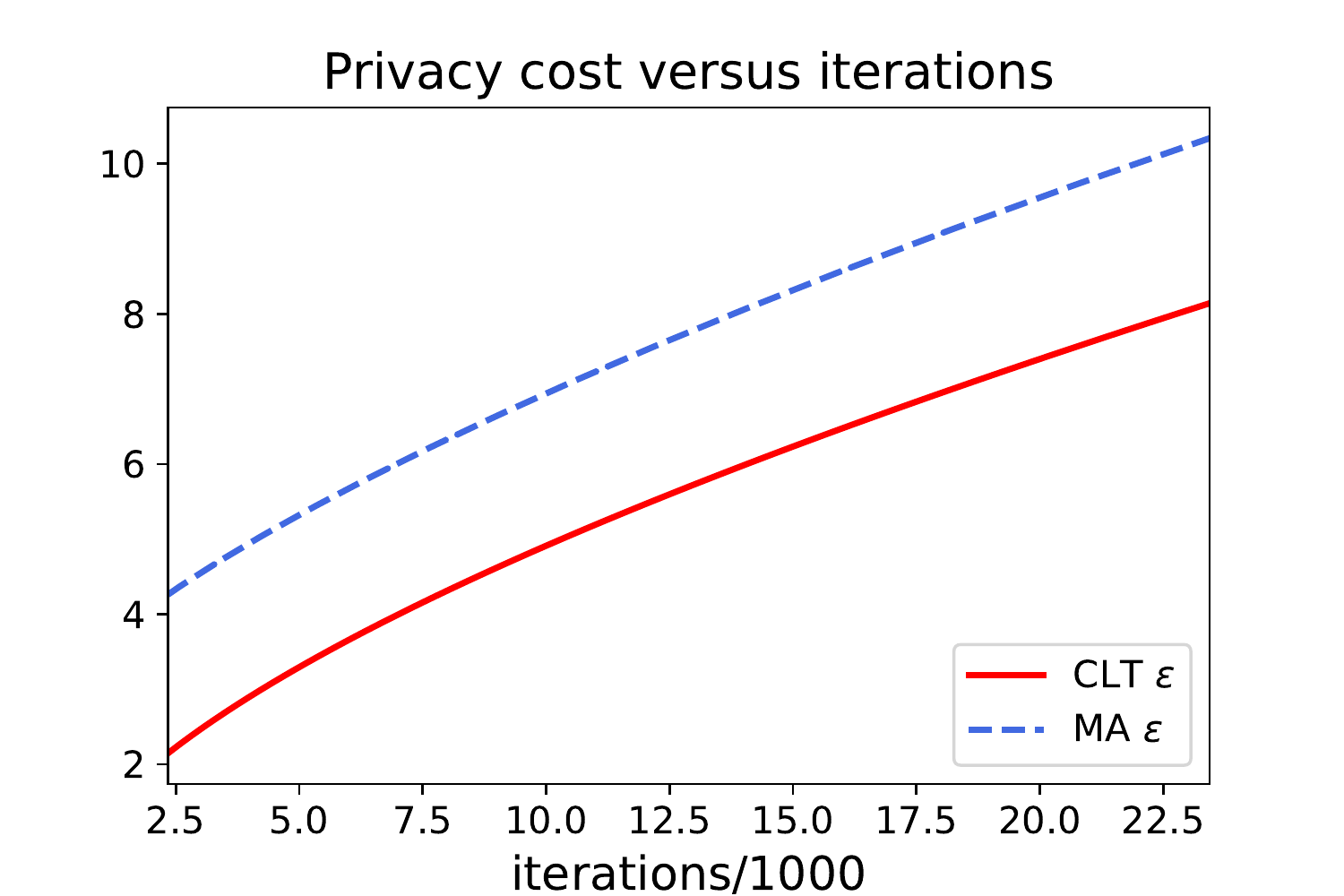}
	\hspace{-0.7cm}
	\includegraphics[width=8cm]{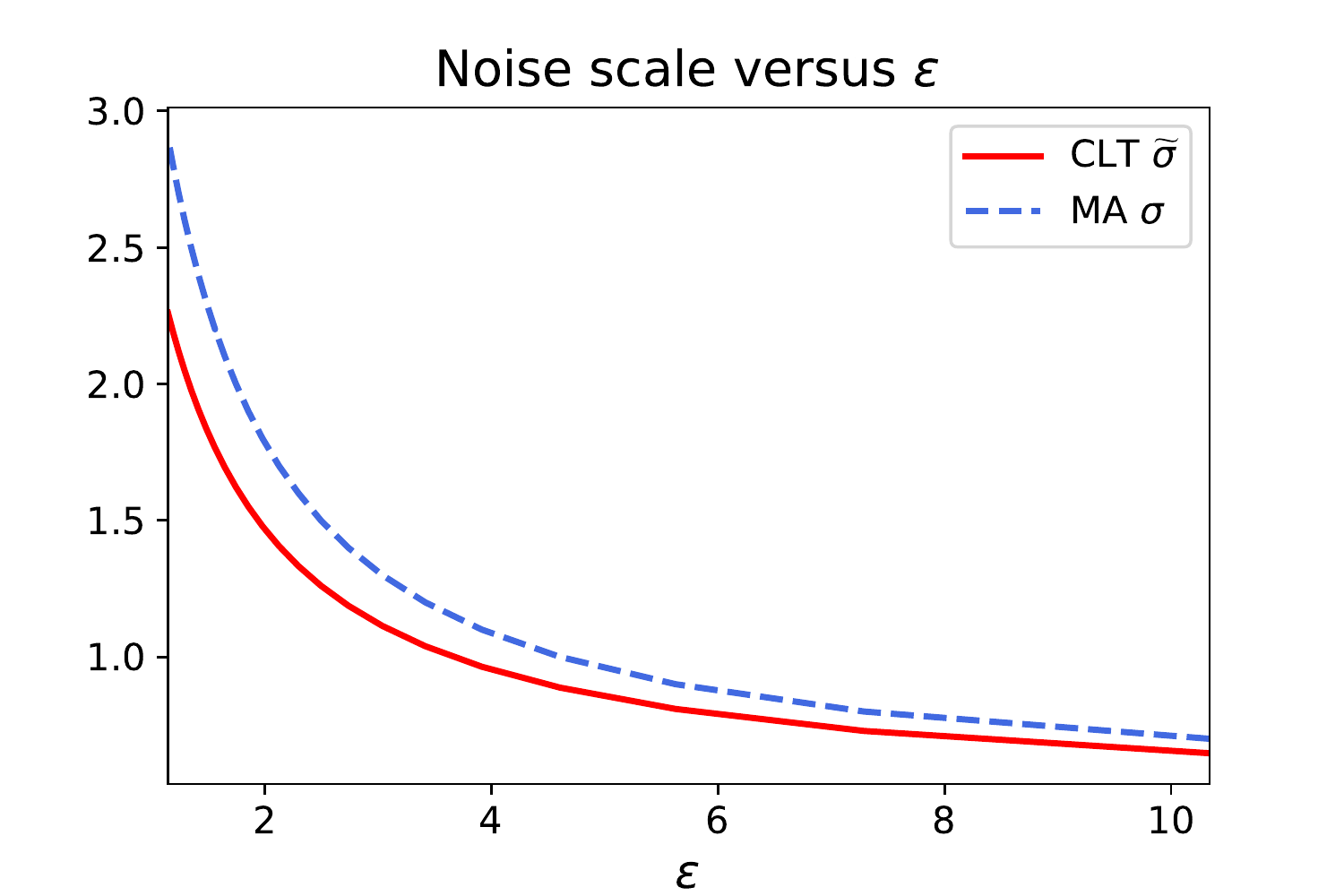}
	\caption{Tradeoffs between $\epsilon$ and $\delta$ for both \texttt{CLT} and \texttt{MA}, which henceforth denotes the moments accountant. The settings follows the MNIST experiment in \Cref{sec:results} with $\sigma=0.7, p=256/60000$. The bottom two plots assume $\delta=10^{-5}$. Note $\epsilon$ and $\delta$ in the \texttt{CLT} are related via \eqref{eq:dual} with $\mu = \mu_{\mathtt{CLT}}$. The bottom right plot is consistent with the conclusion $\sigma > \tilde\sigma$ shown in the cloud icon of \Cref{fig:workflow}.}
	\label{fig:necessary2}
\end{figure}



\section{Results}
\label{sec:results}

In this section, we use \texttt{NoisySGD} and \texttt{NoisyAdam} to train private deep learning models on datasets for tasks ranging from image classification (MNIST), text classification (IMDb movie review), recommender systems (MovieLens movie rating), to regular binary classification (Adult income). Note that these datasets all contain sensitive information about individuals, and this fact necessitates privacy consideration in the training process. Code to reproduce the results using the \texttt{TensorFlow Privacy} library is available at \url{https://github.com/tensorflow/privacy/tree/master/research/GDP_2019}.


\subsection{The $f$-DP Perspective}
\label{sec:clt-boosts-perf}

This section demonstrates the utility and practicality of the private deep learning methodologies with associated privacy guarantees in terms of $f$-DP. In \Cref{sec:clt-impr-perf}, we extend the empirical study to the $(\epsilon, \delta)$-DP framework. Throughout the experiments, the parameter $\delta$ we use always satisfies $\delta < 1/n$, where $n$ is the number of training examples. 


\paragraph{MNIST.} 
The MNIST dataset \cite{lecun-mnisthandwrittendigit-2010} contains 60,000 training images and 10,000 test images. Each image is in $28\times 28$ gray-scale representing a handwritten digit ranging from 0 to 9. We train neural networks with the same architecture (two convolutional layers followed by one dense layer) as in~\cite{privacygithub,deep} on this dataset. Throughout the experiment, we set the subsampling probability to $p=256/60000$ and use a constant learning rate $\eta$. 

\begin{table}[!htb]
\centering
\begin{tabular}{c|c|c|c|c|c|c|c}
\toprule 
$\eta$& $R$& $\sigma$& Epochs & Test accuracy (\%)&	\texttt{CLT} $\mu$ &	\texttt{CLT} $\epsilon$ & \texttt{MA} $\epsilon$ \\
\midrule 
\midrule 
0.25&1.5&1.3& 15 &95.0& 0.23&0.83& 1.19 \\
0.15&1.0&1.1& 60 & 96.6& 0.57&2.32& 3.01 \\
0.25&1.5&0.7& 45 & 97.0& 1.13&5.07& 7.10 \\
0.25&1.5&0.6& 62 & 97.6& 2.00&9.98& 13.27 \\
0.25&1.5&0.55& 68& 97.8&2.76&14.98& 18.72 \\
0.25&1.5&0.5& 100& 98.0& 4.78&31.12& 32.40\\
\bottomrule 
\end{tabular}
\caption{Experimental results for \texttt{NoisySGD} and their privacy analyses on MNIST. The accuracy is averaged over 10 independent runs. The hyperparameters in the first three rows are the same as in \cite{privacygithub}. The $\mu$ in the 6th row is calculated using \eqref{eq:clt_analytical}, which carries over to the 7th row via \eqref{eq:dual} with $\delta=10^{-5}$. The number of epochs is equal to $T \times \text{mini-batch size}/n = pT$.}
\label{tab:tradeoff}
\end{table}

Table~\ref{tab:tradeoff} displays the test accuracy of the neural networks trained by \texttt{NoisySGD} as well as the associated privacy analyses. The privacy parameters $\epsilon$ in the last two columns are both with respect to $\delta=10^{-5}$. Over all six sets of experiments with different tuning parameters, the \texttt{CLT} approach gives a significantly smaller value of $\epsilon$ than the moments accountant, which is consistent with our discussion in \Cref{sec:conn-with-moments}. The point we wish to emphasize, however, is that $f$-DP offers a much more comprehensive interpretation of the privacy guarantees than $(\epsilon, \delta)$-DP. For instance, the model from the third row preserves a decent amount of privacy since it is \textit{not} always easy to tell apart $\N(0,1)$ and
$\N(1.13, 1)$. In stark contrast, the $(\epsilon,\delta)$-DP viewpoint is too conservative, suggesting that for the \textit{same} model not much privacy is left, due to a very large ``likelihood ratio'' $\e^{\epsilon}$ in Definition~\ref{def:eps_delta_DP}: it equals
$\e^{7.10} = 1212.0$ or $\e^{5.07} = 159.1$ depending on which approach is chosen. This shortcoming of $(\epsilon, \delta)$-DP cannot be overcome by taking a larger $\delta$, which, although gives rise to a smaller $\epsilon$, would undermine the privacy guarantee from a different perspective.


\begin{figure}[!htb]
\centering
\includegraphics[width=5cm]{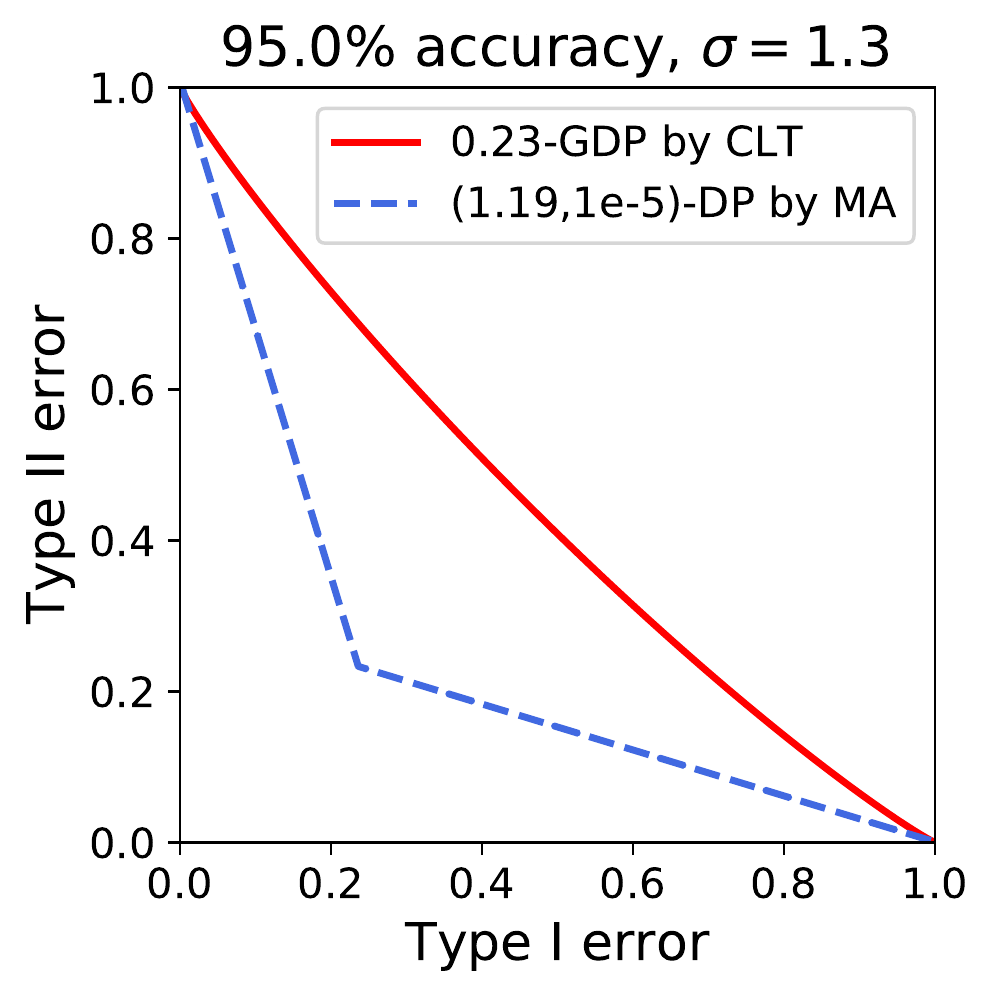}
\includegraphics[width=5cm]{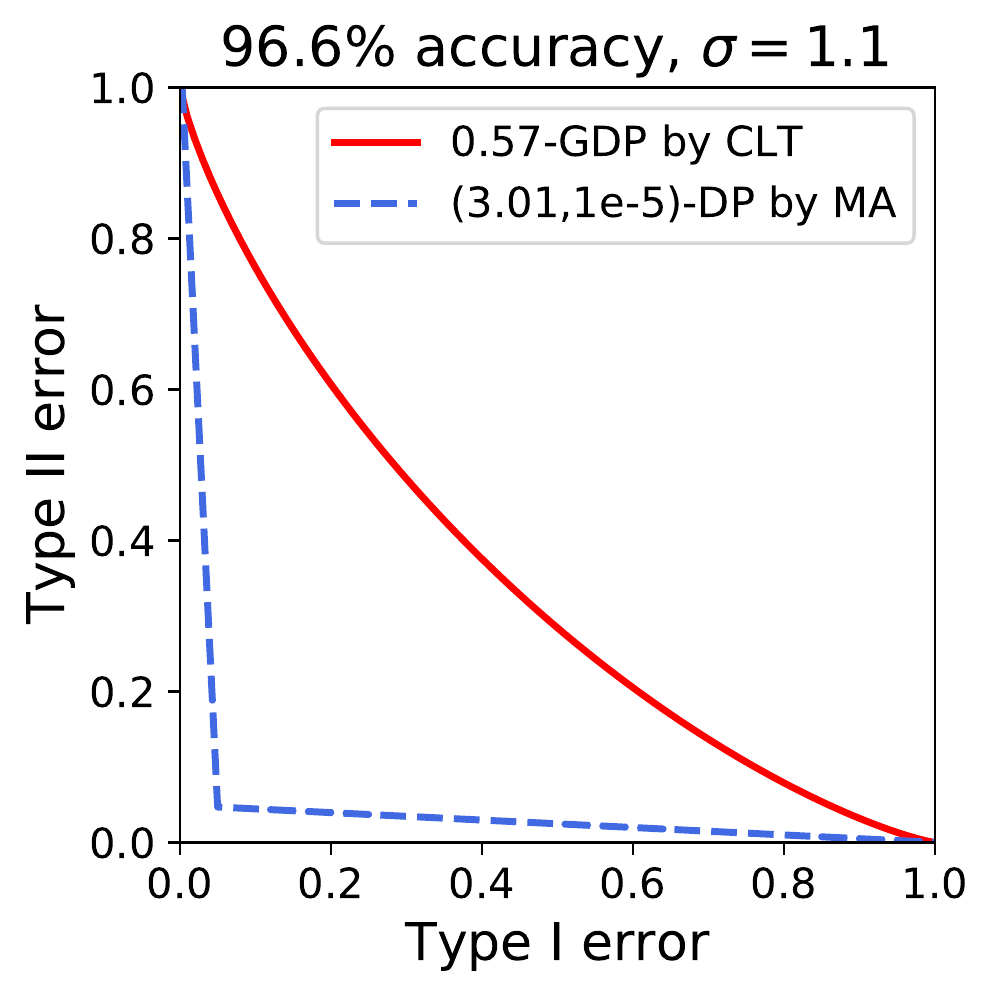}
\includegraphics[width=5cm]{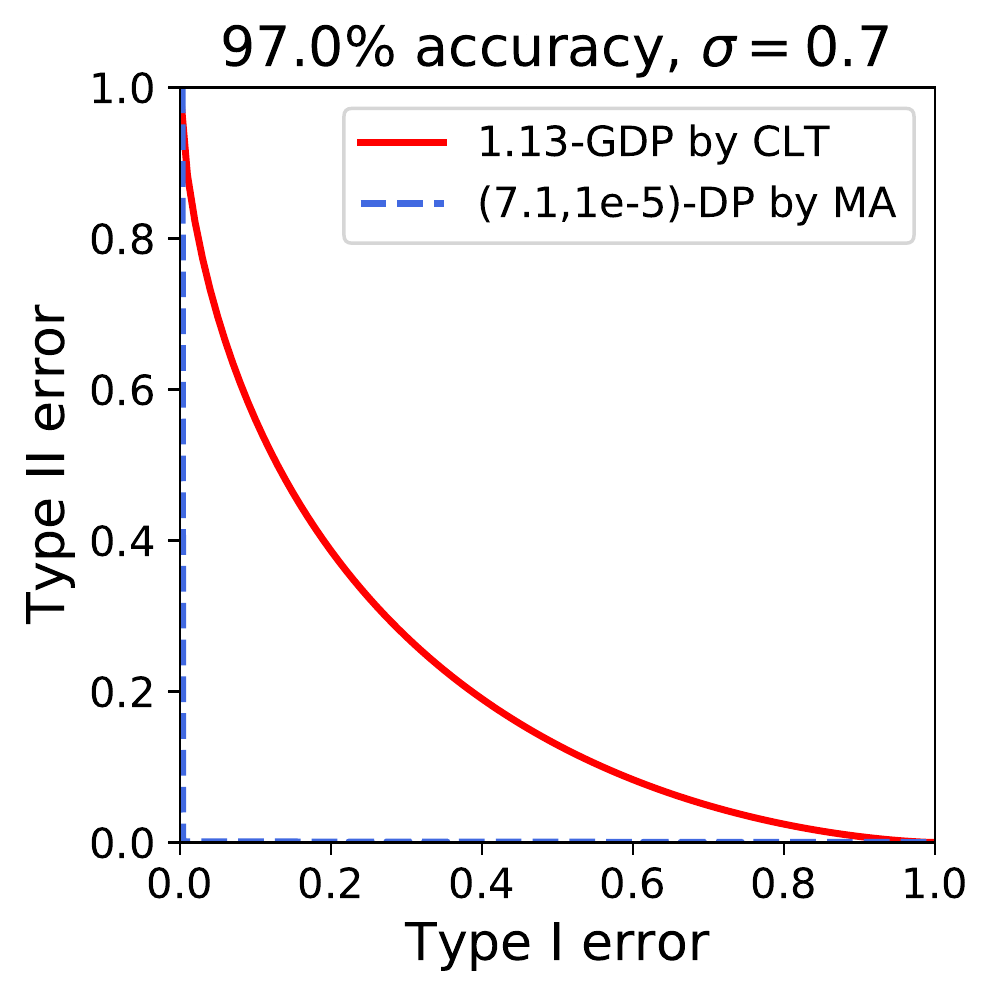}
\\
\includegraphics[width=5cm]{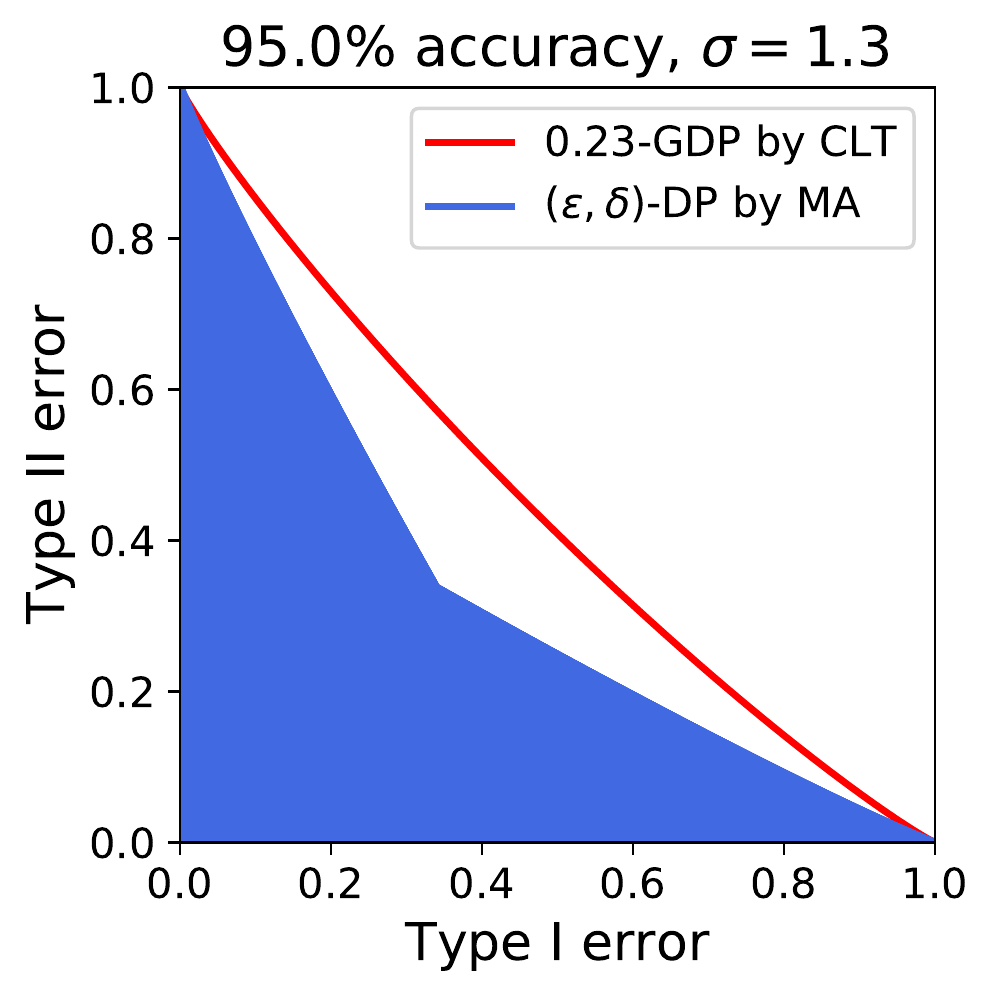}
\includegraphics[width=5cm]{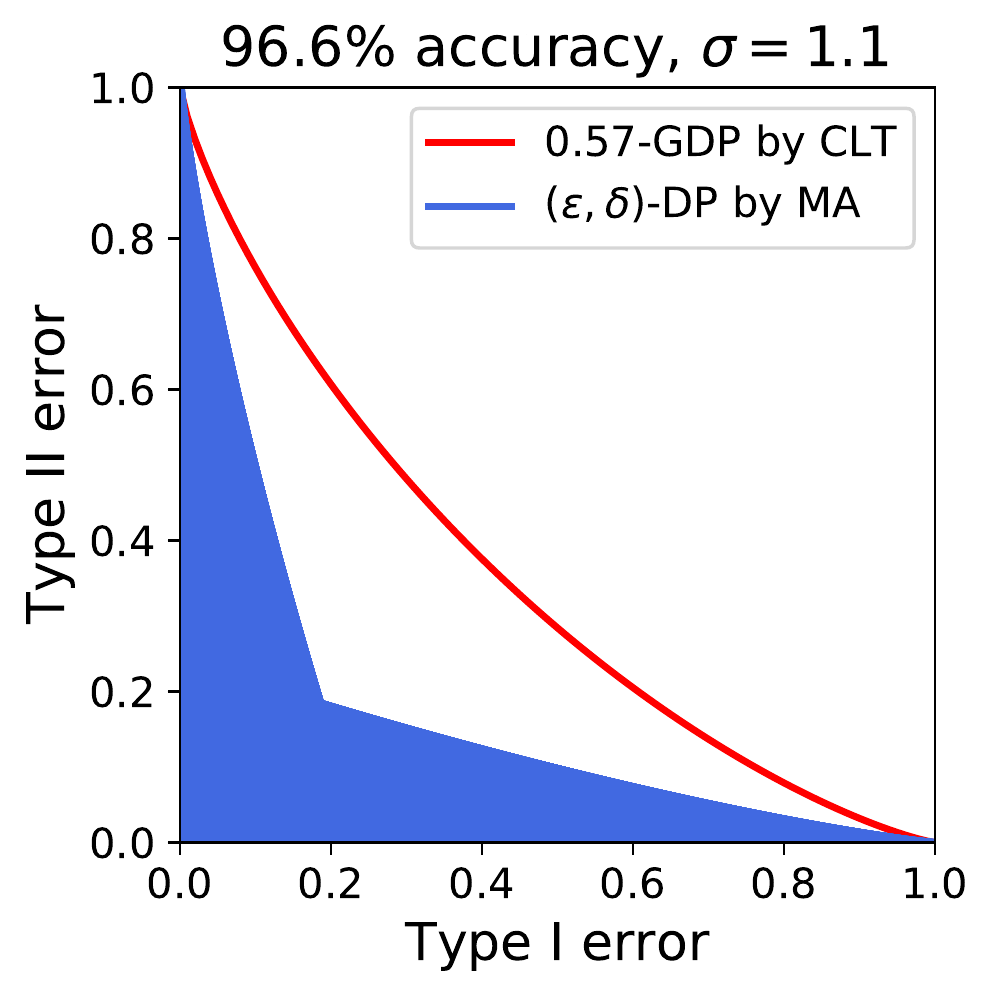}
\includegraphics[width=5cm]{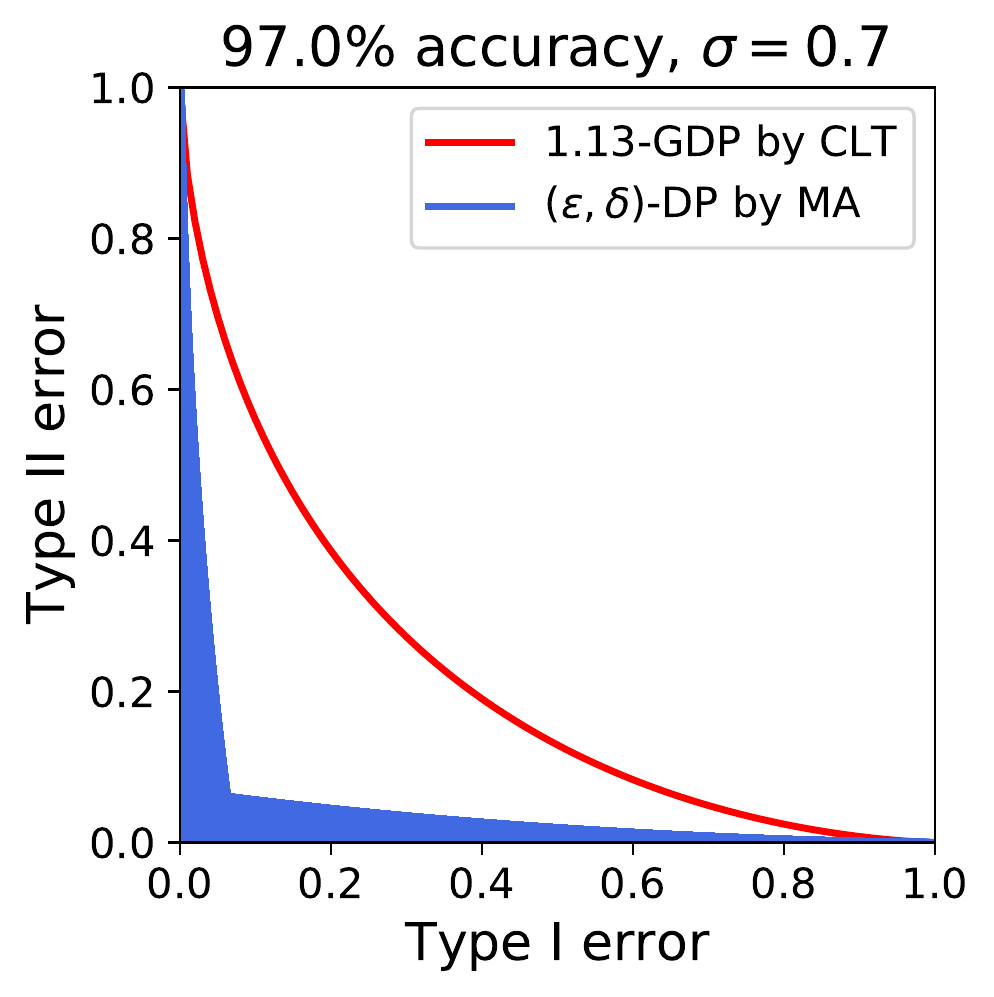}
\\
\includegraphics[width=5cm]{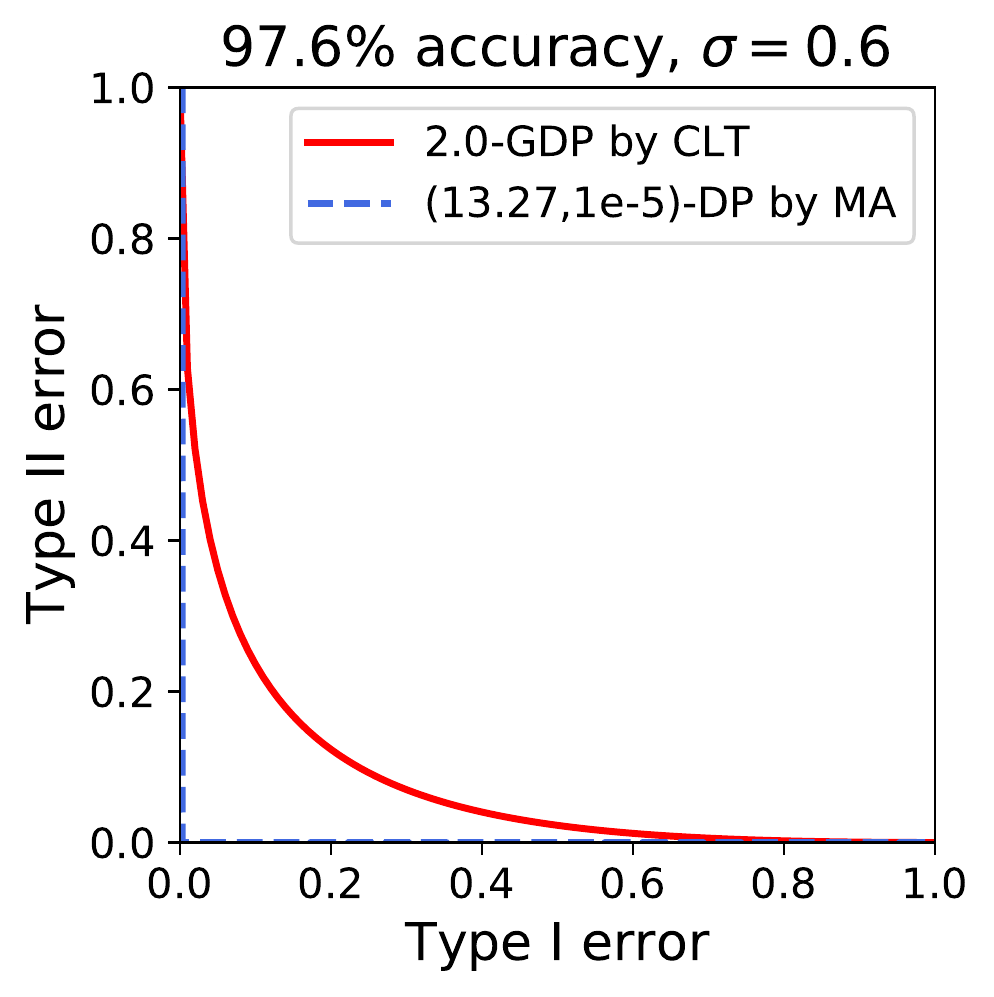}
\includegraphics[width=5cm]{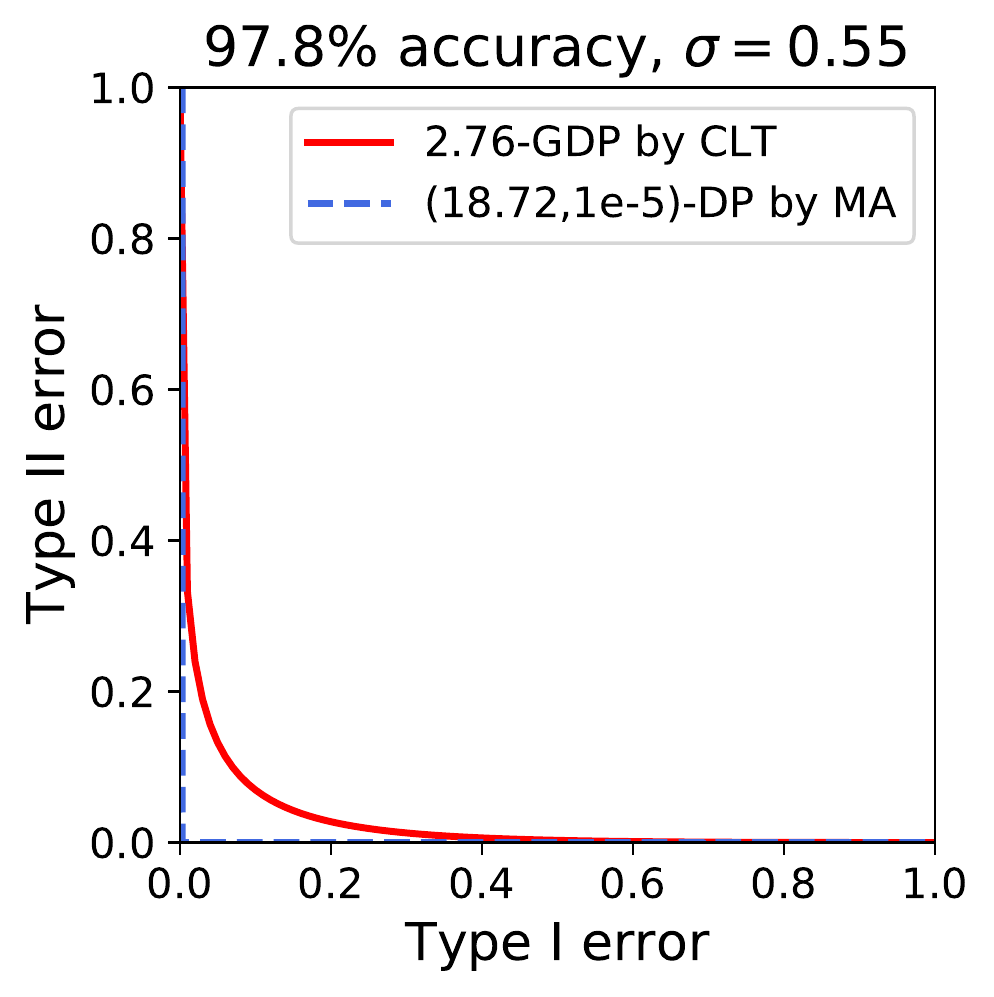}
\includegraphics[width=5cm]{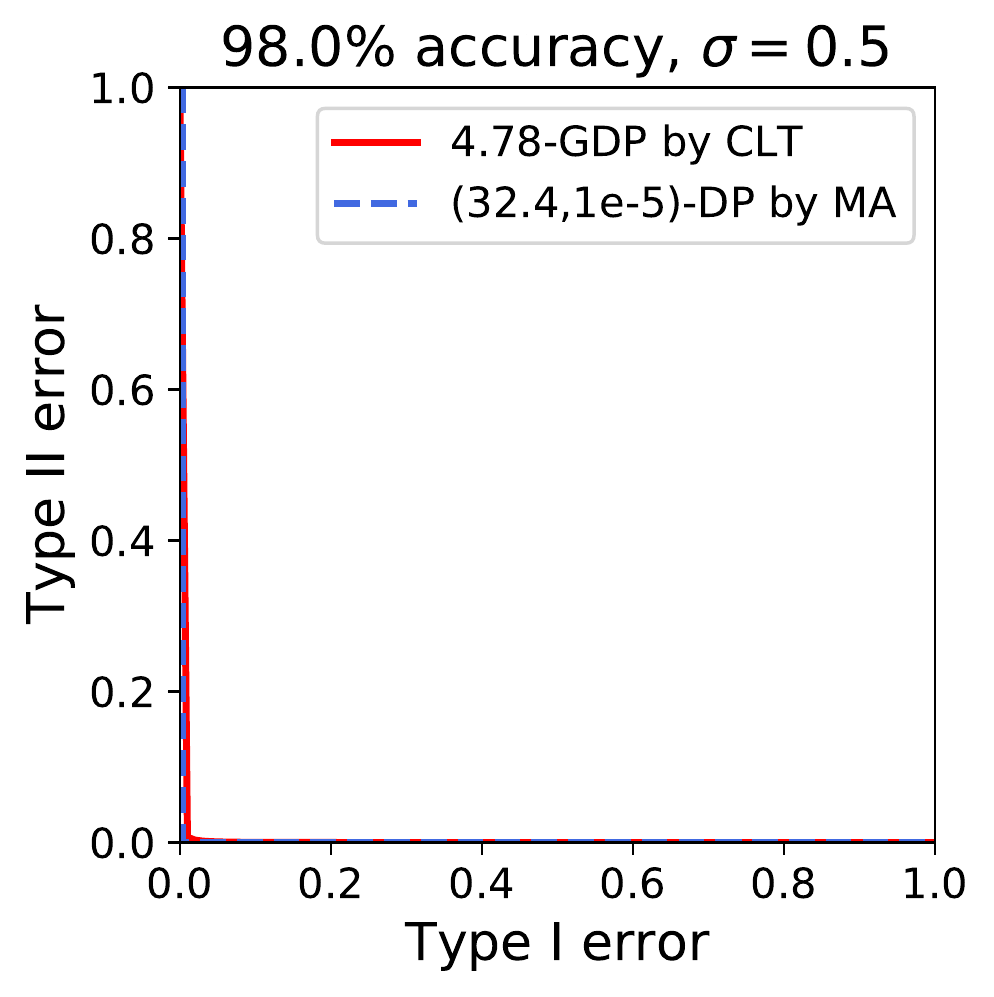}
\caption{Comparisons between the two ways of privacy analysis on MNIST in terms of the trade-off between type I and type II errors, in the same setting as \Cref{tab:tradeoff}. The plots are different from Figure 7 in \cite{dong2019gaussian}. The $(\epsilon, \delta)$-DP guarantees are plotted according to \eqref{eq:e_d_f}. The blue regions in the plots from the second row correspond to all pairs of $(\epsilon,\delta)$ computed by \texttt{MA}. The blue regions are not noticeable in the third row.}
\label{fig:tradeoff}
\end{figure}

For all experiments described in \Cref{tab:tradeoff}, \Cref{fig:tradeoff} illustrates the privacy bounds given by the \texttt{CLT} approach and the moments accountant both in terms of trade-off functions. The six plots in the first and third rows are with respect to $\delta = 10^{-5}$, from which the $f$-DP framework is seen to provide an analyst with substantial improvements in the privacy bounds. Note that the first row in \Cref{fig:tradeoff} corresponds to the first three rows in \Cref{tab:tradeoff}, and the third row in \Cref{fig:tradeoff} corresponds to the last three rows in \Cref{tab:tradeoff}. For the model corresponding to 96.6\% test accuracy, concretely, the \textit{minimum} sum of type I and type II errors in the sense of hypothesis testing is (at least) $77.6\%$ by the \texttt{CLT} approach, whereas it is merely (at least) $9.4\%$ by the moments accountant. For completeness, we show the optimal trade-off functions over all pairs of $\epsilon, \delta$ given by the moments accountant in the middle row. The gaps between the two approaches exist, as predicted by \Cref{thm:f_compare}, and remain significant.

Next, we extend our experiments to other datasets to further test $f$-DP for training private neural networks. The experiments compare private models under the privacy budget $\mu \leq 2$ to their non-private counterparts and some popular baseline methods. For simplicity,  we focus on shallow neural networks and leave the investigation of complex architectures for future research.


\paragraph{Adult income.} Originally from the UCI repository \cite{Dua:2019}, the Adult income dataset has been preprocessed into the LIBSVM format~\cite{chang2011libsvm}. This dataset contains 32,561 examples, each of which has 123 features and a label indicating whether the individual's annual income is more than \$50,000 or not. We randomly choose $10\%$ of the examples as the test set (3,256 examples) and use the remaining 29,305 examples as the training set.

Our model is a single-layer multi-perceptron with 16 neurons and the ReLU activation. We set $\sigma=0.55$, $p=256/29305$, $\eta=0.15, R=1$, and use \texttt{NoisySGD} as our optimizer. The results displayed in \Cref{tab:adult} show that our private model achieves comparable performance to the baselines in the MLC++ library \cite{kohavi1996data} in terms of test accuracy. 


\begin{table}[!htb]
	\centering
	\begin{tabular}{c|c|c|c|c|c}
		\toprule 
		Models& Epochs & Test accuracy (\%) &	\texttt{CLT} $\mu$&	\texttt{CLT} $\epsilon$& \texttt{MA} $\epsilon$\\
		\midrule 
		\midrule 
		private networks&18&84.0&2.03&10.20& 14.70\\
		 non-private networks &20&84.5&---&---&---\\
		$k$NN \cite{cover1967nearest}&---&79.7&---&---&--- \\
		naive Bayes&---&83.9&---&---&--- \\
		voted ID3 \cite{quinlan1986induction}&---&84.4&---&---&--- \\
		C4.5 \cite{quinlan2014c4}&---&84.5&---&---&--- \\
		\bottomrule 
	\end{tabular}
	\caption{Results for \texttt{NoisySGD} on the Adult income dataset. The $\epsilon$ parameters are with respect to $\delta=10^{-5}$.}
	\label{tab:adult}
\end{table}

\paragraph{IMDb.} We use the IMDb movie review dataset~\cite{maas-EtAl:2011:ACL-HLT2011} for binary sentiment classification (positive or negative reviews). The dataset contains 25,000 training and 25,000 test examples. In our experiments, we prepocess the dataset by only including the top 10,000 frequently used words and discard the rest. Next, we set every example to have 256 words by truncating the length or filling with zeros if necessary.


In our neural networks, the input is first embedded into 16 units and then is passed through a global average pooling. The intermediate output is fed to a fully-connected layer with 16 neurons, followed by a ReLU layer. We set $\sigma=0.56$, $p=512/25000$, $\eta=0.02, R=1$, and use \texttt{NoisyAdam} as our optimizer, which is observed to converge much faster than \texttt{NoisySGD} in this training task. We use the (non-private) two-layer LSTM RNN model in the Tensorflow tutorials~\cite{imdb} as a baseline model. \Cref{tab:imdb} reports the experimental results. Notably, the private neural networks perform comparably to the baseline model, at the cost of only one percent drop in test accuracy compared to the non-private counterpart.


\begin{table}[!htb]
	\centering
	\begin{tabular}{c|c|c|c|c|c}
		\toprule 
		Models & Epochs & Test accuracy (\%)& \texttt{CLT} $\mu$& \texttt{CLT} $\epsilon$& \texttt{MA} $\epsilon$\\
		\midrule 
		\midrule 
		private networks &9& 83.8&2.07&10.43& 15.24 \\
		non-private networks &20& 84.7& ---& ---& --- \\
		LSTM-RNN \cite{hochreiter1997long}&10&85.4& ---& ---& ---\\
		\bottomrule 
	\end{tabular}
	\caption{Results for \texttt{NoisyAdam} on the IMDB dataset, with $\delta=10^{-5}$ used in the privacy analyses.}
	\label{tab:imdb}
\end{table}

\paragraph{MovieLens.} The MovieLens movie rating dataset \cite{harper2016movielens} is a benchmark dataset for recommendation tasks. Our experiments consider the MovieLens 1M dataset, which contains 1,000,209 movie ratings from 1 star to 5 stars. In total, there are 6,040 users who rated 3,706 different movies. For this multi-class classification problem, the root mean squared error (RMSE) is chosen as the performance measure. It is worthwhile to mention that, as each user only watched a small fraction of all the movies, most (user, movie) pairs correspond to missing ratings. We randomly sample 20\% of the examples as the test set and take the remainder as the training set.


Our model is a simplified version of the neural collaborative filtering in \cite{he2017neural}. The network architecture consists of two branches. The left branch applies generalized matrix factorization to embed the users and movies using five latent factors. The output of the user embedding is multiplied by the item embedding. In the right branch, we use 10 latent factors for embedding. The embedding from both branches are then concatenated, which is fed to a fully-connected output layer. We set $\sigma=0.6, p=1/80, \eta=0.01$, and $R=5$ in \texttt{NoisyAdam}.

\begin{table}[!htb]
	\centering
	\begin{tabular}{c|c|c|c|c|c}
		\toprule 
		Models & Epochs & RMSE & \texttt{CLT} $\mu$ & \texttt{CLT}  $\epsilon$&	\texttt{MA} $\epsilon$\\
		\midrule 
		\midrule 
		 private networks &20&0.915&1.94&10.61&15.39\\
		 non-private networks &20&0.893&---&---&---\\
		SVD&---&0.873&---&---&---\\
		NMF&---&0.916&---&--- &---\\
		user-based CF \cite{sarwar2001item}&---&0.923&---&---&--- \\
		global average&---&1.117&---&---&--- \\
		\bottomrule 
	\end{tabular}
	\caption{Results for \texttt{NoisyAdam} on the MovieLens 1M dataset, with $\delta=10^{-6}$ used in the privacy analyses. CF stands for collaborative filtering.}
	\label{tab:movielens}
\end{table}

\Cref{tab:movielens} presents the numerical results of our neural networks as well as baseline models in the Suprise library~\cite{surprise} in their default settings. The difference in RMSE between the non-private networks and the private one is relatively large for the MovieLens 1M dataset. Nevertheless, the private model still outperforms many popular non-private models, including the user-based collaborative filtering and nonnegative matrix factorization. 


\subsection{The $(\epsilon, \delta)$-DP Perspective}
\label{sec:clt-impr-perf}



While we hope that the $f$-DP perspective has been conclusively demonstrated to be advantageous, this section shows that the \texttt{CLT} approach continues to bring considerable benefits even in terms of $(\epsilon, \delta)$-DP. Specifically, by making use of the comparisons between the \texttt{CLT} approach and the moments accountant in \Cref{sec:conn-with-moments}, we can add less noise to the gradients in \texttt{NoisySGD} and \texttt{NoisyAdam} while achieving the same $(\epsilon, \delta)$-DP guarantees provided by the moments accountant. With less added noise, conceivably, an optimizer would have a higher prediction accuracy.


Figure~\ref{fig:accuracy_boost1} illustrates the experimental results on MNIST. In the top two plots, we set the noise scales to $\sigma=1.3, \tilde\sigma=1.06$, which are both shown to give $(1.34, 10^{-5})$-DP at epoch 20 using the moments accountant and the \texttt{CLT} approach, respectively. The test accuracy associated with the \texttt{CLT} approach is almost always higher than that associated with the moments accountant. In addition, another benefit of taking the \texttt{CLT} approach is that it gives rise to stronger privacy protection before reaching epoch 20, as shown by the right plot. For the bottom plots, although the improvement in test accuracy at the end of training is less significant, the \texttt{CLT} approach leads to much faster convergence at early epochs. To be concrete, the numbers of epochs needed to achieve $95\%, 96\%$, and $97\%$ test accuracy are $18, 26$, and $45$, respectively, for the neural networks with less noise, whereas the numbers of epochs are $23,
33$, and $64$, respectively, using noise level that is computed by the moments accountant. In a similar vein, the moments accountant gives a test accuracy of 92\% for the first time when $\epsilon = 4$ and the \texttt{CLT} approach achieves 96\% under the same privacy budget.


\begin{figure}[!htb]
               \centering
	\includegraphics[width=8cm]{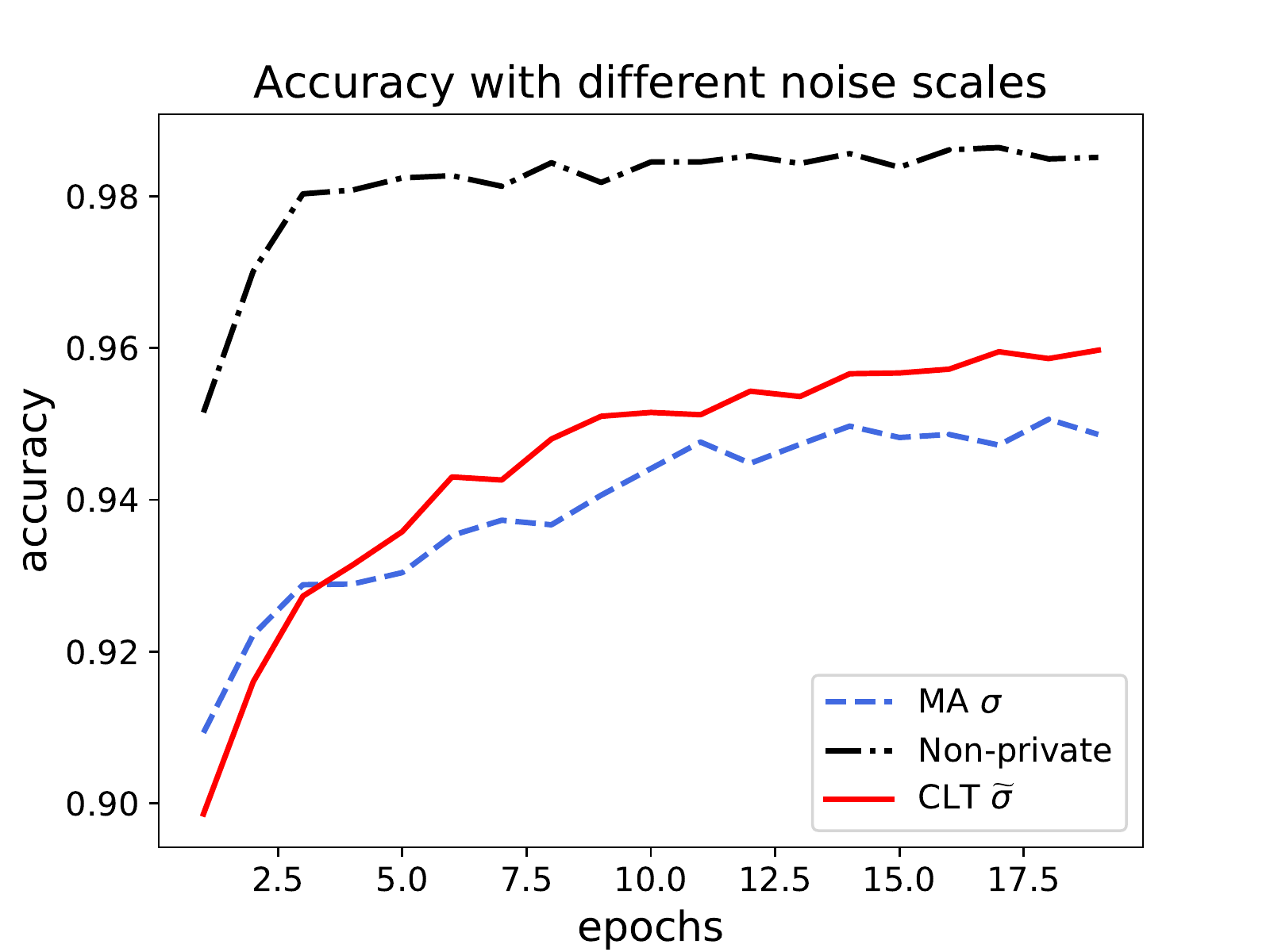}
	\hspace{-0.7cm}
	\includegraphics[width=8cm]{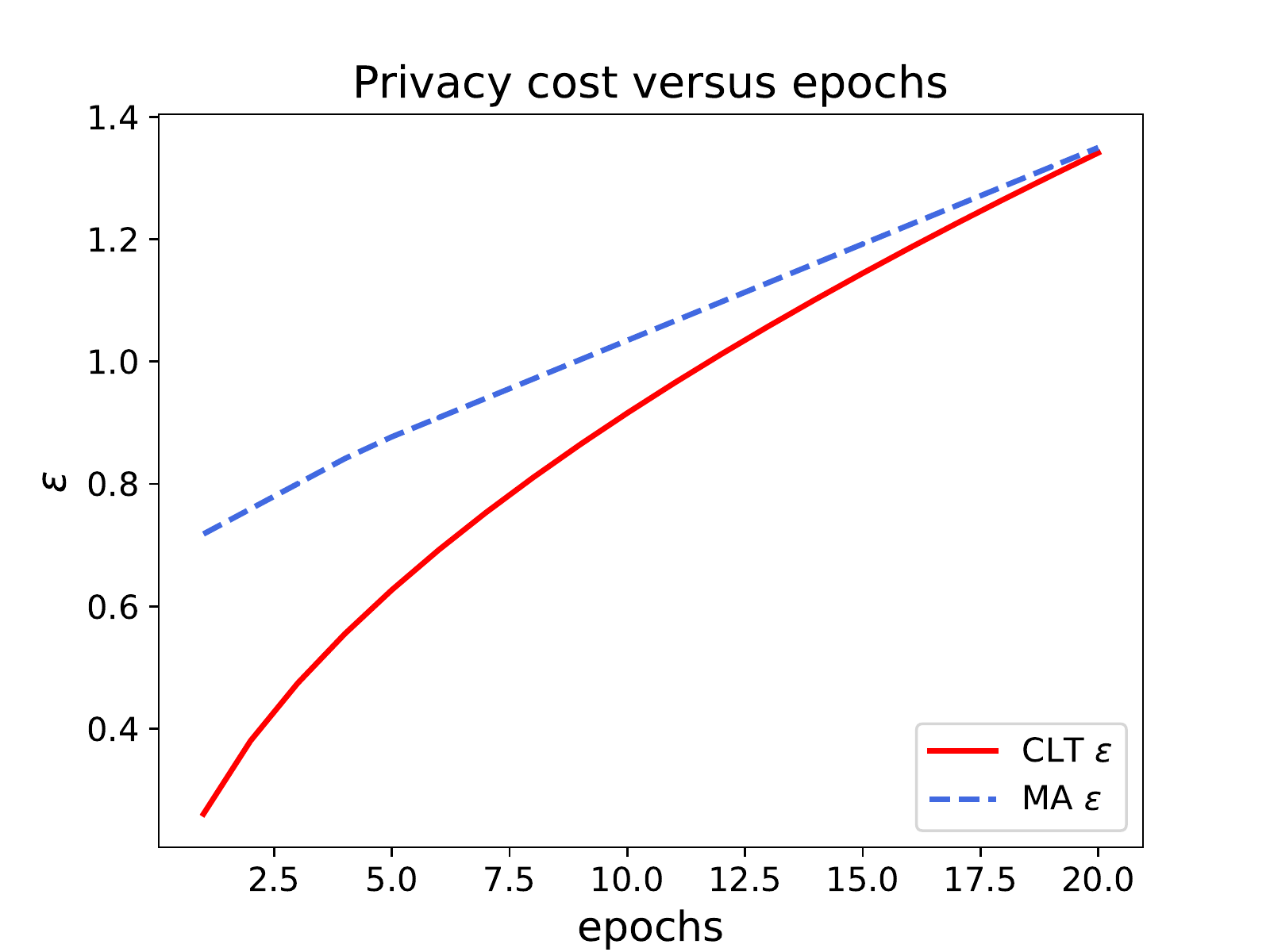}

	\includegraphics[width=8cm]{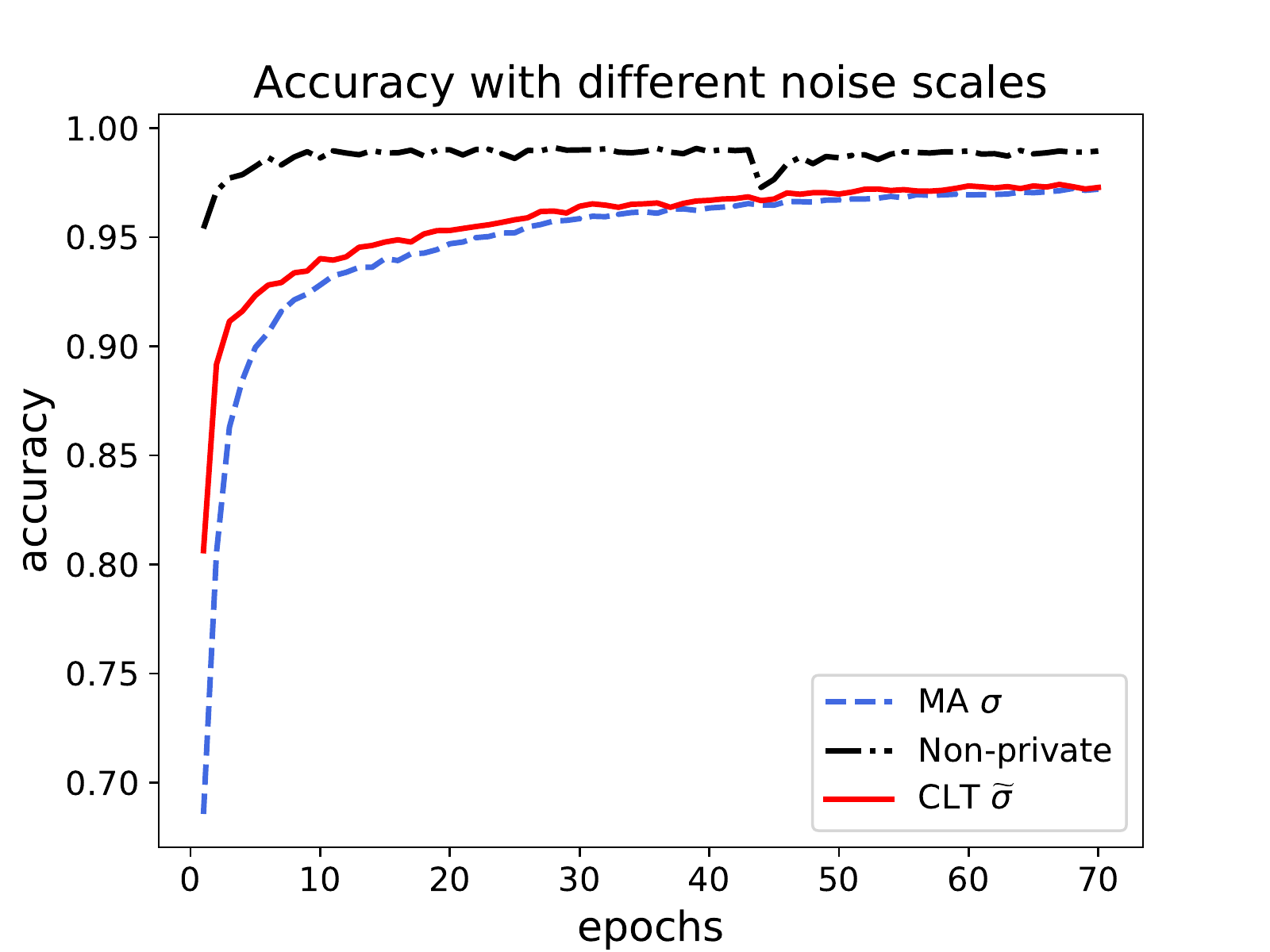}
	\hspace{-0.7cm}
	\includegraphics[width=8cm]{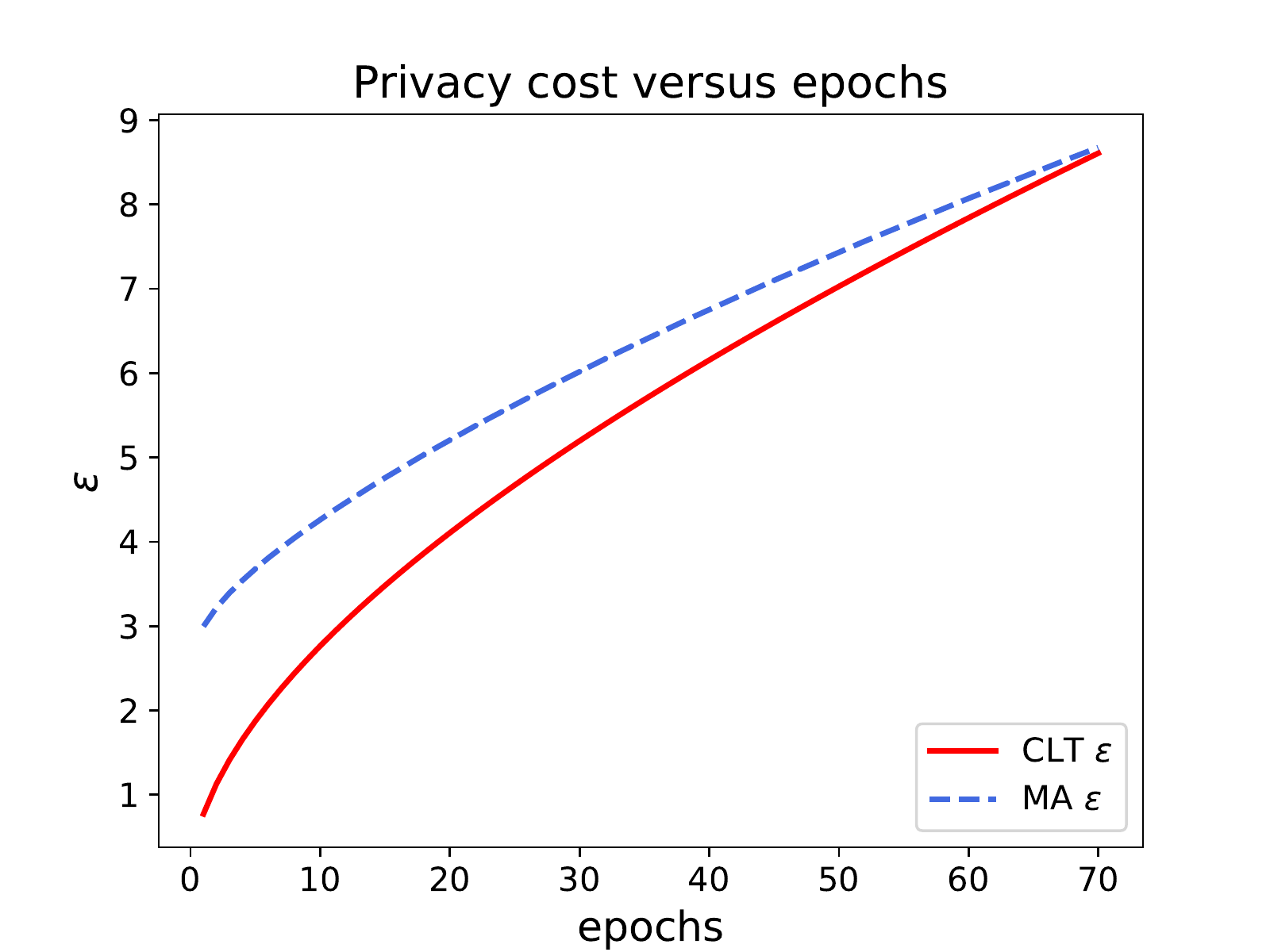}
	\caption{Experimental results from one run of \texttt{NoisySGD} on MNIST with different noise scales but the same $(\epsilon, \delta)$-DP guarantees. The top plots use $p=256/60000, \eta=0.15, R=1.5$, and $\sigma = 1.3, \tilde\sigma=1.06$. The \texttt{CLT} approach with $\tilde\sigma=1.06$ and the moments accountant with $\sigma=1.3$ give $(1.34, 10^{-5})$-DP at the 20th epoch ($\mu_{\mathtt{CLT}}=0.35$). The bottom plots use the same parameters except for $\sigma=0.7, \tilde\sigma=0.638$, and $\eta = 0.15$. Both approaches give $(8.68, 10^{-5})$-DP at epoch 70 ($\mu_{\mathtt{CLT}}=1.78$). The right plots show the privacy loss during the training process in terms of the $\epsilon$ spending with respect to $\delta=10^{-5}$.}
		\label{fig:accuracy_boost1}
\end{figure}

\section{Discussion}
\label{sec:discussion}


In this paper, we have showcased the use of $f$-DP, a very recently proposed privacy definition, for training private deep learning models using SGD or Adam. Owing to its strength in handling composition and subsampling and the powerful privacy central limit theorem, the $f$-DP framework allows for a closed-form privacy bound that is sharper than the one given by the moments accountant in the $(\epsilon, \delta)$-DP framework. By numerical experiments, we show that the trained neural networks can be quite private from the $f$-DP viewpoint (for instance, $1.13$-GDP\footnote{This means that undermining the privacy guarantee is harder than or of the same hardness as testing $H_0: \mu=0$ against $H_1: \mu=1.13$ based on the observation $\mu + \N(0,1)$.}) but are \textit{not} in the $(\epsilon, \delta)$-DP sense due to over conservative privacy bounds (for instance, $(7.10, 10^{-5})$-DP) computed in the $(\epsilon, \delta)$-DP framework. This in turn suggests that one can add less noise during
the training process while having the same privacy guarantees as using the moments accountant, thereby improving model utility.

We conclude this paper by offering several directions for future research. As the first direction, we may consider using time-dependent noise scales and learning rates in \texttt{NoisySGD} and \texttt{NoisyAdam} for a better tradeoff between privacy loss and utility in the $f$-DP framework. Note that \cite{lee2018concentrated} has made considerable progress using concentrated differential privacy along this line. More generally, a straightforward but interesting problem is to extend this work to complex neural network architectures with a variety of optimization strategies. For example, can we develop some guidelines for choosing an optimizer among \texttt{NoisySGD}, \texttt{NoisyAdam}, and others for a given classification problem under some privacy constraint? Empirically, deep learning models are very sensitive to hyperparameters such as mini-batch size in terms of test accuracy. Therefore, from a practical standpoint, it would be of great importance to incorporate
hyperparameter tuning into the $f$-DP framework~\cite{gupta2010differentially}. Inspired by \cite{lecuyer2019certified}, another interesting direction is to explore the possible relationship between $f$-DP guarantees and adversarial robustness of neural networks. Given $f$-DP's good interpretability and powerful toolbox, it is worthwhile investigating whether, from a broad perspective, its superiority over earlier differential privacy relaxations would hold in general private statistical and machine learning tasks. We look forward to more research efforts to further the theory and extend the use of $f$-DP.


\subsection*{Acknowledgments}
We are grateful to David Durfee, Ryan Rogers, Aaron Roth, and Qinqing Zheng for stimulating discussions in the early stages of this work. We would also like to thank two anonymous referees for their constructive comments that improved the presentation of the paper. This work was supported in part by NSF through CAREER DMS-1847415, CCF-1763314, and CCF-1934876, the Wharton Dean's Research Fund, and NIH through R01GM124111 and RF1AG063481.


{\small
\bibliographystyle{abbrvnat}
\bibliography{ref}
}

\clearpage
\appendix
\addcontentsline{toc}{section}{Appendices}
\newcommand{\MA}{\mathtt{MA}}
\newcommand{\nMA}{\mathtt{nMA}}
\newcommand{\Dual}{\mathtt{Dual}}
\newcommand{\GDP}{\mathtt{GDP}}
\newcommand{\GM}{\mathrm{GM}}

\section{Omitted Details in \Cref{sec:preliminaries}} 
\label{sec:numerical}


We present \Cref{eq:subsample} as the following proposition, which is given in \Cref{sec:preliminaries} but not in the foundational work \cite{dong2019gaussian}.


\begin{proposition}\label{prop:subsample}
If $M$ is $f$-DP, and $S'=S\cup\{x_0\}$, then
$$
T\big(M\circ \Sample_p(S), M\circ \Sample_p(S')\big)\geqslant pf+(1-p)\Id.
$$
\end{proposition}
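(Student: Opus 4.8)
The plan is to analyze the two sampling processes $\Sample_p(S)$ and $\Sample_p(S')$ by conditioning on whether the extra individual $x_0$ is included in the subsample. Since $S' = S \cup \{x_0\}$ and the Poisson subsampling includes $x_0$ independently with probability $p$, I can write the distribution of $M \circ \Sample_p(S')$ as a mixture: with probability $1-p$, the subsample contains no $x_0$ and is distributed exactly like $\Sample_p(S)$; with probability $p$, the subsample equals $\Sample_p(S) \cup \{x_0\}$, i.e., a fresh Poisson subsample of $S$ together with $x_0$. In distributional notation, writing $P$ for the law of $M \circ \Sample_p(S)$, $Q_0$ for the law of $M \circ \Sample_p(S)$ (the same thing, coupled on the common individuals), and $Q_1$ for the law of $M$ applied to $\Sample_p(S) \cup \{x_0\}$, I have that $M \circ \Sample_p(S)$ has law $P$ and $M \circ \Sample_p(S')$ has law $(1-p) Q_0 + p Q_1 = (1-p) P + p Q_1$.

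The key observation for the second step is that $\Sample_p(S)$ and $\Sample_p(S) \cup \{x_0\}$ are themselves neighboring datasets (they differ by the single individual $x_0$), \emph{conditionally on the realized subsample of $S$}. Hence, for each fixed realization of $\Sample_p(S)$, the $f$-DP guarantee of $M$ gives $T(M(\Sample_p(S)), M(\Sample_p(S)\cup\{x_0\})) \ge f$ pointwise. I then need to promote this conditional bound to a bound on the trade-off function between the \emph{marginal} distributions $P$ and $Q_1$, and from there to a bound involving the mixture $(1-p)P + p Q_1$. For the first promotion I would invoke joint convexity / the fact that trade-off functions behave well under mixtures — specifically, if $T(P_\omega, Q_\omega) \ge f$ for all $\omega$ then $T(\int P_\omega \, d\mu(\omega), \int Q_\omega \, d\mu(\omega)) \ge f$; this is a standard property established in \cite{dong2019gaussian} (the trade-off function of an average is at least the average of trade-off functions, and here they're all lower bounded by the same $f$). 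So $T(P, Q_1) \ge f$.

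The final step is to handle the $(1-p)$-vs-$p$ mixture. I want to show $T\big(P,\ (1-p)P + pQ_1\big) \ge p f + (1-p)\Id$. Given a rejection rule $\phi$ with $\E_P[\phi] \le \alpha$, the type II error against the mixture is $1 - \E_{(1-p)P + pQ_1}[\phi] = (1-p)(1 - \E_P[\phi]) + p(1 - \E_{Q_1}[\phi]) \ge (1-p)(1-\alpha) + p \cdot f(\E_P[\phi]) \ge (1-p)(1-\alpha) + p f(\alpha)$, where the last inequality uses $\E_P[\phi]\le \alpha$ together with monotonicity of $f$ — but one must be slightly careful since $f$ is decreasing, so $f(\E_P[\phi]) \ge f(\alpha)$ requires $\E_P[\phi] \le \alpha$, which holds. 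Taking the infimum over all such $\phi$ gives exactly $T(P, (1-p)P+pQ_1)(\alpha) \ge (1-p)(1-\alpha) + pf(\alpha) = p f(\alpha) + (1-p)\Id(\alpha)$, as desired. The main obstacle I anticipate is making the mixture/convexity manipulations fully rigorous — in particular, being careful that the coupling between $\Sample_p(S)$ and $\Sample_p(S')$ (sharing the randomness on the common individuals) is the right one so that the ``with probability $1-p$ it's literally $P$'' claim is exact rather than merely a bound, and ensuring the joint-convexity property of $T$ is applied in a form that is actually stated in \cite{dong2019gaussian}. The other direction (the $f_p^{-1}$ bound) would then follow by the same argument with the roles of $S$ and $S'$ swapped, using $f^{-1} = T(Q,P)$.
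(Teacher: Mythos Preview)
Your proposal is correct and follows essentially the same approach as the paper: both decompose $M\circ\Sample_p(S')$ by conditioning on whether $x_0$ is selected, index the remaining randomness by the Poisson subsample $\vec b\in\{0,1\}^n$ of $S$, and use convexity of $f$ (Jensen) to pass from the per-$\vec b$ neighbor guarantee $T\big(M(S_{\vec b}),M(S_{\vec b}\cup\{x_0\})\big)\geqslant f$ to the mixture bound. The only organizational difference is that the paper packages the whole argument into a single lemma showing $T\big(\sum\theta_i P_i,\ (1-p)\sum\theta_i P_i+p\sum\theta_i Q_i\big)\geqslant pf+(1-p)\Id$ directly, whereas you split it into two steps---first $T(P,Q_1)\geqslant f$ via the mixture-preservation property, then the elementary $(1-p)P+pQ_1$ computation---but the content is the same.
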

\begin{proof}
	We first write the two distributions $M\circ \Sample_p(S)$ and $M\circ \Sample_p(S')$ as mixtures.

	Without loss of generality, we can assume $S=\{x_1,\ldots, x_n\}$ and $S'=\{x_0,x_1,\ldots, x_n\}$. An outcome of the process $\Sample_p$ when applied to $S$ is a bit string $\vec{b} = (b_1,\ldots, b_n)\in \{0,1\}^n$. Bit $b_i$ dependes on whether $x_i$ is selected into the subsample. We use $S_{\vec{b}}\subseteq S$ to denote the subsample determined by $\vec{b}$. When each $b_i$ is sampled from a Bernoulli$(p)$ distribution independently, $S_{\vec{b}}$ can be identified with $\Sample_p(S)$.  Let $\theta_{\vec{b}}$ be the probability that $\vec{b}$ appears. More specifically, if $k$ out of $n$ entries of $\vec{b}$ is one, then $\theta_{\vec{b}}=p^k(1-p)^{n-k}$. With this notation, $M\circ \Sample_p(S)$ can be written as the following mixture:
	\[M\circ \Sample_p(S) = \sum_{\vec{b}\in\{0,1\}^n} \theta_{\vec{b}}\cdot M(S_{\vec{b}}).\]
	Similarly, $M\circ \Sample_p(S)$ can also be written as a mixture, with an additional bit indicating the presence of $x_0$. Alternatively, we can divide the components into two groups: one with $x_0$ present, and the other with $x_0$ absent. Namely,
	\[M\circ \Sample_p(S') = \sum_{\vec{b}\in\{0,1\}^{n}} p\cdot\theta_{\vec{b}}\cdot M(S_{\vec{b}}\cup \{x_0\})+\sum_{\vec{b}\in\{0,1\}^{n}} (1-p)\cdot\theta_{\vec{b}}\cdot M(S_{\vec{b}}).\]
	Note that $S_{\vec{b}}\cup \{x_0\}$ and $S_{\vec{b}}$ are neighbors, i.e.~$M\circ \Sample_p(S')$ is the mixture of neighboring distributions. The following lemma is the perfect tool to deal with it.
	\begin{lemma}\label{lem:mixture}
		Let $I$ be an index set. For all $i\in I$, $P_i$ and $Q_i$ are distributions that reside on a common sample space. $(\theta_i)_{i\in I}$ is a collection of non-negative numbers that sums to 1.
		If $f$ is a trade-off function and $T(P_i,Q_i)\geqslant f$ for all $i$, then 
		\[T\big(\sum \theta_i P_i, (1-p)\sum \theta_i P_i +p\sum \theta_i Q_i\big)\geqslant p f+ (1-p)\Id.\]
	\end{lemma}
	To apply the lemma, let the index be $\vec{b}\in\{0,1\}^n$, $P_i$ be $M(S_{\vec{b}})$ and $Q_i$ be $M(S_{\vec{b}}\cup \{x_0\})$. Condition $T(P_i,Q_i)\geqslant f$ is the consequence of $M$ being $f$-DP. The conclusion simply translates to
	$$T\big(M\circ \Sample_p(S), M\circ \Sample_p(S')\big)\geqslant pf+(1-p)\Id,$$
	which is what we want. The proof is complete.
\end{proof}
\begin{proof}[Proof of \Cref{lem:mixture}]
	Let $P = \sum \theta_i P_i$ and $Q = (1-p)\sum \theta_i P_i +p\sum \theta_i Q_i$. Suppose $\phi$ satisfies $\E_P \phi=\alpha$. That is,
	\[\sum \theta_i \E_{P_i}\phi = \alpha.\]
	It is easy to see that
	\begin{align*}
		\E_Q \phi = (1-p)\alpha+p \sum \theta_i \E_{Q_i}\phi.
	\end{align*}
	We know that $T(P_i,Q_i)\geqslant f$. Hence $\E_{Q_i}\phi\leqslant 1-f(\E_{P_i}\phi)$. So
	\begin{align*}
		\sum \theta_i \E_{Q_i}\phi \leqslant 1-\sum \theta_i f(\E_{P_i}\phi).
	\end{align*}
	Since $f$ is convex, Jensen's inequality implies
	\[\sum \theta_i f(\E_{P_i}\phi)\geqslant f(\sum \theta_i \E_{P_i}\phi)=f(\alpha).\]
\end{proof}


Next we use a figure to justify the claim we made in \Cref{sub:composition} that ``CLT approximation works well for SGD''. Recall that we argued in \Cref{sec:implementation} that \Cref{alg:dpsgd1,alg:dpsgd2} are $\min\{f,f^{-1}\}^{**}$-DP where
$$
f = \big(pG_{1/\sigma}+(1-p)\Id\big)^{\otimes T}.
$$
This function converges to $G_\mu$ with $\mu = \nu\sqrt{\e^{1/\sigma^2}-1}$ as $T\to\infty$ provided $p\sqrt{T}\to\nu$. In the following figure, we numerically compute $f$ (blue dashed) and compare it with the predicted limit $G_\mu$ (red solid). More specifically, the configuration is designed to illustrate the fast convergence in the setting of the second line of \Cref{tab:tradeoff}, i.e. noise scale $\sigma =1.1$, final GDP parameter $\mu=0.57$ and test accuracy 96.6\%. Originally the algorithm runs 60 epochs, i.e. $\approx$ 14k iterations. To best illustrate that convergence appears in early stage, the numerical evaluation uses a much smaller $T_{\textup{numeric}}=234$, i.e. only \emph{one} epoch. In order to make the final limit consistent, we also enlarge the sample probability to $p_{\textup{numeric}}$ so that $p_{\textup{numeric}}\cdot \sqrt{T_{\textup{numeric}}}$ remain the same.
\begin{figure}[!htb]
	\centering
	\includegraphics[width=0.5\textwidth]{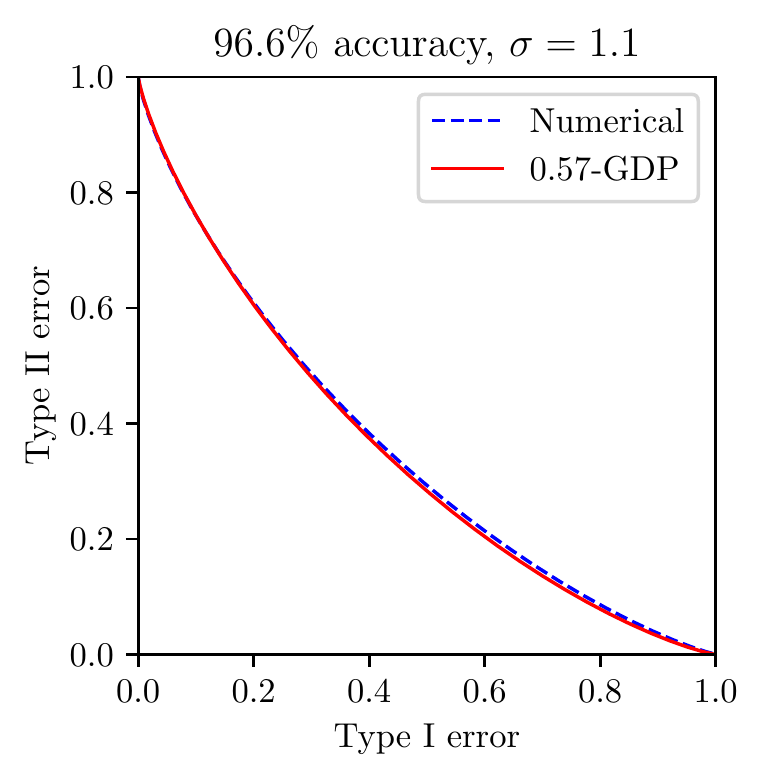}
	\caption{$\big(pG_{1/\sigma}+(1-p)\Id\big)^{\otimes T}$ (blue dashed) is numerically computed and compared with the GDP limit (red solid) predicted by CLT. The two are almost identical at merely epoch one.}
	\label{fig:necessary}
\end{figure}

We have to remark that when $\sigma$ is small, $\mu = \nu\sqrt{\e^{1/\sigma^2}-1}$ gets large and yields challenges in the numerical computation of $\big(pG_{1/\sigma}+(1-p)\Id\big)^{\otimes T}$. We leave rigorous and complete study to future work.

\section{Omitted Details in \Cref{sec:implementation}}
\label{sec:omitt-deta-crefs}
\subsection{Privacy Property of \Cref{alg:dpsgd1,alg:dpsgd2}} 
\label{sub:privacy_property_of_alg:dpsgd1,alg:dpsgd2}

\begin{theorem} \label{thm:}
	\Cref{alg:dpsgd1,alg:dpsgd2} are both $\min\{f,f^{-1}\}^{**}$-DP with $f = \big(pG_{1/\sigma}+(1-p)\Id\big)^{\otimes T}$. 
\end{theorem}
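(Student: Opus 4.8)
The plan is to assemble the statement from three ingredients already developed in the paper: (i) the per-iteration GDP guarantee of the Gaussian mechanism, (ii) the subsampling theorem for $f$-DP, and (iii) the composition theorem for $f$-DP. First I would fix a single iteration and let $M$ denote the map $\theta_t \mapsto \theta_{t+1}$ \emph{before} subsampling, i.e. the Gaussian mechanism applied to the clipped gradient sum $\sum_{i \in I_t} \bar v_t^{(i)}$. Because each clipped gradient has $\ell_2$ norm at most $R$, adding or removing one individual changes this sum by at most $R$ in $\ell_2$ norm, so the statistic has sensitivity $R$; the Gaussian mechanism with noise standard deviation $\sigma R$ is then $\tfrac1\sigma$-GDP, i.e. $G_{1/\sigma}$-DP. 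Division by $|I_t|$ and subtraction from $\theta_t$ are post-processing (and for \Cref{alg:dpsgd2}, forming $m_t, u_t, w_t$ and updating $\theta_{t+1}$ are likewise deterministic post-processing of the noisy gradient together with already-released quantities), so they incur no additional privacy cost.

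Next I would apply \Cref{prop:subsample}: since one iteration actually runs $M$ on the Poisson subsample $\Sample_p(S)$, and $M$ is $G_{1/\sigma}$-DP, the subsampled iteration satisfies $T\big(M\circ\Sample_p(S), M\circ\Sample_p(S')\big) \ge p\,G_{1/\sigma} + (1-p)\Id$ whenever $S' = S \cup \{x_0\}$, and symmetrically $T\big(M\circ\Sample_p(S'), M\circ\Sample_p(S)\big) \ge (p\,G_{1/\sigma}+(1-p)\Id)^{-1}$ using that the inverse of a trade-off function corresponds to swapping the two hypotheses. Then I would invoke the composition theorem for $f$-DP stated in \Cref{sub:composition}: the iterations are run sequentially, each iteration is $(p\,G_{1/\sigma}+(1-p)\Id)$-DP (resp.\ its inverse) \emph{conditionally} on the outputs of all previous iterations — this conditional statement is exactly what the composition theorem needs, and it holds because the per-iteration analysis above did not use anything about $\theta_t$ beyond the sensitivity bound, which is uniform. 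Hence the $T$-fold composition has trade-off function at least $\big(p\,G_{1/\sigma}+(1-p)\Id\big)^{\otimes T} = f$ in one direction and at least $f^{-1}$ in the other direction, using that $\otimes$ commutes with taking inverses of trade-off functions (equivalently, that composing the swapped experiments is the swap of the composed experiment).

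Finally I would combine the two directions. The trade-off function of the whole algorithm on an arbitrary pair of neighbors is lower bounded by $\min\{f, f^{-1}\}$ pointwise; since a trade-off function is always convex, it is in fact lower bounded by the greatest convex minorant $\min\{f,f^{-1}\}^{**}$, which is itself a legitimate trade-off function. This yields $\min\{f,f^{-1}\}^{**}$-DP, as claimed, and the same chain of reasoning applies verbatim to \Cref{alg:dpsgd2} once the post-processing remark is in place.

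\textbf{Main obstacle.} The only delicate point is the \emph{conditional} $f$-DP statement required to feed the composition theorem: one must check that for every fixed history of released iterates $(\theta_1, \dots, \theta_t)$, the conditional distribution of the next iterate is still a subsampled Gaussian mechanism with sensitivity exactly $R$. This is where the clipping step is essential, and where I would be careful to note that the subsample $I_t$ is freshly drawn at each step and never released, so the Poisson-subsampling amplification applies independently per iteration; the rest is bookkeeping with the algebraic identities $f \mapsto f^{-1}$ and $\otimes$ that are quoted in \Cref{sub:composition}.
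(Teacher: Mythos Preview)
Your proposal is correct and follows the same three-step structure as the paper (per-iteration Gaussian mechanism $\Rightarrow$ subsampling amplification $\Rightarrow$ composition, then symmetrize and take the biconjugate). There is, however, one subtlety the paper flags explicitly that you gloss over: after subsampling, a single iteration is \emph{not} $f_p$-DP in the two-sided sense of \Cref{def:fDP}, because \Cref{prop:subsample} only gives $T\big(\widetilde M(S),\widetilde M(S')\big)\ge f_p$ when $S'=S\cup\{x_0\}$; the reverse order is bounded by $f_p^{-1}$, not $f_p$. Consequently the composition theorem quoted in \Cref{sub:composition} (which assumes each $M_t$ is $f_t$-DP) does not apply verbatim. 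The paper handles this by stating a one-sided ``Refined Composition'' theorem: if $T\big(M_1(S),M_1(S')\big)\ge f$ and $T\big(M_2(S,y),M_2(S',y)\big)\ge g$ for all $y$ and all $S'=S\cup\{x_0\}$, then the composition satisfies the same one-sided bound with $f\otimes g$. This is then applied $T$ times to get $T\big(\texttt{NoisySGD}(S),\texttt{NoisySGD}(S')\big)\ge f$, and the other direction $\ge f^{-1}$ follows by inversion, exactly as you conclude.

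Your parenthetical ``(resp.\ its inverse)'' and the remark that $\otimes$ commutes with inversion essentially sketch the content of this refined theorem, so the gap is expository rather than mathematical. But your ``Main obstacle'' paragraph misidentifies the delicate point: the conditional-on-history issue is routine (the sensitivity bound is uniform in $\theta_t$, as you note), whereas the real care is needed in justifying composition from a \emph{one-sided} per-iteration guarantee. If you make that step explicit, your proof matches the paper's.
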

\begin{proof}
	The proof is mostly done in the main text, except the composition step. Let $V$ be the vector space that all $\theta_t$ live in and $\widetilde{M}=M\circ \Sample_p: X^n\times V\to V$ be the gradient update. We have already proved (using \Cref{prop:subsample}) that for both \Cref{alg:dpsgd1,alg:dpsgd2}, if $S'=S\cup\{x_0\}$, then $\widetilde{M}$ satisfies
	$$T\big(M(S),M(S')\big)\geqslant f_p:=pG_{1/\sigma}+(1-p)\Id.$$
	Note that we cannot say $M$ is $f_p$-DP because $T\big(M(S'),M(S)\big)$ is not necessarily lower bounded by $f_p$. So we need a more specific composition theorem than stated in \cite{dong2019gaussian}.
	\begin{theorem}[Refined Composition] \label{thm:}
		Suppose $M_1:X\to Y, M_2:X\times Y \to Z$ satisfy the following conditions for any $S,S'$ such that $S'=S\cup\{x_0\}$:
		\begin{enumerate}
			\item 
	$T\big(M_1(S),M(S')\big)\geqslant f$;
	\item
	$T\big(M_2(S,y),M_2(S',y)\big)\geqslant g$ for any $y\in Y$.
		\end{enumerate}
		Then the composition $M_2\circ M_1: X\to Y\times Z$ satisfies
	$$T\big(M_2\circ M_1(S),M_2\circ M_1(S')\big)\geqslant f\otimes g$$
	for any $S,S'$ such that $S'=S\cup\{x_0\}$.
	\end{theorem}
	The theorem can be identically proved as Theorem 3.2 in \cite{dong2019gaussian}.

	Taking \Cref{alg:dpsgd1} as an example, since
	\begin{align*}
		\mathrm{\texttt{NoisySGD}}:X^n&\to V\times V\times\cdots\times V\\
		S&\mapsto(\theta_1,\theta_2,\ldots, \theta_T)
	\end{align*}
is simply the composition of $T$ copies of $\widetilde{M}$, the above composition theorem implies that
\[T\big(\mathrm{\texttt{NoisySGD}}(S),\mathrm{\texttt{NoisySGD}}(S')\big)\geqslant\big(pG_{1/\sigma}+(1-p)\Id\big)^{\otimes T}=f.\]
Moreover, 
$T\big(\mathrm{\texttt{NoisySGD}}(S),\mathrm{\texttt{NoisySGD}}(S')\big)\geqslant f^{-1}$. The two inequality let us conclude that any trade-off function of neighboring distributions must be lower bounded by at least one of $f$ and $f^{-1}$, hence $\min\{f,f^{-1}\}$, hence $\min\{f,f^{-1}\}^{**}$. In other words, \texttt{NoisySGD} is $\min\{f,f^{-1}\}^{**}$-DP.

For \texttt{NoisyAdam}, we argued that its privacy property is the same as \texttt{NoisySGD} in each iteration, so the above argument also applies, and we have the same conclusion.
\end{proof}

\subsection{Justifying CLT for \Cref{alg:dpsgd1,alg:dpsgd2}} 
\label{sub:unjustified_claim_in_sec:noisy-sgd}

\newcommand{\kl}{\mathrm{kl}}
The main purpose of this section is to show the following theorem
\begin{theorem} \label{thm:}
Suppose $p$ depends on $T$ and $p\sqrt{T}\to \nu$. Then we have the following uniform convergence as $T\to\infty$
	$$
	\big(pG_{1/\sigma}+(1-p)\Id\big)^{\otimes T} = G_\mu,
	$$
	where $\mu = \nu\cdot \sqrt{T(\e^{1/\sigma^2}-1)}$.
\end{theorem}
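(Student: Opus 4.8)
The plan is to derive this as an instance of the central limit theorem for $f$-DP (Theorem~6 of \cite{dong2019gaussian}), applied to the triangular array consisting of $T$ identical copies of the single-step trade-off function $f_p:=pG_{1/\sigma}+(1-p)\Id$. The first step is to realize $f_p$ concretely as $T(P_p,Q_p)$ with $P_p=\N(0,1)$ and $Q_p=(1-p)\N(0,1)+p\,\N(1/\sigma,1)$. Indeed the likelihood ratio $\mathrm{d}Q_p/\mathrm{d}P_p(x)=(1-p)+p\,\e^{x/\sigma-1/(2\sigma^2)}$ is increasing in $x$, so by the Neyman--Pearson lemma the optimal level-$\alpha$ test of $P_p$ against $Q_p$ rejects on a right tail $\{x>z_\alpha\}$; a direct computation of its type~II error gives $(1-p)(1-\alpha)+pG_{1/\sigma}(\alpha)=f_p(\alpha)$, hence $f_p=T(P_p,Q_p)$. (The overall strategy mirrors the analysis of \texttt{NoisySGD} in Section~5 of \cite{dong2019gaussian}; the use of Poisson rather than uniform subsampling is exactly what makes $f_p$ take this clean mixture form.)

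The second and main step is to expand, as $p\to 0$, the privacy functionals that feed the $f$-DP CLT: the divergence $\kl(f_p)$ and the second and third absolute moments of the log-likelihood ratio $\ell_p:=\log\tfrac{\mathrm{d}P_p}{\mathrm{d}Q_p}$ under $P_p$, together with the analogous quantities with the roles of $P_p,Q_p$ swapped. Writing $L(x):=\e^{x/\sigma-1/(2\sigma^2)}$ so that $\ell_p(x)=-\log(1+p(L(x)-1))$, the elementary Gaussian identities $\E_{P_p}[L]=1$, $\E_{P_p}[L^2]=\e^{1/\sigma^2}$ give $\E_{P_p}[(L-1)^2]=\e^{1/\sigma^2}-1$, and similarly $\E_{\N(1/\sigma,1)}[L]=\e^{1/\sigma^2}$. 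Expanding $-\log(1+u)$ around $u=0$ with $u=p(L(x)-1)$ and integrating then yields
\[
\kl(f_p)=\tfrac{p^2}{2}\big(\e^{1/\sigma^2}-1\big)+O(p^3),\qquad
\E_{P_p}[\ell_p^2]=p^2\big(\e^{1/\sigma^2}-1\big)+O(p^3),\qquad
\E_{P_p}[|\ell_p|^3]=O(p^3),
\]
and the same leading behaviour for the swapped quantities (there the $\N(1/\sigma,1)$ component of $Q_p$ contributes $p\cdot p(\e^{1/\sigma^2}-1)$, which combines with the $-\tfrac{p^2}{2}(\e^{1/\sigma^2}-1)$ coming from the $\N(0,1)$ component to give again $\tfrac{p^2}{2}(\e^{1/\sigma^2}-1)$). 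Justifying these expansions needs a little care because $L$ is unbounded: one splits each integral at $x\asymp\sigma\log\tfrac1p$ (where $p(L(x)-1)$ becomes of order one), applies the Taylor estimates on the bulk, and on the far tail uses the crude bound $|\ell_p(x)|\le\log\tfrac1p+|x|/\sigma+1/(2\sigma^2)$ together with Gaussian tail bounds to see that the tail contributes only super-polynomially small corrections.

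The final step is bookkeeping. Since $p\sqrt T\to\nu$ we have $T\sim\nu^2/p^2$, so
\[
T\,\kl(f_p)\to\tfrac{\nu^2}{2}\big(\e^{1/\sigma^2}-1\big)=\tfrac{\mu^2}{2},\qquad
T\,\E_{P_p}[\ell_p^2]\to\nu^2\big(\e^{1/\sigma^2}-1\big)=\mu^2,\qquad
T\,\E_{P_p}[|\ell_p|^3]\to 0,
\]
with $\mu=\nu\sqrt{\e^{1/\sigma^2}-1}=\lim_{T\to\infty}p\sqrt{T(\e^{1/\sigma^2}-1)}$, while the uniform-smallness condition $\max_t\kl(f_p)=\kl(f_p)\to 0$ is immediate. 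These are precisely the hypotheses under which the $f$-DP CLT certifies that $f_p^{\otimes T}=\big(pG_{1/\sigma}+(1-p)\Id\big)^{\otimes T}$ converges to $G_\mu$, and since each $f_p^{\otimes T}$ and the limit $G_\mu$ are continuous, decreasing functions on $[0,1]$ with matching endpoint values, pointwise convergence automatically upgrades to uniform convergence on $[0,1]$ (alternatively this follows from the Berry--Esseen refinement, Theorem~5 of \cite{dong2019gaussian}).

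I expect the genuine difficulty to be confined to the second step: the log-likelihood ratio carries the unbounded factor $L(x)=\e^{x/\sigma-1/(2\sigma^2)}$, so one cannot naively interchange the Taylor expansion with the expectation, and the delicate point is to control, uniformly in $p$, the region $x\gtrsim\sigma\log\tfrac1p$ and show it contributes only $o(p^2)$ to all the functionals involved. Once this tail control is in place, all three CLT conditions reduce to the single $\chi^2$-type computation $\E_{P_p}[(L-1)^2]=\e^{1/\sigma^2}-1$, and the conclusion follows.
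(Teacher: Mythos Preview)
Your strategy is essentially the paper's, carried out directly on the Gaussian case rather than through the intermediate general result the paper proves first (namely that $f_p^{\otimes T}\to G_{\nu\sqrt{\chi^2(f)}}$ for any trade-off function $f$ with $\int_0^1(f'+1)^4<\infty$, where $\chi^2(f)=\int_0^1(f')^2\,\diff x-1$, followed by the computation $\chi^2(G_{1/\sigma})=\e^{1/\sigma^2}-1$). Both routes reduce to the same second-order identity $\E_{\N(0,1)}[(L-1)^2]=\e^{1/\sigma^2}-1$. The paper handles the interchange of limit and integral via dominated convergence in the variable $g(x)=|f'(x)|-1$ on $[0,1]$, which sidesteps your explicit splitting at $x\asymp\sigma\log\tfrac1p$; your more hands-on route is correct but buys nothing extra here, while the paper's abstraction yields the same CLT for any subsampled mechanism, not just the Gaussian one.

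There is, however, a real gap in the invocation. Theorem~6 of \cite{dong2019gaussian} is stated only for \emph{symmetric} trade-off functions, and $f_p=pG_{1/\sigma}+(1-p)\Id$ is not symmetric (indeed $f_p^{-1}\neq f_p$; this asymmetry is precisely why the subsampling bound in the main text involves $\min\{f_p,f_p^{-1}\}^{**}$). The paper is explicit about this and formulates a separate asymmetric CLT, whose hypotheses pair each of the functionals $\kl,\kappa_2,\kappa_3$ with a ``tilde'' counterpart obtained by weighting the integrand by $|f_p'|$ (equivalently, by taking expectations under $Q_p$ rather than $P_p$), and whose limit parameter is $K/s$ with $K=\lim T\big(\kl(f_p)+\tilde{\kl}(f_p)\big)$ and $s^2=\lim T\kappa_2(f_p)=\lim T\tilde{\kappa}_2(f_p)$. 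Since you already compute the swapped quantities and observe that $\tilde{\kl}(f_p)$ has the same $\tfrac{p^2}{2}(\e^{1/\sigma^2}-1)$ leading term, your bookkeeping is in fact compatible with the asymmetric statement; what is missing is the acknowledgement that Theorem~6 does not apply as written, and that you are really relying on its asymmetric extension (whose proof, the paper remarks, parallels the symmetric case).
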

This theorem is the corollary of the following more general CLT on composition of subsample mechanisms and \Cref{lem:chi_G} below.
\begin{theorem}\label{thm:chi_square}
	Suppose $f$ is a trade-off function such that (1) $f(0)=1$, (2) $f(x)>0$, for all $x<1$ and (3) $\int_0^1(f'(x)+1)^4\diff x<+\infty$. Let $f_p = pf+(1-p)\Id$ as usual. Furthermore, assume $p\sqrt{T} \to \nu$ as $T\to\infty$ for some constant $\nu > 0$. Then we have the uniform convergence
	$$f_p^{\otimes T}\to G_{\nu\sqrt{\chi^2(f)}}$$
as $T \to \infty$, where $\chi^2(f) = \int_0^1 \big(f'(x)\big)^2\diff x-1$.
\end{theorem}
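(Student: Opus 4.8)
The plan is to deduce Theorem~\ref{thm:chi_square} from the central limit theorem for $f$-DP (Theorem~6 in~\cite{dong2019gaussian}), which states that if one composes $T$ trade-off functions whose relevant ``moment'' functionals are small and comparable, then the composition converges uniformly to $G_\mu$ with $\mu$ determined by the accumulated functionals. Here all $T$ factors are identical copies of $f_p = p f + (1-p)\Id$, so the situation is the i.i.d.\ case and the functionals to control are those attached to the single trade-off function $f_p$. First I would recall the precise hypotheses of the $f$-DP CLT: one needs the log-likelihood-ratio functionals (the analogues of mean, variance, and third absolute moment of the privacy loss random variable, often written in terms of $\kl$, a second-moment quantity, and a higher-moment remainder) associated with $f_p$ to satisfy a Lindeberg-type smallness condition as $T \to \infty$, and one needs $\sum_{t=1}^T (\text{second-moment functional of } f_p)$ to converge to a finite limit $\mu^2$.

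The key computation is then to expand these functionals of $f_p = pf + (1-p)\Id$ in the regime $p \to 0$ with $p\sqrt{T} \to \nu$. The mixing with $\Id$ (perfect privacy) means $f_p$ is $1-\alpha$ except on an event of probability $O(p)$, so each first-order functional of $f_p$ is $O(p)$ and each second-order functional is $O(p)$ as well, with leading coefficient governed by the corresponding functional of $f$ alone. Specifically I expect the second-moment functional of $f_p$ to satisfy $(\text{functional})(f_p) = p \cdot \chi^2(f) + o(p)$, where $\chi^2(f) = \int_0^1 (f'(x))^2\,\diff x - 1$ emerges precisely because the relevant quadratic functional of a trade-off function $f = T(P,Q)$ is (up to the right normalization) the $\chi^2$-divergence between $Q$ and $P$, and $\chi^2(P,Q) = \int (\diff Q/\diff P)^2\,\diff P - 1$ can be rewritten as $\int_0^1 (f'(x))^2 \diff x - 1$ via the change of variables $\alpha = $ the type I error of the likelihood-ratio test. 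Summing over $T$ factors gives $T \cdot p \cdot \chi^2(f) + o(Tp)$. Since $p\sqrt{T} \to \nu$, write $Tp = (p\sqrt{T})\cdot \sqrt{T}\cdot \frac{1}{\sqrt T}\cdot \sqrt{T}$... more cleanly: $Tp = (\sqrt{T}p)^2 \cdot (1/p) \cdot p = $ hmm; the clean statement is $T p \to \nu^2$ is false, rather $\sqrt{T}p \to \nu$ gives $Tp^2 \to \nu^2$, so one must check that it is $Tp$, not $Tp^2$, that appears --- and indeed since the first-moment functional is $O(p)$ while we need $\sqrt{T}\cdot O(p) \to 0$ and the second-moment times $T$ to converge, the bookkeeping forces $\mu^2 = \lim T p\,\chi^2(f)$; reconciling this with the stated answer $\mu = \nu\sqrt{T(\e^{1/\sigma^2}-1)}$ (which is written with an explicit $T$!) indicates the intended reading is $\mu^2 = \nu^2 \chi^2(f)$ after substituting, and the displayed formula in the theorem statement is using $\nu \approx p\sqrt T$ so that $\nu^2\chi^2(f) = p^2 T \chi^2(f)$, i.e.\ the ``$T$'' in the statement is really $p\sqrt T \approx \nu$ squared over $p^2$. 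I would write $\mu = \nu\sqrt{\chi^2(f)}$ as the clean limit and note the per-iteration identity $\chi^2(G_{1/\sigma}) = \e^{1/\sigma^2}-1$ to recover the NoisySGD corollary.

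After the functional expansion, the remaining steps are routine verifications: (i) check the Lindeberg/Berry--Esseen-type remainder condition, which follows from hypothesis~(3), $\int_0^1 (f'(x)+1)^4\,\diff x < \infty$ — this finiteness is exactly what bounds the third-moment (or fourth-moment) functional of $f$ and hence, after the $O(p)$ scaling and multiplication by $T$, makes the Lyapunov ratio vanish like $T p / (Tp)^{3/2} = (Tp)^{-1/2} \to 0$; (ii) check hypotheses~(1) $f(0)=1$ and~(2) $f(x)>0$ for $x<1$ guarantee $f$ is a genuine ``nontrivial symmetric-enough'' trade-off function so that the likelihood-ratio change of variables and the divergence identities are valid (these rule out degenerate $f$ with atoms at the endpoints that would break the moment interpretation); and (iii) invoke the $f$-DP CLT to conclude uniform convergence $f_p^{\otimes T} \to G_\mu$. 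Then apply this with $f = G_{1/\sigma}$ and compute $\chi^2(G_{1/\sigma})$ directly: with $P = \N(0,1)$, $Q = \N(1/\sigma,1)$ (after appropriate scaling, noise scale $\sigma$ gives privacy parameter $1/\sigma$), $\chi^2(\N(\mu,1),\N(0,1)) = \int \frac{q(x)^2}{p(x)}\,\diff x - 1 = \e^{\mu^2} - 1$, so $\chi^2(G_{1/\sigma}) = \e^{1/\sigma^2} - 1$, which finishes Theorem~\ref{thm:} (the NoisySGD/NoisyAdam statement).

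\textbf{Main obstacle.} The crux is the functional expansion in the second paragraph: rigorously showing that the privacy-loss functionals of the \emph{mixture} $f_p = pf + (1-p)\Id$ — whose derivative is $f_p'(x) = p f'(x/p \wedge 1)$-ish, with a kink and a flat piece — expand cleanly as $p \cdot (\text{functional of } f) + o(p)$, uniformly enough that the sum over $T$ behaves, and with error terms controlled by the fourth-moment hypothesis~(3). One has to be careful that $f_p$ is supported (nontrivially) only on $[0,p]$-type scales, so the change of variables and the divergence-representation identities must be applied to $f_p$, not $f$, and the vanishing of the third-order term requires the $L^4$ bound on $f'+1$ precisely because the third power alone need not be integrable against the measure $f_p$ without it. I expect this is where the bulk of the (deferred) technical work lies, essentially reducing to a Berry--Esseen estimate for a triangular array of i.i.d.\ privacy-loss random variables with variances shrinking at rate $p \to 0$ and count growing at rate $T$ with $Tp^2 \to \nu^2$.
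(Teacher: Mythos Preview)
Your overall strategy---apply the $f$-DP central limit theorem to the i.i.d.\ triangular array $(f_p,\ldots,f_p)$ and expand the relevant functionals in $p$---is exactly what the paper does. But two concrete things go wrong in your execution.

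\textbf{(1) The CLT you cite does not apply.} Theorem~6 of~\cite{dong2019gaussian} is stated for \emph{symmetric} trade-off functions, and $f_p = pf + (1-p)\Id$ is never symmetric (even when $f$ is), because $\Id$ is being mixed in on only one side. The paper therefore states and invokes a generalized CLT valid for asymmetric trade-off functions, which requires checking six functionals ($\kl,\tilde\kl,\kappa_2,\tilde\kappa_2,\kappa_3,\tilde\kappa_3$) rather than three. You cannot simply ``recall the precise hypotheses of the $f$-DP CLT''; you need this extension.

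\textbf{(2) Your picture of $f_p$ and the resulting scaling are incorrect.} You write that ``$f_p$ is $1-\alpha$ except on an event of probability $O(p)$'' and that $f_p'(x)$ looks like ``$p f'(x/p\wedge 1)$-ish, with a kink and a flat piece.'' This is wrong: $f_p$ is a \emph{pointwise} convex combination, so
\[
f_p'(x) = pf'(x) - (1-p), \qquad |f_p'(x)| = 1 + p\,g(x), \quad g(x):=|f'(x)|-1,
\]
for all $x\in[0,1]$---there is no flat piece and no rescaling of the argument. Consequently the privacy loss $\log|f_p'| = \log(1+pg)$ is uniformly $O(p)$, not ``zero except with probability $p$,'' and the second-moment functional satisfies
\[
\kappa_2(f_p) = \int_0^1 \log^2(1+pg)\,\diff x \;\sim\; p^2 \int_0^1 g^2\,\diff x \;=\; p^2\,\chi^2(f),
\]
not $p\cdot\chi^2(f)$. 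This is precisely why your bookkeeping breaks down: the correct accounting is $T\cdot\kappa_2(f_p)\to \nu^2\chi^2(f)$ because $Tp^2\to\nu^2$, and similarly $T\cdot(\kl+\tilde\kl)(f_p)\to \nu^2\chi^2(f)$ while $T\cdot\kappa_3(f_p)=O(Tp^3)\to 0$. The fourth-moment hypothesis~(3) is used to dominate the integrands when passing to the limit in the $\tilde\kappa_2$ and $\tilde\kappa_3$ terms (one needs $g^4$ integrable to control $(1+pg)\log^3(1+pg)$). With the right scaling the CLT gives $\mu = K/s = s = \nu\sqrt{\chi^2(f)}$ directly, with no ambiguity.

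Your computation of $\chi^2(G_{1/\sigma})=\e^{1/\sigma^2}-1$ at the end is fine.
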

\begin{lemma}\label{lem:chi_G}
We have \begin{align*}
	\chi^2(G_{1/\sigma}) = \e^{1/\sigma^2}-1.
\end{align*}
\end{lemma}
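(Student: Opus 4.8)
The plan is to compute $\chi^2(G_{1/\sigma})$ directly from its definition $\chi^2(f) = \int_0^1 \bigl(f'(x)\bigr)^2\diff x - 1$ via the natural change of variables $u = \Phi^{-1}(1-\alpha)$, after which everything collapses to a single Gaussian integral. Throughout write $\mu = 1/\sigma$ and let $\varphi = \Phi'$ be the standard normal density. First I would differentiate the closed form $G_\mu(\alpha) = \Phi\bigl(\Phi^{-1}(1-\alpha) - \mu\bigr)$; using $\tfrac{\diff}{\diff\alpha}\Phi^{-1}(1-\alpha) = -1/\varphi\bigl(\Phi^{-1}(1-\alpha)\bigr)$ this gives
\[
G_\mu'(\alpha) \;=\; -\,\frac{\varphi\bigl(\Phi^{-1}(1-\alpha)-\mu\bigr)}{\varphi\bigl(\Phi^{-1}(1-\alpha)\bigr)}.
\]
(Note this is negative, as it must be for a trade-off function.)

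Next I would substitute $u = \Phi^{-1}(1-\alpha)$, so that $\alpha = 1 - \Phi(u)$, $\diff\alpha = -\varphi(u)\,\diff u$, and the range $\alpha \in [0,1]$ corresponds to $u$ running from $+\infty$ to $-\infty$. This turns $\int_0^1 \bigl(G_\mu'(\alpha)\bigr)^2\diff\alpha$ into $\int_{-\infty}^{\infty} \varphi(u-\mu)^2 / \varphi(u)\,\diff u$. Completing the square in the exponent shows $\varphi(u-\mu)^2/\varphi(u) = \tfrac{1}{\sqrt{2\pi}}\exp\bigl(-\tfrac12(u-2\mu)^2 + \mu^2\bigr)$, i.e.\ $e^{\mu^2}$ times the $\N(2\mu, 1)$ density, so the integral equals $e^{\mu^2}$. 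Hence $\chi^2(G_\mu) = e^{\mu^2} - 1$, and setting $\mu = 1/\sigma$ yields $\chi^2(G_{1/\sigma}) = \e^{1/\sigma^2} - 1$.

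An equivalent route, which also explains the notation, is to note that for $f = T(P,Q)$ one has $\chi^2(f) = \chi^2(Q\,\|\,P) := \int (\diff Q/\diff P)^2\,\diff P - 1$; since $G_\mu = T\bigl(\N(0,1), \N(\mu,1)\bigr)$, the lemma then reduces to the textbook identity $\chi^2\bigl(\N(\mu,1)\,\|\,\N(0,1)\bigr) = e^{\mu^2}-1$, established by the very same Gaussian integral. Either way there is no genuine obstacle here; the only points requiring care are the orientation and sign bookkeeping in the change of variables, and recognizing the completed-square integrand as a normal density so that the Gaussian integral evaluates cleanly to $e^{\mu^2}$.
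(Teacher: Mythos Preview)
Your proof is correct. Your primary route---direct differentiation of the closed form $G_\mu(\alpha)=\Phi(\Phi^{-1}(1-\alpha)-\mu)$, the substitution $u=\Phi^{-1}(1-\alpha)$, and completion of the square---is a clean self-contained computation and is \emph{not} what the paper does. The paper instead takes your ``equivalent route'' as its main argument: it first proves a general lemma that for $f=T(P,Q)$ the functional $\chi^2(f)$ coincides with the Neyman $\chi^2$-divergence $\chi^2(P\|Q)=\E_P[(\diff Q/\diff P-1)^2]$, appealing to a result from \cite{dong2019gaussian} that expresses arbitrary $F$-divergences in terms of the trade-off function, and only then evaluates $\chi^2\big(\N(1/\sigma,1)\,\|\,\N(0,1)\big)=\e^{1/\sigma^2}-1$. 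Your direct calculation buys independence from that external proposition and is arguably more transparent for this single case; the paper's route buys conceptual clarity about why the functional deserves the name $\chi^2$ and would generalize immediately to other trade-off functions. (A minor notational point: what you call $\chi^2(Q\|P)$ is what the paper calls $\chi^2(P\|Q)$, since it uses the Neyman rather than Pearson convention; the underlying integral is identical.)
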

In order to prove \Cref{thm:chi_square}, we need an even more general CLT.
The first privacy CLT was introduced in \cite{dong2019gaussian}. However, that version is valid only when each component trade-off function is symmetric, which is not true for $pG_{1/\sigma}+(1-p)\Id$. In order to state the general CLT that applies to asymmetric trade-off functions, we need to introduce the following functionals:
\begin{align*}
	\kl(f) &:= -\int_0^1\log |f'(x)|\diff x\\
	\tilde{\kl}(f) &:= \int_0^1|f'(x)|\log |f'(x)|\diff x\\
	\kappa_2(f)&:=\int_0^1\log^2 |f'(x)| \diff x\\
	\tilde{\kappa}_2(f)&:=\int_0^1|f'(x)|\log^2 |f'(x)| \diff x\\
	\kappa_3(f)&:=\int_0^1\big|\log |f'(x)|\big|^3\diff x\\
	\tilde{\kappa}_3(f)&:=\int_0^1|f'(x)|\cdot\big|\log |f'(x)|\big|^3\diff x.
\end{align*}

\begin{theorem}\label{thm:CLT}
	Let $\{f_{ni}: 1\leqslant i \leqslant n\}_{n=1}^{\infty}$ be a triangular array of (possibly asymmetric) trade-off functions and assume the following limits for some constants $K \ge 0$ and $s > 0$ as $n \to \infty$:
	\begin{enumerate}
		\item[\textup{1.}] $\sum_{i=1}^n \kl(f_{ni})+\tilde{\kl}(f_{ni})\to K;$
		\item[\textup{2.}] $\max_{1\leqslant i\leqslant n} \kl(f_{ni}) \to 0, \quad \max_{1\leqslant i\leqslant n} \tilde{\kl}(f_{ni}) \to 0;$
		\item[\textup{3.}] $\sum_{i=1}^n \kappa_2(f_{ni})\to s^2, \quad\sum_{i=1}^n \tilde{\kappa}_2(f_{ni})\to s^2;$
		\item[\textup{4.}] $\sum_{i=1}^n \kappa_3(f_{ni})\to 0,\quad \sum_{i=1}^n \tilde{\kappa}_3(f_{ni})\to 0$.
	\end{enumerate}
Then, we have
	$$\lim_{n\to \infty} f_{n1}\otimes f_{n2} \otimes \cdots \otimes f_{nn} (\alpha) = G_{K/s}(\alpha)$$
uniformly for all $\alpha \in [0,1]$.
\end{theorem}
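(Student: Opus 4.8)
The plan is to reduce the statement to a classical triangular-array central limit theorem for sums of independent random variables, read through the hypothesis-testing representation of trade-off functions. First I would realize each $f_{ni}$ as $f_{ni} = T(P_{ni}, Q_{ni})$ for some pair of distributions on a common sample space (every trade-off function arises this way, as established in \cite{dong2019gaussian}), and recall that composition is products of distributions, so $f_{n1}\otimes\cdots\otimes f_{nn} = T\big(\prod_i P_{ni}, \prod_i Q_{ni}\big)$. Then I would introduce the per-coordinate privacy loss random variables $L_{ni} = \log\frac{\mathrm{d}Q_{ni}}{\mathrm{d}P_{ni}}$ and use the change-of-variables identity $\int_0^1 g(|f_{ni}'(x)|)\,\mathrm{d}x = \E_{P_{ni}}\big[g(\mathrm{e}^{L_{ni}})\big]$ (together with its $|f_{ni}'|$-weighted version, which produces $\E_{Q_{ni}}$) to translate the six functionals into moments of $L_{ni}$: $\kl(f_{ni}) = -\E_{P_{ni}}[L_{ni}]$, $\tilde{\kl}(f_{ni}) = \E_{Q_{ni}}[L_{ni}]$, $\kappa_2(f_{ni}) = \E_{P_{ni}}[L_{ni}^2]$, $\tilde{\kappa}_2(f_{ni}) = \E_{Q_{ni}}[L_{ni}^2]$, and $\kappa_3, \tilde{\kappa}_3$ the corresponding absolute third moments under $P_{ni}$ and $Q_{ni}$ respectively. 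Note that conditions~1--2 already force $L_{ni}$ to be finite a.s.\ with the integrability needed below.

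Next I would assemble the CLT inputs. Set $S_n := \sum_{i=1}^n L_{ni}$, which is exactly the log-likelihood ratio of $\prod_i Q_{ni}$ against $\prod_i P_{ni}$. Under $\prod_i P_{ni}$ the summands are independent, $\E[S_n] = -\sum_i \kl(f_{ni})$, and since $\sum_i \kl(f_{ni})^2 \le \max_i\kl(f_{ni})\cdot\sum_i\kl(f_{ni}) \to 0$ by conditions~1--2, the variance is $\sum_i\big(\kappa_2(f_{ni}) - \kl(f_{ni})^2\big) \to s^2$ by condition~3; the Lyapunov ratio is bounded by $8\sum_i\E_{P_{ni}}|L_{ni}|^3 = 8\sum_i\kappa_3(f_{ni}) \to 0$ (condition~4), so the Lindeberg--Feller theorem plus Slutsky gives $S_n - \E_{P_{ni}}[S_n] \Rightarrow \N(0, s^2)$. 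The mirror-image computation under $\prod_i Q_{ni}$, using $\tilde{\kl}, \tilde{\kappa}_2, \tilde{\kappa}_3$, gives $S_n - \E_{Q_{ni}}[S_n] \Rightarrow \N(0, s^2)$, while $\E_{Q_{ni}}[S_n] - \E_{P_{ni}}[S_n] = \sum_i\big(\kl(f_{ni}) + \tilde{\kl}(f_{ni})\big) \to K$ by condition~1. Because trade-off functions are invariant under shifting both distributions by a common constant, it does not matter that $\E_{P_{ni}}[S_n]$ need not converge on its own: recentering everything by $\E_{P_{ni}}[S_n]$ shows that the law of $S_n$ under $\prod_i P_{ni}$ converges weakly to $\N(0,s^2)$ and under $\prod_i Q_{ni}$ converges weakly to $\N(K,s^2)$.

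Finally I would pass from these weak limits to uniform convergence of the trade-off functions. By the Neyman--Pearson lemma the optimal level-$\alpha$ test of $\prod_i P_{ni}$ against $\prod_i Q_{ni}$ thresholds $S_n$ (with randomization on atoms), so $f_{n1}\otimes\cdots\otimes f_{nn}$ is a functional of the \emph{pair} of one-dimensional laws of $S_n$ under the two product measures alone; these two laws are coupled by the density relation $\mathrm{d}\mathcal{L}_{Q}(\ell) = \mathrm{e}^{\ell}\,\mathrm{d}\mathcal{L}_{P}(\ell)$, which passes to the limit (there it becomes the relation between $\N(0,s^2)$ and $\N(K,s^2)$). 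Invoking the continuity lemma for trade-off functions from \cite{dong2019gaussian} and noting that the limiting ROC curve of $\N(0,s^2)$ versus $\N(K,s^2)$ is already convex, so no randomization gap survives in the limit, I obtain $f_{n1}\otimes\cdots\otimes f_{nn} \to T\big(\N(0,s^2),\N(K,s^2)\big) = T\big(\N(0,1),\N(K/s,1)\big) = G_{K/s}$ pointwise on $[0,1]$; uniformity then follows automatically since each $f_{n1}\otimes\cdots\otimes f_{nn}$ is convex and non-increasing and the limit $G_{K/s}$ is continuous (a Pólya-type upgrade). I expect the main obstacle to be precisely this last step: marginal weak convergence of the log-likelihood ratio under both hypotheses is \emph{not} in general enough to force convergence of $T(\cdot,\cdot)$ (it fails for arbitrary mutually singular sequences), so the argument must genuinely exploit that $S_n$ is literally $\log(\mathrm{d}\!\prod Q_{ni}/\mathrm{d}\!\prod P_{ni})$, which is what makes the Neyman--Pearson curve a function of the two marginals only; a secondary technical point is verifying the change-of-variables identity and the endpoint integrability it presupposes when $f_{ni}'$ degenerates at $0$ or $1$.
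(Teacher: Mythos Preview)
Your proposal is correct and is exactly the route the paper points to: the paper does not actually write out a proof here but states that it ``exactly mimics that of Theorem 3.5 in \cite{dong2019gaussian},'' and what you have sketched---realize each $f_{ni}=T(P_{ni},Q_{ni})$, read the six functionals as the first three (absolute) moments of the log-likelihood ratio under $P_{ni}$ and under $Q_{ni}$, run a Lyapunov/Lindeberg CLT for $S_n=\sum_i L_{ni}$ under both product measures, and pass to trade-off functions via Neyman--Pearson sufficiency plus the continuity/Le~Cam machinery of \cite{dong2019gaussian}---is precisely that argument. Your identification of the delicate step (marginal weak convergence of $S_n$ under both hypotheses does not \emph{generically} force convergence of $T$, so one must use that $S_n$ is literally the log-likelihood ratio, giving contiguity) is also the crux of the original proof.
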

Proof of this theorem exactly mimics that of Theorem 3.5 in \cite{dong2019gaussian}, which we omit here for its length and tediousness.

Next, we apply the asymmetric CLT to $\big(pf+(1-p)\Id\big)^{\otimes T}$ and prove \Cref{thm:chi_square}. We start by collecting the necessary expressions into the following lemma. All of them are straightforward.
\begin{lemma} \label{lem:g}
Let $g(x) = -f'(x)-1 = |f'(x)|-1$. Then
	\begin{align*}
		{\kl}(f_p) &= - \int_0^1\log(1+pg(x))\diff x\\
		\tilde{\kl}(f_p) &=\int_0^1(1+pg(x))\log(1+pg(x))\diff x\\
		\kappa_2(f_p)&= \int_0^{1}\big[\log \big(1+pg(x)\big)\big]^2\diff x \\
		\tilde{\kappa}_2(f_p)&= \int_0^{1}\big(1+pg(x)\big)\big[\log \big(1+pg(x)\big)\big]^2\diff x \\
		{\kappa}_3(f_p)&=\int_0^{1}\big[\log \big(1+pg(x)\big)\big]^3\diff x\\
		\tilde{\kappa}_3(f_p)&=\int_0^{1}\big(1+pg(x)\big)\big[\log \big(1+pg(x)\big)\big]^3\diff x.
	\end{align*}
\end{lemma}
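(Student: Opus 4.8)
The plan is to derive all six identities from a single elementary computation of $f_p'$, so that essentially no genuine analysis is involved. First I would differentiate $f_p = pf + (1-p)\Id$. Using $\Id(x) = 1-x$ and linearity, $f_p'(x) = p f'(x) - (1-p)$ at every point where $f$ is differentiable, which is almost every $x\in[0,1]$ because a trade-off function is convex (hence differentiable a.e.). Since trade-off functions are non-increasing we have $f'(x)\le 0$, and therefore $f_p'(x) = pf'(x) - (1-p) \le 0$ as well, so $|f_p'(x)| = -f_p'(x) = p\,|f'(x)| + (1-p)$. Substituting the defining relation $|f'(x)| = 1 + g(x)$ then gives the single identity that drives everything:
\[
|f_p'(x)| = p\big(1 + g(x)\big) + (1-p) = 1 + p\,g(x),
\]
valid for almost every $x\in[0,1]$.

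Next I would observe that each of the six functionals is, by its definition, the integral over $[0,1]$ of a fixed elementary function of $|f_p'(x)|$ (either a power of $\log|f_p'|$, or such a power weighted by $|f_p'|$ in the tilde versions). Hence each stated formula follows by plugging $|f_p'(x)| = 1 + pg(x)$ into the corresponding definition and reading off the result; for example $\kl(f_p) = -\int_0^1 \log|f_p'|\,\diff x = -\int_0^1 \log(1+pg)\,\diff x$, and $\tilde{\kl}(f_p) = \int_0^1 |f_p'|\log|f_p'|\,\diff x = \int_0^1 (1+pg)\log(1+pg)\,\diff x$. The second- and third-order expressions are obtained identically, replacing $\log|f_p'|$ by its square or cube and, for the tilde quantities, multiplying by the weight $1 + pg$. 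I would write these six one-line substitutions consecutively, since no further manipulation is needed.

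There is no real obstacle here, only two bookkeeping points I would flag. The first is well-definedness of the logarithms: since $1 + pg(x) = (1-p) + p\,|f'(x)| \ge 1-p$, the quantity $|f_p'(x)|$ is bounded away from $0$ whenever $p<1$, so $\log(1+pg)$ is finite a.e.\ and every integrand is meaningful. The second concerns the cubic terms: the definitions of $\kappa_3$ and $\tilde{\kappa}_3$ carry an absolute value, $\big|\log|f_p'|\big|^3$, whereas $g$ changes sign on $[0,1]$ (for instance $|G_\mu'|$ crosses $1$), so strictly the two cubic identities should read $\int_0^1 \big|\log(1+pg)\big|^3\,\diff x$ and $\int_0^1 (1+pg)\big|\log(1+pg)\big|^3\,\diff x$; I would state them with the absolute value restored. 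This refinement does not affect the use of the lemma in \Cref{thm:chi_square}, where only the fact that the third-order terms vanish is needed.
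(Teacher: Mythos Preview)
Your proposal is correct and is exactly the computation the paper has in mind; the paper itself gives no proof beyond the remark ``All of them are straightforward,'' and your single identity $|f_p'(x)| = 1 + p\,g(x)$ obtained from $f_p' = pf' - (1-p)$ is precisely the intended one-line reduction. Your side observation about the missing absolute value in the $\kappa_3$ and $\tilde\kappa_3$ formulas is also accurate (the lemma as stated drops it), and as you note this is harmless for the subsequent use in \Cref{thm:chi_square}.
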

\begin{proof}[Proof of \Cref{thm:chi_square}]
	It suffices to compute the limits in the asymmetric Central Limit \Cref{thm:CLT}, namely
	$$T\cdot \big(\kl(f_p)+\tilde{\kl}(f_p)\big), \,\,T\cdot \kappa_2(f_p),\,\, T\cdot \tilde{\kappa}_2(f_p),\,\, T\cdot {\kappa}_3(f_p)\,\text{ and }\, T\cdot \tilde{\kappa}_3(f_p).$$
	Since $T\sim p^{-2}$, we can consider $p^{-2}\big(\kl(f_p)+\tilde{\kl}(f_p)\big)$ and so on.

	As in \Cref{lem:g}, let $g(x) = -f'(x)-1 = |f'(x)|-1$. The assumption expressed in terms of $g$ is simply
	$$\int_0^1 g(x)^4\diff x<+\infty.$$
	In particular, it implies $|g(x)|^k$ is integrable in $[0,1]$ for $k=2,3,4$. In addition, by \Cref{lem:g}, 
	$$\chi^2(f) = \int_0^{1}\big(f'(x)\big)^2\diff x-1= \int_0^{1}\big(f'(x)+1\big)^2\diff x= \int_0^{1}g(x)^2\diff x.$$

	For the functional $\kl$, by \Cref{lem:g},
	\begin{align}
		\lim_{p\to0^+}\frac{1}{p^2}\,\big(\kl(f_p)+\tilde{\kl}(f_p)\big)
		&= \lim_{p\to0^+}\int_0^{1}g(x)\cdot \frac{1}{p}\log \big(1+pg(x)\big)\diff x\label{eq:sgd1}\\
		&= \int_0^{1}g(x)\cdot \lim_{p\to0^+}\frac{1}{p}\log \big(1+pg(x)\big)\diff x\nonumber\\
		&=\int_0^{1}g(x)^2\diff x=\chi^2(f)\nonumber
	\end{align}
	Changing the order of the limit and the integral in \eqref{eq:sgd1} is approved by the \DCT. To see this, notice that $\log(1+x)\leqslant x$.
	The integrand in \eqref{eq:sgd1} satisfies
	\begin{align*}
		0\leqslant g(x)\cdot \frac{1}{p}\log \big(1+pg(x)\big)\leqslant g(x)^2.
	\end{align*}
	We already argued that $g(x)^2$ is integrable, so it works as a dominating function and the limit is justified. When $p\sqrt{T}\to \nu$, we have
	\[T\cdot \kl(f_p)\to \nu^2\cdot\chi^2(f).\]
	So the constant $K$ in \Cref{thm:CLT} is $\nu^2\cdot\chi^2(f)$.

	For the functional $\kappa_2$ we have
	\begin{align*}
		\frac{1}{p^2}\kappa_2(f_p) &= \int_0^{1}\Big[\frac{1}{p}\log \big(1+pg(x)\big)\Big]^2\diff x.
	\end{align*}
	By a similar dominating function argument,
	\begin{align*}
		\lim_{p\to0^+}\frac{1}{p^2}\,\kappa_2(f_p) = \lim_{p\to0^+}\frac{1}{p^2}\,\tilde{\kappa}_2(f_p) = \int_0^{1}g(x)^2\diff x=\chi^2(f).
	\end{align*}
	Adding in the limit $p\sqrt{T}\to \nu$, we know $s^2$ in \Cref{thm:CLT} is $\nu^2\cdot\chi^2(f)$.

	The same argument involving $\big|g(x)\big|^3$ and $g(x)^4$ applies to the functional $\kappa_3$ and $\tilde{\kappa}_3$ respectively and yields
	$$\lim_{p\to0^+}\frac{1}{p^3}\,\kappa_3(f_p) =\lim_{p\to0^+}\frac{1}{p^3}\,\tilde{\kappa}_3(f_p) = \int_0^{1}g(x)^3\diff x.$$
	Note the different power in $p$ in the denominator. It means $\kappa_3(f_p) = o(p^2)$ and hence $T\cdot \kappa_3(f_p)\to 0$ when $p\sqrt{T}\to \nu$.

	Hence all the limits in \Cref{thm:CLT} check and we have a $G_\mu$ limit where
	\[\mu = K/s = s = \sqrt{\nu^2\cdot\chi^2(f)} = \nu\cdot\sqrt{\chi^2(f)}.\]
	This completes the proof.
\end{proof}
We finish the section by proving the formula in \Cref{lem:chi_G}.
\begin{proof}[Proof of \Cref{lem:chi_G}]
The best calculation is done via better understanding. We point out that the functional $\chi^2$ is doing nothing more than computing the famous $\chi^2$-divergence.
Recall that Neyman $\chi^2$-divergence (reverse Pearson) of $P,Q$ is defined as
\[\chi^2(P\|Q):= \E_P\big[(\tfrac{\diff Q}{\diff P}-1)^2\big]\]
\begin{lemma} \label{lem:}
	If $f=T(P,Q)$ and $f(0)=1$, $f(x)>0$, for all $x<1$, then $\chi^2(f) = \chi^2(P\|Q)$.
\end{lemma}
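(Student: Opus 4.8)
The plan is to exhibit $\chi^2(f)$ and $\chi^2(P\|Q)$ as the \emph{same} integral written in two coordinate systems, the change of variables being the classical fact that $-f'$ is the likelihood ratio of $P$ against $Q$ traced along the Neyman--Pearson boundary. First I would record that the two hypotheses force $P$ and $Q$ to be mutually absolutely continuous: if $Q$ had a component singular with respect to $P$, the indicator of that component would be a test of level $0$ with type II error strictly below $1$, contradicting $f(0)=1$; symmetrically, a component of $P$ singular with respect to $Q$ would yield a test attaining type II error $0$ at some level strictly below $1$, contradicting $f(x)>0$ for all $x<1$. Hence $L:=\diff Q/\diff P$ is well defined, finite, and positive almost surely, and $\E_P[L]=\int\diff Q=1$.

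The key ingredient is the distributional identity: if $U$ is uniform on $[0,1]$, then $-f'(U)$ has the same law as $L$ computed under $P$. This is a direct consequence of the Neyman--Pearson lemma (and is implicit in the treatment of trade-off functions in \cite{dong2019gaussian}); I would derive it by considering the optimal level-$\alpha$ test, which rejects $H_0:P$ precisely on $\{L>t\}$, randomizing on $\{L=t\}$. Its type I error is $\alpha(t)=P(L>t)$, and since $Q(L\le t)=\E_P[L\,\mathbf 1_{\{L\le t\}}]$ its type II error equals $f(\alpha(t))=\E_P[L\,\mathbf 1_{\{L\le t\}}]$; differentiating each in $t$ gives $f'(\alpha(t))=-t$, so $\{u:-f'(u)\le s\}$ has Lebesgue measure $1-\alpha(s)=P(L\le s)$, which is the claim. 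The only genuine technical work is the bookkeeping around atoms of $L$ (where $f$ is merely piecewise affine and the optimal test must randomize) and the case where $L$ is unbounded (where $f'(0^+)=-\infty$); I expect this, rather than the idea, to be the main obstacle, but it is routine and already present in the analysis underlying \cite{dong2019gaussian}.

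Granting the identity, the conclusion is one line. Since $f$ is convex and continuous on $[0,1]$ (so $f'$ exists Lebesgue-a.e.), the fundamental theorem of calculus gives $\int_0^1\big(-f'(u)\big)\,\diff u=f(0)-f(1)=1$, matching $\E_P[L]=1$, while the identity gives $\int_0^1\big(f'(u)\big)^2\,\diff u=\E_P[L^2]$ (both sides allowed to be $+\infty$). Hence
\begin{align*}
\chi^2(P\|Q)=\E_P\big[(L-1)^2\big]=\E_P[L^2]-2\,\E_P[L]+1=\int_0^1\big(f'(u)\big)^2\,\diff u-1=\chi^2(f).
\end{align*}
I would close with the remark that nothing about the square was used: for any convex $\ell$ the same argument yields $\E_P[\ell(\diff Q/\diff P)]=\int_0^1\ell\big(-f'(u)\big)\,\diff u$, so that the present formula is simply the instance $\ell(t)=(t-1)^2$ of the general fact that every $f$-divergence is recoverable from the trade-off function.
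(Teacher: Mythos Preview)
Your argument is correct and rests on the same idea as the paper's: the derivative $-f'$ encodes the distribution of the likelihood ratio $\diff Q/\diff P$ under $P$, so any functional of $L$ computed under $P$ equals the corresponding integral of $-f'$ over $[0,1]$. The only difference is packaging: the paper quotes this as a black box (Proposition~B.4 of \cite{dong2019gaussian}, the general formula $D_F(P\|Q)=\int_0^1 F(|f'(x)|^{-1})\,|f'(x)|\,\diff x$) and specializes to $F(t)=\tfrac1t-1$, whereas you re-derive the distributional identity $-f'(U)\stackrel{d}{=}L$ from Neyman--Pearson and then apply it to $\ell(t)=(t-1)^2$. Your closing remark that every $F$-divergence is recovered this way is exactly the general proposition the paper invokes, so the two proofs are the same in substance; yours is simply more self-contained.
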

This lemma is a straightforward corollary of  Proposition B.4 in \cite{dong2019gaussian}, which gives expressions for all $F$-divergence\footnote{We use capital $F$ to avoid confusion with the notation of trade-off function.}. In particular, if $f=T(P,Q)$ and $f(0)=1$, $f(x)>0, \forall x<1$, then $F$-divergence of $P,Q$ can be computed from their trade-off function as follows:
\[D_F(P\|Q) = \int_0^1 F\big({\big|f'(x)\big|}^{-1}\big)\cdot \big|f'(x)\big| \diff x.\]
Neyman $\chi^2$-divergence corresponds to $F(t) = \frac{1}{t}-1$, so
\begin{align*}
	\chi^2(P\|Q) &= \int_0^1 \bigg(\frac{1}{\big|f'(x)\big|^{-1}}-1\bigg)\cdot \big|f'(x)\big| \diff x\\
	&=\int_0^1 \big(f'(x)\big)^2 \diff x- \int_0^1 \big|f'(x)\big| \diff x\\
	&=\int_0^1 \big(f'(x)\big)^2 \diff x- 1.
\end{align*}
With this formula, computing $\chi^2(G_{1/\sigma})$ is straightforward:
\[
	\chi^2(G_{1/\sigma}) = \chi^2\big(\mathcal{N}(\tfrac{1}{\sigma},1)\|\mathcal{N}(0,1)\big) = \e^{1/\sigma^2}-1.
\]
\end{proof}
\subsection{Proof of \Cref{thm:f_compare,thm:better}} 
\label{sub:proof_of_thm:better}

Recall that \Cref{thm:f_compare,thm:better} compare our \texttt{CLT} approach to moments accountant ($\MA$) from two different perspectives: $f$-DP perspective in \Cref{thm:f_compare} and $(\ep,\delta)$-DP perspective in \Cref{thm:better}. We first show that \Cref{thm:f_compare} can be derived from \Cref{thm:better}. Then we prove a refined version of \Cref{thm:better}.
To be more precise about the statement, let us first expand the notations used in the main text.

Let $\delta_\texttt{MA}(\ep;\sigma,p,T)$ be the $\delta$ value computed by moment accountant method (described in detail below) for \texttt{NoisySGD} algorithm with subsampling probability $p$, iteration $T$ and
noise scale $\sigma$. Similarly, $\delta_\texttt{CLT}(\ep;\sigma,\nu)$ denotes the $\delta$ value computed for the same algorithm using central limit theorem assuming $p\sqrt{T}\to \nu$. 

Let $f_T(\alpha)=\sup_{\epsilon \ge 0} f_{\epsilon, \delta_{\mathtt{MA}}(\epsilon)}(\alpha)$. It is supported by $f_{\epsilon_T, \delta_{\mathtt{MA}}(\epsilon_T)}$ at $\alpha$. \Cref{thm:better} says this supporting function is 
smaller than that of $G_{\mu_{\mathtt{CLT}}}$ at $\alpha$ by a strict gap. Taking the limit, $\limsup_{T \to \infty} f_T(\alpha)$ has at least that much gap from $G_{\mu_{\mathtt{CLT}}}(\alpha)$, which proves \Cref{thm:f_compare}.

\Cref{thm:better} is a straightforward corollary of the following proposition. Note that the inequality is reversed compared to the statement of \Cref{thm:better} so that the gap is positive, which also turns $\limsup$ into $\liminf$.

\begin{proposition} \label{thm:diff_exact}
\[
	\liminf_{T\to\infty} \,\,\delta_{\MA}(\ep;\sigma,\tfrac{\nu}{\sqrt{T}},T) - \delta_{\mathtt{CLT}}(\ep;\sigma,\nu) \geqslant \e^{\ep}\cdot\Phi(-\frac{\ep}{\mu}-\frac{\mu}{2})
\]
where $\mu=\nu\cdot \sqrt{\e^{1/\sigma^2}-1}$.
\end{proposition}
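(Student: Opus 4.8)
The plan is to reduce the statement to a purely asymptotic \emph{lower} bound on the $\delta$ reported by the moments accountant, and then to read that bound off the quadratic‑in‑$\lambda$ cumulant that the accountant tracks — a cumulant that, in the stated regime, matches the one of the pure Gaussian mechanism $G_\mu$. First I would rewrite the claim: since $\delta_{\mathtt{CLT}}(\ep;\sigma,\nu)$ does not depend on $T$ and, by \eqref{eq:dual} with $\mu=\nu\sqrt{\e^{1/\sigma^2}-1}$, equals $\Phi(-\tfrac{\ep}{\mu}+\tfrac{\mu}{2})-\e^{\ep}\Phi(-\tfrac{\ep}{\mu}-\tfrac{\mu}{2})$, the proposition is equivalent to $\liminf_{T\to\infty}\delta_{\MA}(\ep;\sigma,\tfrac{\nu}{\sqrt T},T)\geqslant\Phi(-\tfrac{\ep}{\mu}+\tfrac{\mu}{2})$. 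The moments accountant reports $\delta_{\MA}(\ep;\sigma,p,T)=\inf_{\lambda}\exp(T\alpha(\lambda)-\lambda\ep)$, where $\alpha(\lambda)$ is (an upper bound on) the per‑step privacy‑loss log–moment generating function of the subsampled Gaussian mechanism, maximized over neighbouring datasets and over the two testing directions. Since this maximum dominates the value attained on any one pair, and a suitable pair realizes the per‑step trade‑off $pG_{1/\sigma}+(1-p)\Id$ of \Cref{sec:omitt-deta-crefs} — so that the corresponding likelihood ratio is $1+p\ell(X)$ with $\ell(X)=\e^{X/\sigma-1/(2\sigma^2)}-1$, $X\sim\mathcal N(0,1)$ — we get
\[
\delta_{\MA}\big(\ep;\sigma,\tfrac{\nu}{\sqrt T},T\big)\;\geqslant\;\inf_{\lambda}\exp\!\big(\psi_T(\lambda)-\lambda\ep\big),\qquad
\psi_T(\lambda):=T\log\E_{X\sim\mathcal N(0,1)}\!\big[(1+\tfrac{\nu}{\sqrt T}\ell(X))^{\lambda+1}\big].
\]
Two facts about $\psi_T$ drive the rest: (i) $\psi_T$ is \emph{convex} in $\lambda$, being (up to the shift $\lambda\mapsto\lambda+1$) the cumulant generating function of $\log(1+\tfrac{\nu}{\sqrt T}\ell(X))$; and (ii) $\psi_T(\lambda)\to\psi_\infty(\lambda):=\tfrac{\lambda(\lambda+1)}{2}\mu^2$ pointwise. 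Fact (ii) comes from expanding the finite binomial $(1+p\ell)^{\lambda+1}$ and using $\E\ell=0$, $\E\ell^2=\e^{1/\sigma^2}-1=\chi^2(G_{1/\sigma})$ (\Cref{lem:chi_G}), finiteness of all moments $\E\ell^k$, and the scalings $Tp^2=\nu^2$, $Tp^k\to0$ for $k\geqslant3$ — the computation already carried out in the proof of \Cref{thm:chi_square}.

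Next I would use convexity to collapse the infimum over $\lambda$ to a minimum over finitely many values. Fix an integer $\Lambda>\ep/\mu^2$. For $\lambda\geqslant\Lambda$, convexity of $\psi_T$ gives the extended‑secant bound $\psi_T(\lambda)\geqslant\psi_T(\Lambda)+(\psi_T(\Lambda)-\psi_T(\Lambda-1))(\lambda-\Lambda)$, and by (ii) the chord slope $\psi_T(\Lambda)-\psi_T(\Lambda-1)\to\psi_\infty(\Lambda)-\psi_\infty(\Lambda-1)=\Lambda\mu^2>\ep$; hence for all large $T$ the map $\lambda\mapsto\psi_T(\lambda)-\lambda\ep$ is non‑decreasing on $[\Lambda,\infty)$, and its infimum over admissible $\lambda$ equals its minimum over $\{1,\dots,\Lambda\}$ (if the accountant optimizes over a continuum $\lambda\in[\lambda_0,\infty)$, replace this finite set by $[\lambda_0,\Lambda]$ and use that pointwise convergence of convex functions is uniform on compacts). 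On that bounded range (ii) is uniform, so
\[
\liminf_{T\to\infty}\,\inf_{\lambda}\big(\psi_T(\lambda)-\lambda\ep\big)\;=\;\min_{1\leqslant\lambda\leqslant\Lambda}\big(\psi_\infty(\lambda)-\lambda\ep\big)\;\geqslant\;\inf_{\lambda\geqslant0}\Big(\tfrac{\mu^2}{2}\lambda^2+\big(\tfrac{\mu^2}{2}-\ep\big)\lambda\Big).
\]
A one‑line minimization gives the last infimum as $0$ when $\ep\leqslant\mu^2/2$, and as $-\tfrac12(\tfrac{\ep}{\mu}-\tfrac{\mu}{2})^2$ (attained at $\lambda=\tfrac{\ep}{\mu^2}-\tfrac12$) when $\ep>\mu^2/2$. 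In the first case $\liminf_T\delta_{\MA}\geqslant e^0=1\geqslant\Phi(-\tfrac{\ep}{\mu}+\tfrac{\mu}{2})$; in the second, with $t=\tfrac{\ep}{\mu}-\tfrac{\mu}{2}>0$, the Chernoff bound $\Phi(-t)\leqslant\tfrac12 e^{-t^2/2}$ yields $\liminf_T\delta_{\MA}\geqslant e^{-t^2/2}\geqslant2\Phi(-t)>\Phi(-\tfrac{\ep}{\mu}+\tfrac{\mu}{2})$. Either way the reduced claim holds; subtracting $\delta_{\mathtt{CLT}}$ and recalling $\e^\ep\Phi(-\tfrac{\ep}{\mu}-\tfrac{\mu}{2})>0$ then gives the proposition, and also the strict inequality needed in \Cref{thm:better}.

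The step I expect to be the main obstacle is the interchange of $\liminf_T$ with $\inf_\lambda$: only $\liminf_T\inf_\lambda\leqslant\inf_\lambda\liminf_T$ is automatic, which is the wrong direction, so the argument genuinely relies on the uniform control from convexity — concretely, on the chord slope through $\Lambda-1,\Lambda$ converging to $\Lambda\mu^2$ and hence eventually exceeding $\ep$, which is what confines any near‑minimizing $\lambda$ to a bounded set uniformly in (large) $T$. A secondary point that must be checked is the first reduction: one has to confirm that the formal definition of $\delta_{\MA}$ recorded in the appendix really is of the tail‑bound form $\inf_\lambda\exp(T\alpha(\lambda)-\lambda\ep)$ with $\alpha$ no smaller than the exact one‑step log–MGF, so that $\delta_{\MA}\geqslant\inf_\lambda\exp(\psi_T(\lambda)-\lambda\ep)$ is legitimate. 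The remaining ingredients — the binomial expansion behind (ii) and the quadratic minimization — are routine.
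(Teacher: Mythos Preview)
Your route is correct but genuinely different from the paper's. The paper never computes the limit of $\psi_T$ and never interchanges $\liminf_T$ with $\inf_\lambda$. Instead, writing $\psi_T(\lambda)=\log\int_0^1|f_T'(x)|^{\lambda+1}\,dx$ with $f_T=(pG_{1/\sigma}+(1-p)\Id)^{\otimes T}$ (your $\psi_T$ is exactly this, by the R\'enyi/trade--off identity), the paper picks $x_T$ with $f_T'(x_T)=-\e^{\ep}$, so that $|f_T'|\geqslant\e^{\ep}$ on $[0,x_T]$, and bounds
\[
\e^{-\lambda\ep}\!\!\int_0^1|f_T'|^{\lambda+1}\;\geqslant\;\int_0^{x_T}|f_T'|\cdot\big(|f_T'|\e^{-\ep}\big)^{\lambda}\;\geqslant\;\int_0^{x_T}|f_T'|\;=\;1-f_T(x_T),
\]
a $\lambda$-free lower bound. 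Taking $T\to\infty$ then uses only $f_T\to G_\mu$ and $x_T\to x^*:=\Phi(-\tfrac{\ep}{\mu}-\tfrac{\mu}{2})$ (via convergence of the conjugates' gradients), yielding $\liminf_T\delta_{\MA}\geqslant 1-G_\mu(x^*)=\Phi(-t)$ directly. This sidesteps both the obstacle you flagged and the asymptotics of $\psi_T$ entirely.

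Your approach buys something too: once the interchange is done, you get the sharper asymptotic lower bound $\liminf_T\delta_{\MA}\geqslant \e^{-t^2/2}\geqslant 2\Phi(-t)$, and it makes explicit that what the moments accountant reports is, in this regime, precisely the Chernoff bound for the limiting Gaussian mechanism $G_\mu$. The one place to tighten is claim~(ii): the phrase ``finite binomial'' only covers integer $\lambda+1$, whereas $\delta_{\MA}$ infimizes over all $\lambda>0$, so you need pointwise convergence of $\psi_T$ at non-integers as well (convexity plus convergence at integers does \emph{not} force convergence in between). This is true --- a second-order Taylor expansion of $(1+y)^{\lambda+1}$ plus a tail estimate on the event $\{p\ell>\tfrac12\}$ (where $\ell$ is log-normal, so the contribution is $\exp(-c\log^2(1/p))=o(p^2)$) does the job --- but it is not the computation in the proof of \Cref{thm:chi_square}, and deserves one explicit line.
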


Let us first describe how the two methods compute $\delta$ from $\ep$.
\begin{align*}
	\delta_{\nMA}(\ep;\sigma,p,T) &:= \inf_{\lambda\in \textup{orders}} \exp\left( T\cdot\alpha_{\textup{GM}}(\lambda;\sigma,p) - \lambda \epsilon \right)\\
	\delta_{\MA}(\ep;\sigma,p,T) &:= \inf_{\lambda>0} \exp\left( T\cdot\alpha_{\textup{GM}}(\lambda;\sigma,p) - \lambda \epsilon \right)
\end{align*}
where $\alpha_{\textup{GM}}(\lambda;\sigma,p)$ is a scaled version of the R\'enyi divergence of Gaussian mixtures. More specifically, let $P = \mathcal{N}(0,1)$ and $Q=\mathcal{N}(\frac{1}{\sigma},1)$. We further denote the Gaussian mixture $p Q + (1-p) P$ by $\GM_{p,\sigma}$. The 
\begin{align*}
	\alpha_{\textup{GM}}(\lambda;\sigma,p)
	=\max\Big\{\lambda D_{\lambda+1}(\GM_{p,\sigma}\|P), \lambda D_{\lambda+1}(P\|\GM_{p,\sigma})\Big\}.
\end{align*}
In \cite{deep}, it has been shown that \Cref{alg:dpsgd1} (hence also the Adam variant, \Cref{alg:dpsgd2}) with subsampling probability $p$, iteration $T$ and
noise scale $\sigma$ is $(\ep,\delta)$-DP for each $\ep\geqslant0$ if $\delta=\delta_{\MA}(\ep;\sigma,p,T)$. To evaluate the infimum, the domain is discretized\footnote{Code in tensorflow/privacy discretizes at $[1.25, 1.5, 1.75, 2., 2.25, 2.5, 3., 3.5, 4., 4.5, 5, 6, 7, \ldots, 63, 64, 128, 256, 512]$.}. This results in the numerical moment accountant method that is actually implemented. Since $\delta_{\nMA}(\ep;\sigma,p,T)\geqslant\delta_{\MA}(\ep;\sigma,p,T)$, \Cref{alg:dpsgd1} is also $(\ep,\delta)$-DP with $\delta=\delta_{\nMA}(\ep;\sigma,p,T)$.

On the other hand, $\delta_{\texttt{CLT}}(\ep;\sigma,\nu)$ is obtained by first observing \Cref{alg:dpsgd1} is asymptotically $\mu_{\texttt{CLT}}$-GDP with $\mu_{\texttt{CLT}} = \nu\cdot \sqrt{\e^{1/\sigma^2}-1}$ and then convert GDP to $(\ep,\delta)$-DP via \Cref{eq:dual}, i.e. \Cref{alg:dpsgd1} asymptotically satisfies $(\ep,\delta)$-DP where

$$\delta = \delta_{\texttt{CLT}}(\ep;\sigma,\nu) =1 + G^*_{\mu_{\texttt{CLT}}}(-\e^{\epsilon}).
$$

We have just explained how $\MA$ and $\texttt{CLT}$ works. Next we prove \Cref{thm:diff_exact}
\begin{proof}[Proof of \Cref{thm:diff_exact}]

Let $f_{T}=\big(pG_\mu+(1-p)\Id\big)^{\otimes T}$.
We need a lemma (whose proof is provided later) that relates the R\'enyi divergence to the trade-off function $f_{T}$.
\begin{lemma} \label{lem:renyi}
\begin{align*}
	T\cdot\alpha_{\textup{GM}}(\lambda;\sigma,p)&\geqslant\log \int_0^1 |f_T'(x)|^{\lambda + 1} \diff x.
\end{align*}
\end{lemma}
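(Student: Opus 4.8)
The plan is to realise the trade-off function $f_T$ explicitly by a pair of product distributions and then recognise $\int_0^1|f_T'(x)|^{\lambda+1}\diff x$ as the exponential of a multiple of a R\'enyi divergence; the inequality then drops out of the additivity of R\'enyi divergence over products together with the $\max$ in the definition of $\alpha_{\textup{GM}}$.

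First I would pin down a concrete realisation of the single-step trade-off function $f_p=pG_{1/\sigma}+(1-p)\Id$ (recall $f_T=f_p^{\otimes T}$). With $P=\mathcal N(0,1)$ and $\GM_{p,\sigma}=p\,\mathcal N(1/\sigma,1)+(1-p)P$ as in the statement, the likelihood ratio $\diff\GM_{p,\sigma}/\diff P$ is strictly increasing, so the Neyman--Pearson test for $P$ against $\GM_{p,\sigma}$ at any level is a one-sided threshold test; computing its type~II error (the equality case of \Cref{lem:mixture} for a one-point index set and a Gaussian pair) gives $T(P,\GM_{p,\sigma})=pG_{1/\sigma}+(1-p)\Id=f_p$. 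By the definition of $\otimes$ this yields $f_T=T\big(P^{\otimes T},\GM_{p,\sigma}^{\otimes T}\big)$, and since $P$ and $\GM_{p,\sigma}$ are mutually absolutely continuous we have $f_T(0)=1$ and $f_T(x)>0$ for $x<1$.

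Next I would invoke the change-of-variables formula relating a trade-off function to the likelihood ratio of its underlying distributions (Proposition~B.4 of \cite{dong2019gaussian}, the same tool used in the proof of \Cref{lem:chi_G}): for any $f=T(A,B)$ with $f(0)=1$ and $f>0$ on $[0,1)$,
\[
\int_0^1|f'(x)|^{\lambda+1}\diff x=\E_A\!\Big[\big(\tfrac{\diff B}{\diff A}\big)^{\lambda+1}\Big]=\exp\!\big(\lambda\,D_{\lambda+1}(B\|A)\big).
\]
Specialising to $A=P^{\otimes T}$ and $B=\GM_{p,\sigma}^{\otimes T}$ and using that R\'enyi divergence is additive over product measures, $D_{\lambda+1}\big(\GM_{p,\sigma}^{\otimes T}\|P^{\otimes T}\big)=T\,D_{\lambda+1}\big(\GM_{p,\sigma}\|P\big)$, gives $\log\int_0^1|f_T'(x)|^{\lambda+1}\diff x=T\,\lambda\,D_{\lambda+1}\big(\GM_{p,\sigma}\|P\big)$. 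Finally, since by definition $\alpha_{\textup{GM}}(\lambda;\sigma,p)=\max\{\lambda D_{\lambda+1}(\GM_{p,\sigma}\|P),\,\lambda D_{\lambda+1}(P\|\GM_{p,\sigma})\}\ge \lambda D_{\lambda+1}(\GM_{p,\sigma}\|P)$, multiplying by $T$ and combining with the previous identity yields $T\cdot\alpha_{\textup{GM}}(\lambda;\sigma,p)\ge\log\int_0^1|f_T'(x)|^{\lambda+1}\diff x$, which is the claim.

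I do not expect a serious obstacle: the lemma is essentially a dictionary entry translating the $f$-DP bound $f_T$ into the R\'enyi/moments language of \cite{deep}. The one point that needs care is orientation --- one must track that in $f_T=T(P_T,Q_T)$ the \emph{null} is the $x_0$-free product $P^{\otimes T}$, so that the divergence produced by the change-of-variables step is $D_{\lambda+1}(\GM_{p,\sigma}^{\otimes T}\|P^{\otimes T})$ and not its reverse; it is precisely the $\max$ over both directions in the definition of $\alpha_{\textup{GM}}$ that makes the last step go through no matter which direction is larger. The threshold-test identity, the change-of-variables formula, and tensorisation of R\'enyi divergence are all routine.
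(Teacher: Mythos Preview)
Your proposal is correct and follows essentially the same route as the paper: realise $f_T=T\big(P^{\otimes T},\GM_{p,\sigma}^{\otimes T}\big)$, convert $\int_0^1|f_T'(x)|^{\lambda+1}\diff x$ into a R\'enyi divergence via the change-of-variables formula from Proposition~B.4 of \cite{dong2019gaussian}, tensorise, and then bound by $\alpha_{\textup{GM}}$ using the $\max$ in its definition. Your exponent $\lambda+1$ is the correct one matching the lemma statement; the paper's displayed identity in its own proof writes $\lambda$, which is a slip.
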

Let $x_T\in(0,1)$ be the point such that $f_T'(x_T) = - \e^\epsilon$ (or $x_T\in\partial f_T^*(- \e^\epsilon)$ if readers worry about differentiability). We have
\[
1 + f_T^*(-\e^{\epsilon}) = \sup_{0 \le x \le 1} \{1 - f_T(x) - \e^{\epsilon} x  \} = 1 - f_T(x_T) - \e^{\epsilon} x_T
\]
It is clear that $|f_T'(x)| \ge \e^\epsilon$ for $0 \le x \le x_T$.

On the other hand, using \Cref{lem:renyi}, we get
\begin{align*}
	\delta_\MA(\ep;\sigma,p,T) &= \inf_{\lambda>0} \exp\left( T\cdot\alpha_{\textup{GM}}(\lambda;\sigma,p) - \lambda \epsilon \right)\\
	& \geqslant \inf_{\lambda>0} \e^{-\lambda \epsilon}\int_0^1 |f_T'(x)|^{\lambda + 1}  \diff x\\
	& >  \inf_{\lambda>0} \int_0^{x_T} |f_T'(x)|^{\lambda + 1} \e^{-\lambda \epsilon} \diff x\\
	& =  \inf_{\lambda>0} \int_0^{x_T} |f'_T(x)| \cdot |f'_T(x)|^{\lambda} \e^{-\lambda \epsilon} \diff x\\
	& \ge  \inf_{\lambda>0} \int_0^{x_T} |f'_T(x)| \cdot (\e^{\epsilon})^{\lambda} \e^{-\lambda \epsilon} \diff x\\
	&= f_T(0) - f_T(x_T)\\
	&= 1 - f_T(x_T)\\
	&= \big(1 - f_T(x_T) - \e^{\epsilon} x_T\big) + \e^{\epsilon} x_T\\
	&= 1 + f_T^*(-\e^{\epsilon})+\e^{\epsilon} x_T.
\end{align*}
In summary, we have
\begin{align}\label{lazyname}
	\delta_\MA(\ep;\sigma,p,T) > 1 + f_T^*(-\e^{\epsilon})+\e^{\epsilon} x_T.
\end{align}
Setting $p=\frac{\nu}{\sqrt{T}}$, we would like to take limit on both sides of \eqref{lazyname}. First notice that $f_T$ converge pointwise to $G_{\mu_{\texttt{CLT}}}$, which we have already proven in \Cref{sub:unjustified_claim_in_sec:noisy-sgd}. The limit of $x_T$ is taken care of in the following lemma:
\begin{lemma} \label{lem:xT}
	\[
 \lim_{T\to\infty} x_T = x^* := \Phi(-\frac{\ep}{\mu_{\texttt{\textup{CLT}}}}-\frac{\mu_{\texttt{\textup{CLT}}}}{2}).
	\]
\end{lemma}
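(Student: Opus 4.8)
The plan is to realize both $x_T$ and $x^*$ as maximizers of concave functions on $[0,1]$ and then run a standard $\arg\max$-convergence argument. Write $\mu=\mu_{\texttt{CLT}}=\nu\sqrt{\e^{1/\sigma^2}-1}$ and set $h_T(x):=1-f_T(x)-\e^\epsilon x$ and $h(x):=1-G_\mu(x)-\e^\epsilon x$. The first step is to record the duality characterization of $x_T$: by definition $x_T\in\partial f_T^*(-\e^\epsilon)$, and since $f_T$ is convex and continuous (a trade-off function), extended by $+\infty$ off $[0,1]$, we have $f_T^{**}=f_T$, so $x_T\in\partial f_T^*(-\e^\epsilon)$ is equivalent to $x_T$ attaining $\sup_{x\in[0,1]}\{-\e^\epsilon x-f_T(x)\}$, i.e. $x_T\in\arg\max_{[0,1]}h_T$. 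This holds whether or not $f_T$ is strictly convex or differentiable, so potential non-uniqueness of $x_T$ causes no difficulty.

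Next I would pin down $x^*$. Differentiating $G_\mu(\alpha)=\Phi(\Phi^{-1}(1-\alpha)-\mu)$ gives $G_\mu'(\alpha)=-\exp\!\big(\mu\Phi^{-1}(1-\alpha)-\mu^2/2\big)$ and $G_\mu''(\alpha)=\mu\exp\!\big(\mu\Phi^{-1}(1-\alpha)-\mu^2/2\big)/\phi\big(\Phi^{-1}(1-\alpha)\big)>0$; hence $G_\mu$ is strictly convex, $h$ is strictly concave on $[0,1]$, and $h$ has a unique maximizer. Solving the first-order condition $G_\mu'(x^*)=-\e^\epsilon$ yields $\Phi^{-1}(1-x^*)=\epsilon/\mu+\mu/2$, that is $x^*=\Phi(-\epsilon/\mu-\mu/2)\in(0,1/2)$, which is exactly the claimed limit and is an interior point, so it is genuinely the global maximizer of $h$.

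Finally I would transfer convergence from the functions to their maximizers. \Cref{thm:chi_square}, as applied in \Cref{sub:unjustified_claim_in_sec:noisy-sgd}, gives $f_T\to G_\mu$ \emph{uniformly} on $[0,1]$, hence $h_T\to h$ uniformly. Then the classical $\arg\max$ argument applies: given any subsequence of $(x_T)$, compactness of $[0,1]$ extracts a further subsequence $x_{T_k}\to\bar x$; from $h_{T_k}(x_{T_k})\ge h_{T_k}(x)$ for all $x\in[0,1]$, letting $k\to\infty$ (using uniform convergence of $h_{T_k}$ together with continuity of $h$ on the left-hand side, and uniform convergence on the right-hand side) gives $h(\bar x)\ge h(x)$ for all $x$, so $\bar x$ maximizes $h$ and thus $\bar x=x^*$ by uniqueness. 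Since every subsequence of $(x_T)$ has a sub-subsequence converging to $x^*$, the whole sequence converges, i.e. $x_T\to x^*$.

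There is no deep obstacle here: the only non-trivial input is the uniform convergence $f_T\to G_\mu$, which is already in hand, and the remainder is careful bookkeeping — chiefly (i) the subdifferential characterization, which makes non-uniqueness of $x_T$ harmless, and (ii) verifying strict convexity of $G_\mu$ so that $x^*$ is unique, without which the $\arg\max$ argument would only identify subsequential limits. I would also remark in passing that, since each $f_T$ is convex, pointwise convergence alone would already force uniform convergence on compact subintervals, so the argument is robust to how one reads the convergence statement of \Cref{thm:chi_square}.
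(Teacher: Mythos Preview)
Your proof is correct. It differs from the paper's in its route to the conclusion, though both rest on the same input, the uniform convergence $f_T\to G_{\mu_{\mathtt{CLT}}}$ from \Cref{thm:chi_square}. The paper works on the dual side: it notes $x_T=\nabla f_T^*(-\e^\epsilon)$, argues that uniform convergence $f_T\to G_{\mu_{\mathtt{CLT}}}$ transfers to the conjugates $f_T^*\to G_{\mu_{\mathtt{CLT}}}^*$, and then invokes Rockafellar's theorem (Theorem~25.7 in \cite{rockafellar1970convex}) that pointwise convergence of convex functions forces convergence of their gradients, so $x_T\to\nabla G_{\mu_{\mathtt{CLT}}}^*(-\e^\epsilon)=x^*$. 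Your argument stays on the primal side: you identify $x_T$ as a maximizer of the concave function $h_T(x)=1-f_T(x)-\e^\epsilon x$, verify by direct computation that $G_{\mu_{\mathtt{CLT}}}$ is strictly convex so that $h$ has a \emph{unique} maximizer $x^*$, and then run a compactness/sub-subsequence $\arg\max$ argument. Your route is more self-contained --- it avoids citing the Rockafellar result and is robust to non-differentiability or non-uniqueness of $x_T$ --- while the paper's route is shorter once that convex-analysis fact is taken as given. Your explicit verification that $G_\mu''>0$ is a nice touch, since the paper's appeal to Rockafellar implicitly also needs $G_{\mu_{\mathtt{CLT}}}^*$ to be differentiable at $-\e^\epsilon$, which is equivalent to the strict convexity you checked.
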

Combining these results, we can take limits on both sides of \eqref{lazyname}:
\begin{align*}
	\liminf_{T\to\infty} \delta_\MA(\ep;\sigma,p,T) &\geqslant \lim_{T\to\infty} 1 + f_T^*(-\e^{\epsilon})+\e^{\epsilon} x_T\\
	&= 1+G_{\mu_{\texttt{CLT}}}^*(-\e^{\epsilon})+\e^{\epsilon} x^*\\
	&=\delta_{\texttt{CLT}}(\ep;\sigma,\nu)+\e^{\epsilon}\cdot \Phi(-\frac{\ep}{\mu_{\texttt{CLT}}}-\frac{\mu_{\texttt{CLT}}}{2}).
\end{align*}
This finishes the proof.
\end{proof}

\begin{proof}[Proof of \Cref{lem:renyi}]
The R\'enyi divergence can also be computed from the trade-off function, just like the $\chi^2$-divergence. In fact, under the same assumptions as in \Cref{lem:chi_G}, we have
\[
D_\alpha(Q\|P) = \frac{1}{\alpha-1}\log \int_0^1|f'(x)|^{\alpha-1}\diff x.
\]
Alternatively,
\begin{equation}\label{eq:renyi}
\lambda D_{\lambda+1}(Q\|P) =\log \int_0^1|f'(x)|^{\lambda}\diff x.
\end{equation}
This identity will be the bridge between $\alpha_{\textup{GM}}$ and $f_T$.

On one hand,
$\alpha_{\textup{GM}}(\lambda;\sigma,p)$ is the maximum of two R\'enyi divergences, so
\begin{align*}
	\alpha_{\textup{GM}}(\lambda;\sigma,p)
	\geqslant\lambda D_{\lambda+1}(p Q + (1-p) P\|P)
\end{align*}
Consequently,
\begin{align*}
	T\cdot\alpha_{\textup{GM}}(\lambda;\sigma,p)
	&\geqslant T\lambda D_{\lambda+1}(p Q + (1-p) P\|P)\\
	&=\lambda D_{\lambda+1}\big((p Q + (1-p) P)^T\|P^T).
\end{align*}
The last step is the tensorization identity of R\'enyi divergence.

On the other hand, notice that $pG_\mu+(1-p)\Id=T\big(P, \GM_{p,\sigma}\big)$ where we continue the use of notations $P = \mathcal{N}(0,1), Q=\mathcal{N}(\frac{1}{\sigma},1)$ and $\GM_{p,\sigma} = p Q + (1-p) P$. We have
\begin{align*}
	f_T
	=\big(pG_\mu+(1-p)\Id\big)^{\otimes T}
	=T\big(P, (pQ + (1-p)P)\big)^{\otimes T}
	=T\big(P^T, (pQ + (1-p)P)^T\big)
\end{align*}
Using \eqref{eq:renyi}, we have
\begin{align*}
	T\cdot\alpha_{\textup{GM}}(\lambda;\sigma,p)
	&\geqslant\lambda D_{\lambda+1}\big((p Q + (1-p) P)^T\|P^T)\\
	&=\log \int_0^1|f_T'(x)|^{\lambda}\diff x.
\end{align*}
\end{proof}

\begin{proof}[Proof of \Cref{lem:xT}]
	By definition, $f_T'(x_T) = -\e^\ep$. The convexity of $f_T$ implies $\nabla f_T^*(-\e^\ep) = x_T$.

	Since $f_T$ converges uniformly to $G_{\mu_{\texttt{\textup{CLT}}}}$ in $[0,1]$, we have uniform convergence $f_T^*\to G_{\mu_{\texttt{\textup{CLT}}}}^*$. By convexity of these functions, the convergence also implies the convergence of derivatives
	(See Theorem 25.7 of \cite{rockafellar1970convex}), 
	namely,
	\[\nabla f_T^*\to \nabla G_{\mu_{\texttt{\textup{CLT}}}}^*.\]
	Therefore,
	\begin{align*}
		x_T = \nabla f_T^*(-\e^\ep)\to \nabla G_{\mu_{\texttt{\textup{CLT}}}}^*(-\e^\ep).
	\end{align*}
	Let $x^*=\nabla G_{\mu_{\texttt{\textup{CLT}}}}^*(-\e^\ep)$ be the limit. Using the convexity again, we have
	\begin{align*}
		-\e^\ep= G_{\mu_{\texttt{\textup{CLT}}}}'(x^*).
	\end{align*}
	We can solve for $x^*$ using the expression of $G_\mu$ \eqref{eq:dual}. After some algebra, we have
	\[x^* = \Phi(-\frac{\ep}{\mu_{\texttt{\textup{CLT}}}}-\frac{\mu_{\texttt{\textup{CLT}}}}{2}).\]
	The proof is complete.
\end{proof}



\end{document}